%% file: iclr2027_conference.tex
\documentclass{article}

\usepackage{iclr2027_conference,times}

\iclrfinalcopy

\usepackage{microtype}
\usepackage{graphicx}
\usepackage{booktabs}
\usepackage{multirow}
\usepackage{longtable}
\usepackage{hyperref}
\usepackage{url}
\usepackage{amsmath}
\usepackage{amssymb}
\usepackage{mathtools}
\usepackage{amsthm}
\usepackage{aliascnt}
\usepackage[capitalize,noabbrev]{cleveref}
\usepackage{tikz}
\usetikzlibrary{positioning,arrows.meta,calc,fit}
\usepackage{colortbl}
\usepackage{algorithm}
\usepackage{algorithmic}

\usepackage{enumitem}
\usepackage[skip=3pt,font=small]{caption}
\usepackage{titlesec}
\titlespacing*{\section}{0pt}{6pt plus 2pt minus 2pt}{4pt plus 1pt minus 1pt}
\titlespacing*{\subsection}{0pt}{5pt plus 2pt minus 2pt}{3pt plus 1pt minus 1pt}
\setlist{topsep=2pt,itemsep=1pt,parsep=0pt}

\definecolor{tgwin}{HTML}{D4EDDA}
\definecolor{tglose}{HTML}{FFF3CD}
\newcommand{\win}[1]{\cellcolor{tgwin}#1}
\newcommand{\lose}[1]{\cellcolor{tglose}#1}

\theoremstyle{plain}
\newtheorem{theorem}{Theorem}[section]
\newcommand{\sharedtheorem}[3]{%
  \newaliascnt{#1}{theorem}%
  \newtheorem{#1}[#1]{#2}%
  \aliascntresetthe{#1}%
  \crefname{#1}{#2}{#3}\Crefname{#1}{#2}{#3}}
\sharedtheorem{proposition}{Proposition}{Propositions}
\sharedtheorem{lemma}{Lemma}{Lemmas}
\sharedtheorem{corollary}{Corollary}{Corollaries}
\theoremstyle{definition}
\sharedtheorem{definition}{Definition}{Definitions}
\sharedtheorem{assumption}{Assumption}{Assumptions}
\theoremstyle{remark}
\sharedtheorem{remark}{Remark}{Remarks}

\DeclareMathOperator{\Var}{Var}
\newcommand{\CV}{\mathrm{CV}}

\title{Theory-Guided Instance-Dependent Early Stopping via Second-Order Loss Dynamics}

\author{%
  Ahad Bin Islam Shoeb\(^{1}\), Jamal Uddin Tanvin\(^{2}\), Kamrul Hasan\(^{3}\), Alve Reduan\(^{1}\), Imtiaz Ahmed\(^{4}\) \\
  \(^{1}\)University of Dhaka \quad \(^{2}\)Military Institute of Science and Technology \\
  \(^{3}\)Tennessee State University \quad \(^{4}\)Howard University \\
  \texttt{ahadbinislam-2019117810@cs.du.ac.bd, jamaluddintanvin@outlook.com,}\\
  \texttt{mhasan1@tnstate.edu, iam.reduan@gmail.com, imtiaz.ahmed@howard.edu}
}

\fancypagestyle{firstpage}{%
  \fancyhf{} 
  \lfoot{%
    \vspace*{-3em}
    \rule{0.3\textwidth}{0.4pt}\\
    \footnotesize The code is available at \url{https://github.com/ShoebAhad/theory-guided-ies}\\[4pt]
    \makebox[\textwidth][c]{\normalsize\thepage}
  }
  \renewcommand{\headrulewidth}{0pt}
  \renewcommand{\footrulewidth}{0pt}
}

\begin{document}
\definecolor{tgwin}{HTML}{D4EDDA}
\definecolor{tglose}{HTML}{FFF3CD}

\maketitle
\thispagestyle{firstpage}

\lhead{}

\begin{abstract}
Instance-dependent early stopping (IES) reduces training computation by skipping backpropagation for examples whose losses have stabilized, but the second-order loss-difference criterion used by IES lacks a theoretical justification. We develop Theory-Guided Instance-Dependent Early Stopping (TG-IES), which analyzes this criterion under explicit assumptions and derives a noise-calibrated mastery threshold and a conditional stability bound for instance exclusion. We show that second-order differencing has a favorable bias--variance trade-off over first- and third-order differences in a specific signal-to-noise regime, and that its signal contains both gradient-drift and curvature terms. TG-IES derives its stopping threshold and patience from training-run statistics rather than using the manually selected threshold of IES. Across CIFAR-10 and CIFAR-100, TG-IES saves 53--58\% of backpropagation instances in its main convolutional settings while keeping accuracy within 0.1--0.9 percentage points of full training; in the primary ResNet-18/CIFAR-10 setting, this corresponds to 35.7\% FLOP-normalized savings. The method also transfers to ViT-Tiny, where it achieves smaller savings and incurs a modest accuracy cost. These results provide a theoretical basis for second-order instance-dependent early stopping and show that theory-guided calibration can substantially reduce training computation under the assumptions and experimental settings studied. 
\end{abstract}

\section{Introduction}
\label{sec:intro}

The computational resources required to train deep learning models have increased substantially \citep{sevilla2022compute}. As a result, the energy use and carbon footprint of model training are also closely tied to the amount of computation required \citep{patterson2022carbon}. Algorithmic methods that reduce unnecessary forward and backward computation can lower this cost without changing the model architecture or hardware \citep{bartoldson2023computeefficient}.

Training modern neural networks repeatedly processes the same training instances, even though different instances are learned at different rates. Here, a \emph{training instance} means one individual training sample, such as an input--label pair in a classification dataset. Studies of training dynamics show that some instances become easy to learn and remain stable, while others continue to be difficult or important for learning \citep{toneva2019empirical,swayamdipta2020dataset,paul2021deep}. This difference in learning behavior suggests that not every instance needs the same amount of computation throughout training.

Several methods use this observation to reduce training cost. One group selects or prunes instances using static, one-shot, or early-estimated measures of importance, difficulty, or diversity \citep{paul2021deep,sorscher2022beyond,zheng2023coverage,maharana2024d2}. Another group makes the decision during training using signals such as loss, gradient information, or other measures of instance importance \citep{loshchilov2015online,katharopoulos2018not,jiang2019accelerating,mindermann2022prioritized,qin2024infobatch,bankes2024reducr,nagaraj2025coresets,jin2026orderdp}. These methods share the goal of spending more computation on instances that are still useful for learning instead of treating all instances equally at every step. Recent work also shows that an instance's \emph{loss trajectory}, meaning its sequence of loss values over training, can provide information about its learning behavior beyond a single loss value \citep{nagaraj2025coresets}.

Our work focuses on a more specific question: \emph{when does an individual instance no longer need a gradient update?} \citet{yuan2025instance} introduced \emph{Instance-dependent Early Stopping} (IES), which stops backpropagation for an instance when its loss becomes sufficiently stable. The IES criterion uses the second-order finite difference of the instance's loss:
\begin{equation} 
\label{eq:mastery} 
\big|\Delta^2 L_i(\theta^{(t)})\big| = \big|L_i^{(t)} - 2L_i^{(t-1)} + L_i^{(t-2)}\big| < \delta, 
\end{equation}
Here, $L_i^{(t)}$ is the loss of instance $i$ at epoch $t$. The first difference $L_i^{(t)}-L_i^{(t-1)}$ measures how much the loss moved and $\Delta^2L_i$ measures how much that move itself changed. So a small $|\Delta^2L_i|$ means the loss trajectory has become locally stable. In IES, an instance is \emph{mastered} when this value stays below $\delta$ for a fixed number of epochs, and its gradient computation is then skipped. ``Mastered'' is an operational name for a stable loss trajectory. It does not imply zero loss, a zero gradient, or perfect prediction.

IES shows that this simple loss-based signal can reduce training computation, but several questions remain open. The original work does not explain why the second-order difference is a useful mastery signal, how its noise compares with first- and third-order differences, or how it relates to the optimization dynamics. It also does not analyze how excluding instances changes the training trajectory. Finally, the mastery threshold $\delta$ and the patience value are manually chosen rather than derived from the training run.

\textbf{Research question.}
\emph{Can we explain why the second-order loss difference works as a mastery signal, derive its threshold from observable training dynamics, and characterize how excluding instances affects the training trajectory?}

\textbf{Our answer has three parts.} First, under a local near-convergence model (\cref{sec:theory}), we analyze the second-order loss difference and explain when it is a useful signal. We show that relative loss changes can remove the effect of an instance’s loss scale, that $N=2$ has a favorable bias–variance trade-off in a specific signal-to-noise regime and that the second-order difference contains both gradient-drift and curvature terms.

Second, we use these results to develop a practical training procedure called \textbf{Theory-Guided Instance-Dependent Early Stopping (TG-IES)} (\cref{sec:method}). TG-IES estimates the noise level during training and uses it to set the mastery threshold $\delta$. It also determines the patience value $K^*$, which specifies how many consecutive low-signal epochs an instance must have before it is excluded from training. We initialize $K^*$ using a conditional reference horizon $K_{\mathrm{safe}}$. Finally, we show that excluding these instances is equivalent to using an approximate gradient update. The difference between this update and the full-training update is exactly the combined gradient of the excluded instances. We then bound how much this difference can change the training trajectory.

Third, we evaluate the theory and the method across datasets, architectures, training schedules, and competing methods, measuring both computational savings and accuracy and directly testing the assumptions behind the stability analysis (\cref{sec:results,sec:generalization}). Proofs and further results are in the Appendix.

\begin{figure}[!htb]
\centering
\input{fig_overview_tikz}
\caption{TG-IES uses each instance’s loss trajectory to derive a second-order loss signal. Observed noise sets the mastery threshold $\delta$, and a conditional reference horizon $K_{\mathrm{safe}}$ initializes patience $K^*$. The mastery rule decides when an instance becomes passive and when it re-enters training. Skipping passive instances alters updates by the passive-set gradient $\omega_t$, governing drift and descent (dashed box). Experiments assess compute savings, accuracy, and the method’s assumptions.}
\label{fig:overview}
\end{figure}

\textbf{Contributions.}
We provide a theoretical and empirical analysis of IES and develop Theory-Guided Instance-Dependent Early Stopping (TG-IES). Our contributions are:

\begin{itemize}[leftmargin=1.5em]

\item \textbf{Theory for the second-order loss signal.} We study when the second-order loss difference is useful. We analyze its ratio-level scale cancellation, the bias--variance trade-off, and the effects of gradient drift and curvature.

\item \textbf{Theory-guided calibration.} We use the noise observed during training to set the mastery threshold $\delta$. We also use a conditional reference horizon to initialize the patience value $K^*$.

\item \textbf{Conditional stability of the selective update.} We analyze how excluding instances affects training. We bound the resulting trajectory drift and loss descent using the combined gradient of the excluded instances.

\item \textbf{Empirical evaluation.} We evaluate TG-IES across different datasets, model architectures, optimizers, and training settings. We measure computation savings and accuracy, and evaluate the assumptions used in the stability analysis.

\end{itemize}

\section{Related Work}
\label{sec:related}

\textbf{Static data pruning.}
Static data pruning selects a fixed subset of training examples before or near the beginning of training. Existing methods use forgetting events \citep{toneva2019empirical}, early gradient-norm scores \citep{paul2021deep}, training-dynamics statistics \citep{swayamdipta2020dataset}, or scores obtained from a smaller proxy model \citep{coleman2020selection}. Other approaches consider dataset coverage, diversity, or scaling behavior when selecting the subset \citep{sorscher2022beyond,zheng2023coverage,maharana2024d2}. In contrast, IES and TG-IES make an instance-level decision at each epoch based on the current training trajectory.

\textbf{Dynamic and online data selection.}
Dynamic methods update the selection of training examples during training. Importance-sampling approaches focus on examples that contribute more to gradient estimation \citep{loshchilov2015online,katharopoulos2018not}, while Selective-Backprop \citep{jiang2019accelerating} uses a probabilistic rule to skip backpropagation for low-loss examples. Other methods use holdout models \citep{mindermann2022prioritized,bankes2024reducr}, epoch-level loss statistics \citep{qin2024infobatch}, trajectory-alignment scores \citep{qin2026alignprune,nagaraj2025coresets}, or moving averages of loss \citep{zhou2026bls}. \citet{jin2026orderdp} provide theoretical guarantees for a different dynamic-pruning mechanism. These methods mainly focus on deciding which examples to process or how examples are selected or ranked, whereas TG-IES calibrates its exclusion rule from the observed loss noise and analyzes the gradient error that exclusion introduces.

\textbf{Instance-Level Early Stopping.}
Curriculum learning controls when examples \emph{enter} training, typically progressing from easier to harder examples \citep{bengio2009curriculum,soviany2022curriculum}. Whereas IES and TG-IES determine when an example can \emph{leave}. Our direct predecessor is IES \citep{yuan2025instance}, which introduced instance-level early stopping using a second-order loss-difference signal. TG-IES extends IES with an analysis of this signal, noise-based threshold calibration, and a conditional stability analysis of the resulting selective update.

\section{Setup and Assumptions}
\label{sec:prelim}

We analyze training one instance at a time by studying the per-instance loss trajectory and then connecting it to the full training update. For each instance, we record how its loss changes from epoch to epoch. For analysis, training is idealized as gradient descent, $\theta^{(t+1)}=\theta^{(t)}-\eta_t\,g^{(t)}$ (constant $\eta_t\equiv\eta$ unless stated), where a superscript $(t)$ marks a value at $\theta^{(t)}$. \Cref{tab:notation} lists the symbols used throughout.

\begin{table}[t]
\caption{Notation used in the main text.}
\label{tab:notation}
\centering
\scriptsize
\setlength{\tabcolsep}{3pt}
\begin{tabular}{@{}l p{5.35cm} @{\hspace{8pt}} l p{5.0cm}@{}}
\toprule
Symbol & Meaning & Symbol & Meaning \\
\midrule
$L_i^{(t)}$ & loss of instance $i$ after epoch $t$ & $\theta^{(t)}$, $\eta_t$ & parameters after epoch $t$; step size \\
$\tilde L_i^{(t)}$ & recorded loss $L_i^{(t)}+\xi_{i,t}$ (noise scale $\sigma$) & $L$ & training loss $\frac1N\sum_iL_i$ \\
$\Delta^1L_i^{(t)}$ & $L_i^{(t)}-L_i^{(t-1)}$: how much the loss moved & $g_i$ & $\nabla_\theta L_i$: update direction from instance $i$ \\
$\Delta^2L_i^{(t)}$ & $\Delta^1L_i^{(t)}-\Delta^1L_i^{(t-1)}$: change of that move & $g$ & $\frac1N\sum_ig_i$: full-data gradient \\
$\Delta^NL_i^{(t)}$ & the difference applied $N$ times & $H_i$ & $\nabla^2_\theta L_i$: local curvature of $L_i$ \\
$\delta$, $K^*$ & mastery threshold on $|\Delta^2\tilde L_i|$; patience & $A_t$, $P_t$ & active / passive instances \\
\bottomrule
\end{tabular}
\end{table}
Thus, $\Delta^2L_i$ measures whether the per-epoch change in an instance's loss is still changing. \textbf{What we assume, and why.} Each result names the idealizations it uses; \cref{app:assumptions-extended} adds two technical ones (\cref{ass:cbounds,ass:thirdderiv}) and discusses failures, and proofs are in \cref{app:proofs}.

\begin{assumption}[Smoothness]
\label{ass:smooth}
$L$ is $\beta$-smooth: $\|g(\theta)-g(\theta')\|\le\beta\|\theta-\theta'\|$.
\emph{Purpose:} gradients cannot change arbitrarily fast, which turns per-step errors into multi-step bounds.
\end{assumption}

\begin{assumption}[Local geometric decay]
\label{ass:geometric}
Near a solution $\theta^*$, each excess loss shrinks by a constant fraction per epoch: $L_i^{(t)}-L^*=c_ir_i^t$, with scale $c_i>0$ and rate $r_i\in(0,1)$.
\emph{Purpose:} a simple late-training model in which finite differences are exact.
\end{assumption}

\begin{assumption}[Independent noise]
\label{ass:noise}
$\xi_{i,t}$ is conditionally mean-zero with variance $\sigma^2$, independent over epochs for fixed $i$.
\emph{Purpose:} noise does not persist, so its effect on $\Delta^N$ is exact (\cref{sec:noise-validation}).
\end{assumption}

\begin{assumption}[Sub-Gaussian noise]
\label{ass:subgaussian}
$\xi_{i,t}$ is conditionally $\sigma^2$-sub-Gaussian (needed only for tail bounds).
\end{assumption}

\begin{assumption}[Small rate dispersion]
\label{ass:dispersion}
$r_i\in[\bar r-\delta_r,\bar r+\delta_r]$ for all $i$, with $\delta_r$ small.
\emph{Purpose:} broadly similar decay rates, needed for one cross-instance comparison (\cref{prop:cv,app:counterexample}).
\end{assumption}

\section{Why the Second-Order Difference Works}
\label{sec:theory}

We answer three questions about the IES signal $\Delta^2L_i$, each as: question, result, meaning, limitation.

\subsection{Does the signal depend on the size of the loss?}
\label{sec:level1}

\textbf{Question.} Instances differ widely in loss size, but a useful signal should reflect \emph{how fast} an instance converges: two instances with different loss sizes and the same decay rate should look alike. Under \cref{ass:geometric}, $\Delta^N(L_i^{(t)}-L^*)=c_i\,r_i^{t-N}(r_i-1)^N$ (\cref{lem:geomdiff}), which gives the following.

\begin{theorem}[The loss scale cancels in the ratio]
\label{thm:ratio}
Under \cref{ass:geometric}, for every $N\ge1$ and every instance $i$,
\begin{equation}
\label{eq:ratio}
\frac{\Delta^N(L_i^{(t)}-L^*)}{\Delta^{N-1}(L_i^{(t)}-L^*)}=\frac{r_i-1}{r_i},
\end{equation}
where the denominator is nonzero for every finite $t$.
\end{theorem}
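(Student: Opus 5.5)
The plan is to reduce \cref{thm:ratio} to the closed form for iterated differences of a geometric sequence, which the text already quotes as \cref{lem:geomdiff}. Write $e_i^{(t)} := L_i^{(t)}-L^* = c_i r_i^t$ under \cref{ass:geometric}. The differences are taken in $t$ with the backward convention $\Delta^1 x^{(t)} = x^{(t)}-x^{(t-1)}$ and $\Delta^0 x^{(t)}=x^{(t)}$. Since $L^*$ is constant, $\Delta^N L_i^{(t)} = \Delta^N e_i^{(t)}$ for $N\ge1$. For $N=1$ the denominator in \eqref{eq:ratio} is therefore the excess loss itself, not $L_i^{(t)}$; this is why the statement is phrased in terms of $L_i^{(t)}-L^*$.

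\textbf{Step 1: the closed form.} I would prove
\begin{equation*}
\Delta^N e_i^{(t)} = c_i\, r_i^{t-N}(r_i-1)^N
\end{equation*}
by induction on $N$. The base case $N=0$ is the model itself. For the inductive step, apply $\Delta^1$ to the $N$-th difference:
\begin{equation*}
c_i(r_i-1)^N\bigl(r_i^{t-N}-r_i^{t-1-N}\bigr)=c_i(r_i-1)^N r_i^{t-N-1}(r_i-1),
\end{equation*}
which is the formula at $N+1$. This argument requires $t-N$ to lie in the range where the geometric model holds. I would state that assumption explicitly, or note that it holds whenever all the epochs $t-N,\dots,t$ are within the late-training window of \cref{ass:geometric}.

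\textbf{Step 2: nonvanishing and the quotient.} The denominator $\Delta^{N-1}e_i^{(t)} = c_i r_i^{t-N+1}(r_i-1)^{N-1}$ is nonzero because each factor is nonzero: $c_i>0$, $r_i\in(0,1)$ so $r_i^{s}>0$ for every finite integer $s$, and $r_i-1\neq0$. This establishes the claim that the denominator is nonzero for every finite $t$. Dividing the closed forms gives
\begin{equation*}
\frac{c_i r_i^{t-N}(r_i-1)^N}{c_i r_i^{t-N+1}(r_i-1)^{N-1}}=\frac{r_i-1}{r_i}.
\end{equation*}
Both $c_i$ and $t$ cancel, which is the scale-cancellation statement.

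\textbf{Main obstacle.} No step here is technically hard. The only point needing care is bookkeeping. The difference indexing must match the paper's backward convention so that the exponents $t-N$ come out correctly. The $N=1$ case must use the excess loss rather than the raw loss. Finally, all epochs involved must fall within the window where \cref{ass:geometric} applies.
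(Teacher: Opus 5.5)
Your proposal is correct and follows the paper's proof essentially verbatim. The paper also establishes $\Delta^N(L_i^{(t)}-L^*)=c_i r_i^{t-N}(r_i-1)^N$ by induction on $N$ (\cref{lem:geomdiff}), then divides consecutive orders, with the denominator nonzero because $c_i>0$ and $r_i\in(0,1)$; your remark that the epochs $t-N,\dots,t$ must all lie in the window where the geometric model holds is sensible bookkeeping that the paper leaves implicit.
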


\textbf{Meaning.} The scale $c_i$ cancels, so instances with different loss sizes but equal rates give the same ratio: near convergence, differences reflect \emph{how} a loss changes, not only its size. With small rate dispersion, the spread of $|\Delta^2L_i|$ across instances also stays within $O(\delta_r)$ of that of $|\Delta^1L_i|$ (\cref{prop:cv}). \textbf{Limitation.} Only the \emph{ratio} is scale-free; $\Delta^2L_i$ itself still grows with $c_i$.

\subsection{Why the second order, rather than the first or third?}
\label{sec:level2}

\textbf{Question.} Each extra order of differencing removes more of the remaining trend in a loss trajectory, but it also amplifies measurement noise. Which order balances the two?

\begin{theorem}[Exact bias--variance decomposition]
\label{thm:bv}
Under \cref{ass:geometric,ass:noise},
\begin{equation}
\label{eq:bv}
\mathbb E\big[(\Delta^N\tilde L_i^{(t)})^2\big]
=\underbrace{c_i^2r_i^{2(t-N)}(r_i-1)^{2N}}_{\text{remaining trend}}
+\underbrace{\tbinom{2N}{N}\sigma^2}_{\text{noise}}.
\end{equation}
\end{theorem}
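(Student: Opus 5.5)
The plan is to split the recorded difference into a deterministic trend part and a noise part, and then expand the square. The $N$-th backward difference is linear: $\Delta^N x^{(t)}=\sum_{k=0}^{N}(-1)^k\binom{N}{k}x^{(t-k)}$. Since $\tilde L_i^{(t)}=L_i^{(t)}+\xi_{i,t}$, we get
\begin{equation*}
\Delta^N\tilde L_i^{(t)}=\Delta^N L_i^{(t)}+\sum_{k=0}^{N}(-1)^k\tbinom{N}{k}\xi_{i,t-k}.
\end{equation*}
For $N\ge1$, differencing annihilates constants, so $\Delta^N L_i^{(t)}=\Delta^N(L_i^{(t)}-L^*)$. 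By \cref{lem:geomdiff} under \cref{ass:geometric}, this equals the deterministic quantity $c_i r_i^{t-N}(r_i-1)^N$. Squaring it gives exactly the ``remaining trend'' term of \eqref{eq:bv}.

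Next, expand $\mathbb E[(a+Z)^2]=a^2+2a\,\mathbb E[Z]+\mathbb E[Z^2]$, where $a$ is the deterministic trend and $Z$ is the noise combination.
\begin{itemize}[leftmargin=1.5em]
\item \emph{Cross term.} By \cref{ass:noise} each $\xi_{i,t-k}$ is conditionally mean-zero. By the tower property $\mathbb E[Z]=0$, so the cross term vanishes.
\item \emph{Noise term.} By independence over epochs for fixed $i$, together with the common variance $\sigma^2$, $\mathbb E[Z^2]=\sigma^2\sum_{k=0}^N\binom{N}{k}^2$.
\item \emph{Combinatorial step.} Apply Vandermonde's identity $\sum_k\binom{N}{k}\binom{N}{N-k}=\binom{2N}{N}$. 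This gives the noise term $\binom{2N}{N}\sigma^2$.
\end{itemize}

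The main point needing care is the conditioning. The noise is only \emph{conditionally} mean-zero and independent. One reading is conditioning on the training trajectory, so that $L_i^{(t)}$ is treated as deterministic; under \cref{ass:geometric} the trend is indeed deterministic. For the cross terms $\mathbb E[\xi_{i,s}\xi_{i,s'}]$ with $s\neq s'$, I would condition on the filtration up to the later epoch $\max(s,s')$. The earlier noise is measurable there and the later noise has conditional mean zero, so each cross term vanishes. The same conditioning also gives $\mathbb E[\xi_{i,s}^2]=\sigma^2$ by iterated expectations. With these facts the expansion is exact rather than an inequality, which is what the equality in \eqref{eq:bv} requires.
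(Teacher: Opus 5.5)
Your proposal is correct and matches the paper's proof step for step. Linearity splits $\Delta^N\tilde L_i^{(t)}$ into the deterministic trend $c_i r_i^{t-N}(r_i-1)^N$ from \cref{lem:geomdiff} plus $\sum_{k}(-1)^k\binom{N}{k}\xi_{i,t-k}$; the cross term vanishes by the mean-zero assumption, and independence with Vandermonde's identity gives $\binom{2N}{N}\sigma^2$. Your extra remarks make explicit what the paper leaves implicit: that $\Delta^N$ annihilates $L^*$ for $N\ge1$, and the filtration argument for the cross terms, where you should condition on the information available just before the later noise is drawn, not including it. The stated independence over epochs already suffices for the cross terms, though.
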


\textbf{Meaning.} The right side, $\mathrm{MSE}(N,t)$, is the mean squared size of the observed $N$-th difference. Raising $N$ shrinks its trend term (when $r_i>\tfrac12$) but grows its noise term combinatorially ($2$, $6$, $20$ for $N=1,2,3$), so the balance is explicit:

\begin{proposition}[When $N=2$ minimizes the error]
\label{prop:mse}
Under \cref{ass:geometric,ass:noise}, let $\alpha_i:=(1-r_i)^2/r_i^2$ and $r_i>\tfrac12$ (so $\alpha_i<1$). Then $N=2$ minimizes $\mathrm{MSE}(N,t)$ over $\{1,2,3\}$ if and only if $c_i^2r_i^{2t}$ lies in the non-empty signal-to-noise (SNR) band
\begin{equation}
\label{eq:band}
\frac{4\sigma^2}{\alpha_i(1-\alpha_i)}\;\le\;c_i^2r_i^{2t}\;\le\;\frac{14\sigma^2}{\alpha_i^2(1-\alpha_i)}.
\end{equation}
\end{proposition}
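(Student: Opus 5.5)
The plan is to substitute \cref{thm:bv} directly and reduce the statement to two linear inequalities in the single quantity $S:=c_i^2r_i^{2t}$. First, rewrite the trend term of \eqref{eq:bv}:
\[
c_i^2r_i^{2(t-N)}(r_i-1)^{2N}=c_i^2r_i^{2t}\Big(\tfrac{(1-r_i)^2}{r_i^2}\Big)^N=S\,\alpha_i^N .
\]
Then evaluate the central binomial coefficients $\binom{2}{1}=2$, $\binom{4}{2}=6$, $\binom{6}{3}=20$. This gives the explicit error profile
\[
\mathrm{MSE}(1,t)=S\alpha_i+2\sigma^2,\qquad \mathrm{MSE}(2,t)=S\alpha_i^2+6\sigma^2,\qquad \mathrm{MSE}(3,t)=S\alpha_i^3+20\sigma^2 .
\]
The hypothesis $r_i>\tfrac12$ is used only to guarantee $0<1-r_i<r_i$, hence $0<\alpha_i<1$. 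Consequently both $\alpha_i(1-\alpha_i)>0$ and $\alpha_i^2(1-\alpha_i)>0$, so dividing by them preserves the direction of the inequalities.

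Next, $N=2$ minimizes over $\{1,2,3\}$ exactly when both $\mathrm{MSE}(2,t)\le\mathrm{MSE}(1,t)$ and $\mathrm{MSE}(2,t)\le\mathrm{MSE}(3,t)$ hold; I read ``minimizes'' in the non-strict sense, with ties allowed. The two conditions unpack as follows.
\begin{itemize}
\item The first is $S\alpha_i^2+6\sigma^2\le S\alpha_i+2\sigma^2$. This is equivalent to $S\alpha_i(1-\alpha_i)\ge4\sigma^2$, i.e.\ the left inequality of \eqref{eq:band}.
\item The second is $S\alpha_i^2+6\sigma^2\le S\alpha_i^3+20\sigma^2$. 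This is equivalent to $S\alpha_i^2(1-\alpha_i)\le14\sigma^2$, i.e.\ the right inequality of \eqref{eq:band}.
\end{itemize}
Each step is an equivalence, so the conjunction gives the ``if and only if'' claim.

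Finally, for non-emptiness of the band, compare the two endpoints. The condition $\frac{4\sigma^2}{\alpha_i(1-\alpha_i)}\le\frac{14\sigma^2}{\alpha_i^2(1-\alpha_i)}$ reduces, after multiplying through by the positive quantity $\alpha_i^2(1-\alpha_i)/\sigma^2$, to $4\alpha_i\le14$. This holds strictly since $\alpha_i<1$, so the band is a non-degenerate interval whenever $\sigma>0$. For $\sigma=0$ it degenerates to $S=0$, which is impossible because $c_i>0$. I would therefore note that $\sigma>0$ is implicitly assumed, or else state that the band is nonempty as a set of admissible values.

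There is no real obstacle here. The only points needing care are the sign bookkeeping ($\alpha_i<1$ is what makes $1-\alpha_i>0$ and keeps the divisions direction-preserving) and checking the binomial constants that produce the numerators $4=6-2$ and $14=20-6$.
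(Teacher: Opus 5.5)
Your proposal is correct and follows the paper's proof: write the trend as $c_i^2r_i^{2t}\alpha_i^N$, compare $\mathrm{MSE}(2,t)$ with $\mathrm{MSE}(1,t)$ and $\mathrm{MSE}(3,t)$ using the constants $4=6-2$ and $14=20-6$, divide by positive factors, and get non-emptiness from $4\alpha_i\le14$. Dividing the second condition by $\alpha_i^2(1-\alpha_i)$ is the right step, since the paper's text loosely says it divides both by $\alpha_i(1-\alpha_i)$; your remark that non-degeneracy of the band needs $\sigma>0$ is also correct.
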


\textbf{Meaning.} 
The quantity $c_i^2r_i^{2t}$ represents the squared remaining signal, while $\sigma^2$ represents the observation noise. The result identifies an intermediate regime during training. Early on, the learning signal is strong enough to make higher-order differences useful. Very late in training, the signal becomes so weak that the extra noise of higher orders takes over, making the first-order difference ($N=1$) the better choice. The second-order difference strikes the ideal balance between these two extremes, which is exactly the phase targeted by the noise-calibrated TG-IES threshold. 
\textbf{Limitation.}
This does not show that $N=2$ is always optimal. It is optimal only when the signal lies in the band in \cref{eq:band}.

Pure noise gives $\Delta^2\xi_{i,t}=\xi_{i,t}-2\xi_{i,t-1}+\xi_{i,t-2}$; with independent terms, the coefficients $1,-2,1$ give variance $(1^2+2^2+1^2)\sigma^2=6\sigma^2$. The threshold can therefore be tied to the measured noise scale $\hat\sigma$ rather than fixed in loss units.

\begin{corollary}[Noise-calibrated threshold]
\label{cor:threshold}
Under \cref{ass:noise,ass:subgaussian}, $\Delta^2\xi_{i,t}$ has mean zero and variance $6\sigma^2$. With $\delta=k\sqrt6\,\sigma$, $\Pr(|\Delta^2\xi_{i,t}|>\delta)\approx2(1-\Phi(k))$ for approximately Gaussian noise ($\Phi$: standard normal CDF), and $\Pr(|\Delta^2\xi_{i,t}|>\delta)\le2\exp(-k^2/2)$ in general.
\end{corollary}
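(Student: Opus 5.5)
The plan is to handle the three claims of \cref{cor:threshold} separately: the moments, the Gaussian approximation, and the sub-Gaussian tail.

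\textbf{Moments.} Write $\Delta^2\xi_{i,t}=\xi_{i,t}-2\xi_{i,t-1}+\xi_{i,t-2}$. Each term is conditionally mean-zero by \cref{ass:noise}, so linearity of expectation gives mean zero. The three terms are independent over epochs for fixed $i$, so the cross terms vanish and
\[
\Var(\Delta^2\xi_{i,t})=(1^2+(-2)^2+1^2)\sigma^2=6\sigma^2 .
\]
This is also the $N=2$ case of the noise term $\binom{2N}{N}\sigma^2$ in \cref{thm:bv}, so we can cite that result directly.

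\textbf{Gaussian approximation.} If the $\xi_{i,t}$ are approximately Gaussian, then $\Delta^2\xi_{i,t}$ is approximately $\mathcal N(0,6\sigma^2)$, since a linear combination of independent Gaussians is Gaussian. Standardizing by $\sqrt6\,\sigma$ gives
\[
\Pr\bigl(|\Delta^2\xi_{i,t}|>k\sqrt6\,\sigma\bigr)\approx\Pr(|Z|>k)=2(1-\Phi(k)).
\]
This part is heuristic by design, matching the ``$\approx$'' in the statement.

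\textbf{Sub-Gaussian tail.} We will show that $\Delta^2\xi_{i,t}$ is $6\sigma^2$-sub-Gaussian. The key step is factoring the moment generating function. I will condition sequentially in the order $\xi_{i,t}$, then $\xi_{i,t-1}$, then $\xi_{i,t-2}$, using the filtration implicit in \cref{ass:subgaussian}. This gives
\[
\mathbb E\, e^{\lambda\Delta^2\xi_{i,t}}\le e^{\lambda^2\sigma^2/2}\,e^{4\lambda^2\sigma^2/2}\,e^{\lambda^2\sigma^2/2}=e^{\lambda^2(6\sigma^2)/2}.
\]
The Chernoff bound applied to each tail, optimized at $\lambda=\delta/(6\sigma^2)$, then yields
\[
\Pr(|\Delta^2\xi_{i,t}|>\delta)\le 2\exp\!\bigl(-\delta^2/(12\sigma^2)\bigr).
\]
Substituting $\delta=k\sqrt6\,\sigma$ gives exactly $2\exp(-k^2/2)$.

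\textbf{Main obstacle.} The hardest step is justifying the MGF factorization from conditional sub-Gaussianity. For this I will use a tower-property argument: conditional on earlier epochs, each factor is bounded by its sub-Gaussian MGF bound. The independence in \cref{ass:noise} makes this immediate when the $\xi$ are fully independent. The remaining steps are routine.
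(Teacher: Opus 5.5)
Your proposal is correct and follows the paper's proof essentially step for step. Both arguments obtain the variance $6\sigma^2$ from independence, matching the $N=2$ case of \cref{thm:bv}, treat the Gaussian approximation as standardization, and get the tail bound by noting that $\Delta^2\xi_{i,t}$ is $6\sigma^2$-sub-Gaussian, since it is a combination of independent $\sigma^2$-sub-Gaussian terms whose squared coefficients sum to $6$, followed by the standard tail inequality; the only difference is that you write out the MGF factorization and the Chernoff choice $\lambda=\delta/(6\sigma^2)$, which the paper calls standard.
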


\textbf{Empirical check.} At the epoch when each instance's signal first reaches the calibrated threshold ($46{,}779$ instances), $N=2$ has the lowest empirical error, two and three orders of magnitude below $N=1$ and $N=3$, as predicted. A naive late-epoch average instead favors $N=1$, because already noise-dominated instances dominate it (\cref{app:extra-figs}).

\subsection{What does the second difference actually measure?}
\label{sec:level3}

\textbf{Question.} A second difference is often read as curvature, but the parameters also move between measurements. A step changes $L_i$ by about $-\eta\langle g_i,g\rangle$, so $D_i^{(t)}:=\langle g_i^{(t-2)},g^{(t-2)}\rangle-\langle g_i^{(t-1)},g^{(t-1)}\rangle$ measures the change in per-step progress on $i$. $O(\eta^k)$ marks a term scaling like $\eta^k$ for small $\eta$.

\begin{theorem}[What the second difference measures]
\label{thm:curvature}
Under \cref{ass:thirdderiv}, along gradient descent with step size $\eta$,
\begin{equation}
\label{eq:curvature}
\Delta^2L_i^{(t)}=\underbrace{\eta\,D_i^{(t)}}_{\substack{\text{change in}\\\text{progress, }O(\eta)}}
+\underbrace{\tfrac{\eta^2}{2}\big(g^{(t-1)\top}H_i(\theta^{(t-1)})g^{(t-1)}-g^{(t-2)\top}H_i(\theta^{(t-2)})g^{(t-2)}\big)}_{\text{change in curvature, }O(\eta^2)}
+\underbrace{R_{i,t}}_{\substack{\text{remainder,}\\O(\eta^3)}},
\end{equation}
with $|R_{i,t}|\le\frac{M_3}{6}\eta^3\big(\|g^{(t-1)}\|^3+\|g^{(t-2)}\|^3\big)$.
\end{theorem}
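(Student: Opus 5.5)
Write $\Delta^2L_i^{(t)}=\Delta^1L_i^{(t)}-\Delta^1L_i^{(t-1)}$ and expand each first difference separately by a third-order Taylor expansion of $L_i$ along one gradient-descent step. For a step from $\theta^{(s)}$ to $\theta^{(s+1)}=\theta^{(s)}-\eta g^{(s)}$, Taylor's theorem with Lagrange (or integral) remainder gives
\begin{equation*}
L_i^{(s+1)}-L_i^{(s)}=-\eta\langle g_i^{(s)},g^{(s)}\rangle+\tfrac{\eta^2}{2}g^{(s)\top}H_i(\theta^{(s)})g^{(s)}+\rho_{i,s},
\end{equation*}
with $|\rho_{i,s}|\le\frac{M_3}{6}\eta^3\|g^{(s)}\|^3$. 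The bound on $\rho_{i,s}$ uses \cref{ass:thirdderiv}, which I take to say that the third derivative of $L_i$ is bounded in operator norm by $M_3$ on a region containing the segments between consecutive iterates. The step is $-\eta g^{(s)}$, so the cubic term is at most $\frac{M_3}{6}\|\eta g^{(s)}\|^3$.

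Apply this with $s=t-1$ to get $\Delta^1L_i^{(t)}$, and with $s=t-2$ to get $\Delta^1L_i^{(t-1)}$. Subtracting, the first-order terms give $-\eta\langle g_i^{(t-1)},g^{(t-1)}\rangle+\eta\langle g_i^{(t-2)},g^{(t-2)}\rangle=\eta D_i^{(t)}$, matching the definition of $D_i^{(t)}$. The quadratic terms give exactly the stated curvature difference $\tfrac{\eta^2}{2}(g^{(t-1)\top}H_i^{(t-1)}g^{(t-1)}-g^{(t-2)\top}H_i^{(t-2)}g^{(t-2)})$. The remainder is $R_{i,t}=\rho_{i,t-1}-\rho_{i,t-2}$. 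The triangle inequality then gives $|R_{i,t}|\le\frac{M_3}{6}\eta^3(\|g^{(t-1)}\|^3+\|g^{(t-2)}\|^3)$, which is the claimed bound.

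The only delicate point is bookkeeping rather than analysis. One must keep the Hessians evaluated at the \emph{start} of each step, $\theta^{(t-1)}$ and $\theta^{(t-2)}$, and must not re-expand $H_i(\theta^{(t-1)})$ around $\theta^{(t-2)}$. Otherwise extra $O(\eta^3)$ cross terms appear and the constant no longer matches. I will also state the order labels precisely. The labels $O(\eta)$ and $O(\eta^2)$ describe the explicit prefactors with gradients held fixed. In fact $D_i^{(t)}$ is itself $O(\eta)$, since $g^{(t-1)}-g^{(t-2)}=O(\eta)$ by \cref{ass:smooth}, but the theorem is an exact identity plus a remainder bound, so no finer scaling claim is needed. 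Finally, I note that the argument uses exact (idealized) gradient descent and noiseless losses $L_i$, not the recorded $\tilde L_i$, consistent with the setup in \cref{sec:prelim}.
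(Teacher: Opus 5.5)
Your proposal is correct and matches the paper's proof essentially step for step. Like the paper, you take a third-order Taylor expansion of each one-step change $L_i^{(s+1)}-L_i^{(s)}$ at $s=t-1$ and $s=t-2$, with Hessians evaluated at the start of each step. You then subtract to isolate $\eta D_i^{(t)}$ and the curvature difference, and apply the triangle inequality to $R_{i,t}=\rho_{i,t-1}-\rho_{i,t-2}$.

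One minor note on your aside, which the proof does not use: showing $D_i^{(t)}=O(\eta)$ also requires controlling $g_i^{(t-1)}-g_i^{(t-2)}$. That needs a bound on the per-instance Hessian $H_i$, which the $\beta$-smoothness assumption on the averaged loss $L$ does not supply.
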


\textbf{Meaning.} 
This equation shows that $\Delta^2L_i$ should not be treated as a direct measure of curvature. Instead, it captures a mix of training dynamics: its main component is the change in the instance's per-step progress, while curvature only plays a secondary role at order $O(\eta^2)$. Because of this, it is better to interpret $\Delta^2L_i$ as a general stability or convergence signal. 
\textbf{Limitation.} This mathematical ordering assumes very small step sizes. At the actual step sizes used in our experiments, the progress (drift) term does not completely dominate and only correlates weakly with the observed signal (Pearson $r=0.23$--$0.37$, \cref{sec:lrmechanism}). Also, a small $\Delta^2L_i$  means the loss trajectory has flattened out and does not guarantee that the gradient itself is small (\cref{sec:bridge}).

\section{From the Signal to TG-IES}
\label{sec:method}
\label{sec:level4}

\subsection{The algorithm}
\label{sec:alg}
TG-IES retains the IES mastery check but calibrates its threshold and persistence from training statistics. At each epoch (\cref{alg:tgies}), it (1) forwards all instances and records $\tilde L_i^{(t)}$, (2) computes $\hat S_i=|\Delta^2\tilde L_i^{(t)}|$ once three observations are available, and (3) backpropagates only through the active set

\begin{equation}
\label{eq:activegrad}
\hat g^{(t)}=\frac1N\sum_{i\in A_t}g_i(\hat\theta^{(t)}),\qquad
\hat\theta^{(t+1)}=\hat\theta^{(t)}-\eta_t\hat g^{(t)}.
\end{equation}
An instance becomes passive after $K^*$ consecutive epochs with $\hat S_i<\delta$ and reactivates when $\hat S_i\ge\delta$. At epoch 30, $\delta=k\sqrt6\,\hat\sigma$ and $K^*$ are calibrated as described in \cref{app:practical}. Here, $k=\Phi^{-1}(1-p/2)$. The $1/N$ normalization preserves the full-training update scale.

\textbf{Patience.}
$K^*$ is based on the conditional reference horizon
$K_{\mathrm{safe}}=\lfloor N\tau/(\beta_{\mathrm{test}}\eta\varepsilon_0)\rfloor$
(\cref{cor:kstar}), clipped to $[1,K_{\max}]$. In all reported runs, $K_{\mathrm{safe}}>K_{\max}=15$, hence $K^*=15$.

\begin{algorithm}[!htb]
\caption{Theory-Guided Instance-Dependent Early Stopping (TG-IES).}
\label{alg:tgies}
\fontsize{8pt}{9pt}\selectfont
\begin{algorithmic}
\STATE {\bfseries Require:} data $D$, step sizes $\eta_t$, calibration epoch $t_c=30$, cap $K_{\max}$, epochs $T$
\STATE Init $\hat\theta^{(0)}$; $\mathrm{cnt}[i]=0$; all $i$ active
\FOR{$t=1$ {\bfseries to} $T$}
   \STATE Forward all $i\in D$; record $\tilde L_i^{(t)}$
   \IF{$t<3$}
      \STATE $\hat S_i\leftarrow\infty$
   \ELSE
      \STATE $\hat S_i\leftarrow|\tilde L_i^{(t)}-2\tilde L_i^{(t-1)}+\tilde L_i^{(t-2)}|$
   \ENDIF
   \IF{$t=t_c$}
      \STATE Calibrate $\hat\sigma,\varepsilon_0,\delta$ and $K^*$
   \ENDIF
   \IF{$t\ge t_c$}
      \FOR{each $i\in D$}
         \IF{$\hat S_i<\delta$}
            \STATE $\mathrm{cnt}[i]\mathrel{+}=1$; mark passive if $\mathrm{cnt}[i]\ge K^*$
         \ELSE
            \STATE $\mathrm{cnt}[i]\leftarrow0$; mark active
         \ENDIF
      \ENDFOR
   \ENDIF
   \STATE $A_t\leftarrow\{i:\text{active}\}$; \textbf{break} if $A_t=\emptyset$
   \STATE $\hat g^{(t)}\leftarrow\frac1N\sum_{i\in A_t}\nabla_\theta L_i(\hat\theta^{(t)})$;
   $\hat\theta^{(t+1)}\leftarrow\hat\theta^{(t)}-\eta_t\hat g^{(t)}$
\ENDFOR
\STATE \textbf{return} $\hat\theta^{(T)}$
\end{algorithmic}
\end{algorithm}

\subsection{What happens to optimization when instances are excluded?}
\label{sec:actual}

\textbf{Question.} Once instances stop receiving updates, how far can the model move from full training, and does it still descend? We follow the chain \emph{excluded instances $\to$ omitted gradient $\to$ deviation}.

\textbf{Step 1: the omitted gradient.}\label{sec:inexact} Since $A_t$ and $P_t=D\setminus A_t$ partition the data, the TG-IES gradient is
\begin{equation}
\label{eq:inexact}
\underbrace{\hat g^{(t)}}_{\text{TG-IES update}}=\underbrace{g(\hat\theta^{(t)})}_{\text{full-data gradient}}-\underbrace{\omega_t(\hat\theta^{(t)})}_{\text{omitted gradient}},
\qquad\text{where}\quad
\omega_t(\theta):=\frac1N\sum_{i\in P_t}g_i(\theta).
\end{equation}
\textbf{Meaning.} At the TG-IES parameters, the selective update equals the full-data gradient minus the aggregate gradient of the excluded instances. We call this term the \emph{omitted-gradient residual} $\omega_t$. This identity is exact and does not require the excluded instances to have individually small gradients. Their gradients may be large but cancel, or individually small but add together. The stability question is therefore about their \emph{aggregate} gradient. Thus, TG-IES can be analyzed using the established framework of inexact gradient methods \citep{So2013,Sun2017,HallakLevy2024}. Note that \cref{eq:inexact} simply describes the mathematical effect of excluding instances; the algorithm itself never actually computes $\omega_t$.

\textbf{Step 2: the deviation.} Let $\theta^{(t)}$ (full data) and $\hat\theta^{(t)}$ (TG-IES) start from the same parameters at epoch $t_0$; these are the trajectories that the two procedures actually produce.

\begin{theorem}[Actual-trajectory divergence]
\label{thm:actual}
Under \cref{ass:smooth}, suppose $\|\omega_t(\hat\theta^{(t)})\|\le\rho_t$ for $t=t_0,\dots,t_0+H-1$. Then
\begin{equation}
\label{eq:actual}
\big\|\hat\theta^{(t_0+H)}-\theta^{(t_0+H)}\big\|
\le\sum_{s=0}^{H-1}\eta_{t_0+s}\,\rho_{t_0+s}\prod_{j=s+1}^{H-1}\big(1+\beta\eta_{t_0+j}\big).
\end{equation}
For a constant step size $\eta$ and $\rho_t\le\rho$, the right side is at most $\frac{\rho}{\beta}\big[(1+\beta\eta)^H-1\big]$.
\end{theorem}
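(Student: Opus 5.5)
We will track the error $e_t:=\hat\theta^{(t)}-\theta^{(t)}$, derive a one-step recursion for it, and then unroll that recursion over the horizon. Both trajectories start from the same parameters at epoch $t_0$, so $e_{t_0}=0$. The full-data trajectory updates by $\theta^{(t+1)}=\theta^{(t)}-\eta_t g(\theta^{(t)})$. By the exact identity \cref{eq:inexact}, the TG-IES trajectory updates by $\hat\theta^{(t+1)}=\hat\theta^{(t)}-\eta_t\big(g(\hat\theta^{(t)})-\omega_t(\hat\theta^{(t)})\big)$. Subtracting the two updates gives
\begin{equation*}
e_{t+1}=e_t-\eta_t\big(g(\hat\theta^{(t)})-g(\theta^{(t)})\big)+\eta_t\,\omega_t(\hat\theta^{(t)}).
\end{equation*}

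Next we bound each term. The triangle inequality and \cref{ass:smooth} give $\|g(\hat\theta^{(t)})-g(\theta^{(t)})\|\le\beta\|e_t\|$. The hypothesis gives $\|\omega_t(\hat\theta^{(t)})\|\le\rho_t$. Together these yield the scalar recursion
\begin{equation*}
\|e_{t+1}\|\le(1+\beta\eta_t)\|e_t\|+\eta_t\rho_t .
\end{equation*}
We use the crude factor $1+\beta\eta_t$ and do not try for a contraction. A contraction would need convexity, which is not assumed, so only Lipschitz continuity of $g$ is available.

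We then unroll the recursion by induction on $H$, a discrete Grönwall argument. The claim to prove is
\begin{equation*}
\|e_{t_0+H}\|\le\sum_{s=0}^{H-1}\eta_{t_0+s}\rho_{t_0+s}\prod_{j=s+1}^{H-1}(1+\beta\eta_{t_0+j}).
\end{equation*}
The base case $H=0$ is the empty sum, consistent with $e_{t_0}=0$. For the inductive step we multiply the bound at $H$ by $1+\beta\eta_{t_0+H}$ and add $\eta_{t_0+H}\rho_{t_0+H}$. This extends each product by one factor and appends the new summand, whose product is empty. This establishes \cref{eq:actual}.

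For the constant-step corollary we take $\eta_t\equiv\eta$ and $\rho_t\le\rho$. The sum is then at most $\eta\rho\sum_{s=0}^{H-1}(1+\beta\eta)^{H-1-s}$, a geometric series equal to $\eta\rho\,\frac{(1+\beta\eta)^H-1}{\beta\eta}=\frac{\rho}{\beta}\big[(1+\beta\eta)^H-1\big]$.

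We expect no step to be a genuine obstacle. The only point requiring care is the choice of where to evaluate $\omega_t$. Because the hypothesis bounds $\omega_t$ at the TG-IES iterate $\hat\theta^{(t)}$, we must apply smoothness to the full gradient $g$ between the two iterates and never to $\omega_t$ itself. The active/passive partition changes with $t$, so $\omega_t$ need not be Lipschitz in any uniform sense, and \cref{ass:smooth} applies only to $L$.
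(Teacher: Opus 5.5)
Your proposal is correct and follows the paper's proof essentially line for line. It uses the same error recursion $e_{t+1}=e_t-\eta_t\big(g(\hat\theta^{(t)})-g(\theta^{(t)})\big)+\eta_t\,\omega_t(\hat\theta^{(t)})$ from \cref{eq:inexact}, the same one-step bound $\|e_{t+1}\|\le(1+\beta\eta_t)\|e_t\|+\eta_t\rho_t$ from \cref{ass:smooth}, and the same unrolling and geometric-series evaluation for the constant-step case; your explicit induction (which uses $1+\beta\eta_t\ge0$) just spells out the unrolling that the paper states in one line.
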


\textbf{Meaning.} The sum accumulates the omitted-gradient errors over the $H$ selective steps, and the product accounts for how an earlier error is amplified as the two trajectories evolve (proof in \cref{app:multistep}). The key stability quantity is thus the \emph{aggregate} omitted gradient: large individual passive gradients can partly cancel, and many small ones can add up. With one passive instance, the bound reduces as $\beta\to0$ to our earlier single-removal certificate $H\eta\varepsilon/N$ (\cref{thm:safe}), which ignores how a gap feeds back through the gradient. 
\textbf{Limitation.} The factor $(1+\beta\eta)^H$ grows with $H$, so the bound is informative over short horizons (\cref{rem:contraction-direction}), and it assumes a bound $\rho_t$ that we do not measure.

\textbf{Step 3: descent.}\label{sec:descent} Staying close to full training is more than optimization needs; it suffices that each step still decreases the full loss.

\begin{theorem}[Descent under relative omission error]
\label{thm:descent}
Under \cref{ass:smooth}, if $\|\omega_t(\hat\theta^{(t)})\|\le\nu\,\|\hat g^{(t)}\|$ for some $\nu\in[0,1)$ and $\eta_t<2(1-\nu)/\beta$, then
\begin{equation}
\label{eq:descent}
L(\hat\theta^{(t+1)})\le L(\hat\theta^{(t)})-\eta_t\Big(1-\nu-\frac{\beta\eta_t}{2}\Big)\|\hat g^{(t)}\|^2.
\end{equation}
\end{theorem}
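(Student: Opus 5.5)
The plan is to run the standard descent lemma for a $\beta$-smooth function and then control the cross term using the exact decomposition \cref{eq:inexact}. By \cref{ass:smooth}, $L$ has a $\beta$-Lipschitz gradient. Applied to the step $\hat\theta^{(t+1)}-\hat\theta^{(t)}=-\eta_t\hat g^{(t)}$, this gives
\begin{equation*}
L(\hat\theta^{(t+1)})\le L(\hat\theta^{(t)})-\eta_t\big\langle g(\hat\theta^{(t)}),\hat g^{(t)}\big\rangle+\frac{\beta\eta_t^2}{2}\|\hat g^{(t)}\|^2 .
\end{equation*}
This is the usual quadratic upper bound, obtained by integrating the gradient along the segment and using Lipschitz continuity.

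The key step is to rewrite the full gradient in terms of the update actually taken. From \cref{eq:inexact}, $g(\hat\theta^{(t)})=\hat g^{(t)}+\omega_t(\hat\theta^{(t)})$, so
\begin{equation*}
\big\langle g(\hat\theta^{(t)}),\hat g^{(t)}\big\rangle=\|\hat g^{(t)}\|^2+\big\langle\omega_t(\hat\theta^{(t)}),\hat g^{(t)}\big\rangle .
\end{equation*}
By Cauchy--Schwarz and the relative-error hypothesis, the cross term is at least $-\|\omega_t\|\,\|\hat g^{(t)}\|\ge-\nu\|\hat g^{(t)}\|^2$. The inner product is therefore at least $(1-\nu)\|\hat g^{(t)}\|^2$.

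Substituting into the descent inequality gives
\begin{equation*}
L(\hat\theta^{(t+1)})\le L(\hat\theta^{(t)})-\eta_t\Big(1-\nu-\frac{\beta\eta_t}{2}\Big)\|\hat g^{(t)}\|^2,
\end{equation*}
which is \cref{eq:descent}. The step-size condition $\eta_t<2(1-\nu)/\beta$ is used only to ensure that the coefficient is positive, so that the bound expresses genuine descent.

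There is no real obstacle here. The one point to check is that the expansion is written in terms of the actual update $\hat g^{(t)}$ rather than the full gradient $g$; this is why the hypothesis is stated relative to $\|\hat g^{(t)}\|$. It also matters that \cref{ass:smooth} is stated for the averaged loss $L$, which is the function whose value we track.
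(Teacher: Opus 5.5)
Your proposal is correct and follows the paper's proof essentially step for step: you apply the descent lemma for the $\beta$-smooth $L$ to the step $-\eta_t\hat g^{(t)}$, rewrite $g(\hat\theta^{(t)})=\hat g^{(t)}+\omega_t(\hat\theta^{(t)})$, and bound the cross term by Cauchy--Schwarz to get $\langle g(\hat\theta^{(t)}),\hat g^{(t)}\rangle\ge(1-\nu)\|\hat g^{(t)}\|^2$. As in the paper, the step-size condition serves only to make the coefficient $1-\nu-\beta\eta_t/2$ positive.
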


\textbf{Meaning.} If the gradient that TG-IES leaves out is at most a fraction $\nu<1$ of the gradient it uses, every step decreases the full training loss, under a step-size limit that tightens as $\nu$ grows; $\nu=0$ gives the classical $\eta_t<2/\beta$ (\cref{app:multistep}). Whether $\nu<1$ holds in our runs is not measured.

\subsection{Connecting mastery to the residual}
\label{sec:bridge}

\textbf{Question.} Both theorems need a small omitted gradient, but the algorithm observes only $\hat S_i$. Does a small signal imply a small instance gradient? \textbf{Important distinction.} The next result is a \emph{sufficient theoretical route} under two further idealizations. It does not show that passive instances in deep networks satisfy them; \cref{sec:probe} tests this empirically.

\begin{assumption}[Per-instance smoothness and a shared solution]
\label{ass:smoothi}
Each $L_i$ is $\Lambda_i$-smooth on the neighborhood $U$ of \cref{ass:geometric}, and $L_i(\theta^*)=L^*$ at the \emph{same} $\theta^*$ for every $i$.
\end{assumption}

\begin{theorem}[Mastery bounds the instance gradient]
\label{thm:bridge}
Under \cref{ass:geometric,ass:smoothi} and the regularity condition of \cref{lem:descent} for $f=L_i$,
\begin{equation}
\label{eq:bridge}
\|g_i^{(t)}\|\le\frac{r_i}{1-r_i}\sqrt{2\Lambda_i\,\Delta^2L_i^{(t)}},
\quad\text{so}\quad
\Delta^2L_i^{(t)}<\delta\;\Rightarrow\;\|g_i^{(t)}\|<\varepsilon_i(\delta):=\frac{r_i}{1-r_i}\sqrt{2\Lambda_i\delta}.
\end{equation}
\end{theorem}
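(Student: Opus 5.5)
We prove the bound by combining the descent lemma for $f=L_i$ with the closed form of $\Delta^2 L_i^{(t)}$ under \cref{ass:geometric}.

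\textbf{Step 1: gradient versus excess loss.} \cref{lem:descent} applied to $f=L_i$, which is $\Lambda_i$-smooth on $U$ by \cref{ass:smoothi}, is taken to give the standard Polyak-type consequence of smoothness
\begin{equation*}
\|\nabla f(\theta)\|^2\le 2\Lambda_i\big(f(\theta)-\inf f\big).
\end{equation*}
The regularity condition of that lemma is what makes this valid: the step $\theta-\nabla f(\theta)/\Lambda_i$ must stay in $U$, and $L^*=L_i(\theta^*)$ must be the minimum of $L_i$ on $U$. The shared-solution part of \cref{ass:smoothi} supplies exactly this, giving $L_i\ge L^*$ near $\theta^*$. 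Hence
\begin{equation*}
\|g_i^{(t)}\|^2\le 2\Lambda_i\big(L_i^{(t)}-L^*\big)=2\Lambda_i c_i r_i^t .
\end{equation*}

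\textbf{Step 2: excess loss versus $\Delta^2 L_i$.} By \cref{lem:geomdiff} with $N=2$,
\begin{equation*}
\Delta^2 L_i^{(t)}=c_i r_i^{t-2}(1-r_i)^2>0 .
\end{equation*}
Differences of the constant $L^*$ vanish, so $\Delta^2$ of $L_i$ equals $\Delta^2$ of the excess loss. Therefore
\begin{equation*}
c_i r_i^t=\frac{r_i^2}{(1-r_i)^2}\,\Delta^2L_i^{(t)} .
\end{equation*}
Substituting into Step 1 and taking square roots gives
\begin{equation*}
\|g_i^{(t)}\|\le \frac{r_i}{1-r_i}\sqrt{2\Lambda_i\Delta^2L_i^{(t)}} .
\end{equation*}
This matches \cref{eq:bridge}. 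Positivity of $\Delta^2 L_i^{(t)}$ makes the square root well-defined and lets us drop the absolute value.

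\textbf{Step 3: the implication.} The map $x\mapsto\frac{r_i}{1-r_i}\sqrt{2\Lambda_i x}$ is strictly increasing on $[0,\infty)$. So $\Delta^2L_i^{(t)}<\delta$ gives $\|g_i^{(t)}\|<\varepsilon_i(\delta)$.

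\textbf{Main obstacle.} The delicate point is Step 1. The inequality $\|\nabla f\|^2\le 2\Lambda(f-\inf f)$ needs $\Lambda_i$-smoothness along the segment to $\theta-\nabla f/\Lambda_i$ and a lower bound $f\ge L^*$ at that point, neither of which is automatic on a local neighborhood $U$. We will state precisely what \cref{lem:descent}'s regularity condition guarantees: the gradient step stays in $U$, and $L_i\ge L^*$ on $U$, which holds because $\theta^*$ is a common minimizer. Without a shared $\theta^*$, $L_i^{(t)}-L^*$ need not control $\|g_i\|$, which is why \cref{ass:smoothi} is required.
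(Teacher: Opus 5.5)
Your proposal is correct and follows the paper's proof essentially line for line: \cref{lem:descent} gives $\|g_i^{(t)}\|^2\le 2\Lambda_i(L_i^{(t)}-L^*)=2\Lambda_i c_i r_i^t$, and \cref{lem:geomdiff} at $N=2$ rewrites $c_i r_i^t$ as $(r_i/(1-r_i))^2\Delta^2L_i^{(t)}$ before you take square roots. One small correction: you do not need to derive minimality of $\theta^*$ from the shared-solution part of \cref{ass:smoothi}, which only fixes $L_i(\theta^*)=L^*$ and does not assert minimality, because the theorem already assumes the regularity condition $L_i(\theta^*)\le L_i(\theta-\nabla L_i(\theta)/\Lambda_i)$ as a hypothesis.
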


\textbf{Meaning.} If every passive instance satisfies these assumptions along the TG-IES trajectory and $\varepsilon_i(\delta)\le\bar\varepsilon(\delta)$, then, by the triangle inequality,
\begin{equation}
\label{eq:aggregate}
\|\omega_t\|\;\le\;\frac1N\sum_{i\in P_t}\|g_i\|\;<\;\frac1N\sum_{i\in P_t}\varepsilon_i(\delta)\;\le\;\frac{|P_t|}{N}\,\bar\varepsilon(\delta),
\end{equation}
so $\rho_t=(|P_t|/N)\,\bar\varepsilon(\delta)$ links the threshold to \cref{thm:actual} (under noise, use $\delta+k\sqrt6\,\sigma$, except with probability $2e^{-k^2/2}$; \cref{cor:bridgeprob}). \textbf{Limitation.} $r_i$ and $\Lambda_i$ are not estimated, a shared solution is not established for deep networks, and $\varepsilon_i(\delta)$ is uninformative as $r_i\to1$. Passing instances do have $30$--$33\times$ smaller mean gradient norms (\cref{app:extra-figs}), but the per-instance bound often fails (\cref{sec:probe}), so we state stability through the aggregate residual.

\section{Experimental Setup}
\label{sec:setup}

\textbf{Methods.} Four methods share identical code and optimizer settings: \textbf{Baseline} (no removal); \textbf{IES (shipped)}, the released code (smoothed signal, train/eval mismatch, $1/|A_t|$ scaling); \textbf{IES (literal)}, our faithful implementation of the stated IES algorithm \citep{yuan2025instance} (unsmoothed signal, consistent passes, $1/|A_t|$ scaling); and \textbf{TG-IES}. We also run InfoBatch \citep{qin2024infobatch}, BLS-InfoBatch \citep{zhou2026bls}, AlignPrune \citep{qin2026alignprune}, and Selective Backprop \citep{jiang2019accelerating} (\cref{app:newbaselines}).

\textbf{Data, models, and training.} We use CIFAR-10/100 \citep{krizhevsky2009learning} with ResNet-18 \citep{he2016deep} (primary), VGG-16 \citep{simonyan2015very}, DenseNet-121 \citep{huang2017densely}, and a CIFAR-native ViT-Tiny \citep{dosovitskiy2021image} trained from scratch (\cref{app:vit}). CNNs use SGD (momentum $0.9$, weight decay $5\times10^{-4}$, batch size $64$, $200$ epochs) with exponential decay ($\eta_0=0.1$, $\gamma=0.96$ per epoch) or a fixed rate ($\eta=10^{-3}$), plus Adam \citep{kingma2015adam} and single-seed linear, step, and cosine schedules. The ViT uses AdamW with warmup and cosine decay.

\textbf{One calibration policy.} TG-IES derives $\delta$ and $K^*$ from each run's own $\hat\sigma$ and $\varepsilon_0$, with constants fixed for all eleven settings ($p=0.3$, $\tau=0.02$, $\beta_{\mathrm{test}}=1$, the 20th percentile for $\varepsilon_0$, $K_{\max}=15$, one recalibration at epoch 30). Unlike the dataset-specific multiplier in the released IES code, none is retuned, so we call this \emph{noise-adaptive threshold calibration}, not a parameter-free method (\cref{sec:sensitivity}).

\textbf{Metrics.} Accuracy is the best test accuracy over $200$ epochs \citep{yuan2025instance}. Backprop savings are $100\times(1-\text{backpropagated instances}/(N\times\text{epochs}))$. Main settings use seeds $\{0,1,2\}$ and report mean $\pm$ std, with paired statistics in \cref{app:stats}. All runs use one RTX 5090 GPU.

\section{Results}
\label{sec:results}

\begin{table}[!htb]
\centering
\scriptsize
\begin{minipage}[t]{0.45\linewidth}
\centering
\captionof{table}{Primary setting (ResNet-18, CIFAR-10, SGD, exponential decay), 3 seeds.}
\label{tab:primary}
\setlength{\tabcolsep}{3pt}
\begin{tabular}{lcc}
\toprule
Method & Acc.\ (\%) & Saved (\%) \\
\midrule
Baseline & $95.03 \pm 0.13$ & $0.03 \pm 0.00$ \\
IES (shipped) & $94.71 \pm 0.09$ & $45.66 \pm 0.38$ \\
IES (literal) & $94.85 \pm 0.17$ & $49.02 \pm 0.44$ \\
InfoBatch & $94.32\pm0.12$ & $38.91\pm0.01$ \\
BLS-InfoBatch & $94.68\pm0.18$ & $35.84\pm0.09$ \\
AlignPrune & $94.48\pm0.05$ & $20.83\pm0.07$ \\
\win{TG-IES (ours)} & $94.85 \pm 0.28$ & \win{$\mathbf{53.63 \pm 0.30}$} \\
\bottomrule
\end{tabular}
\end{minipage}\hfill
\begin{minipage}[t]{0.53\linewidth}
\centering
\captionof{table}{Component ablation (primary setting, seed 0). Starred: calibrated; others keep IES (literal)'s values. Restricted TG-IES variants, not competitors.}
\label{tab:ablation}
\setlength{\tabcolsep}{3pt}
\begin{tabular}{lcccc}
\toprule
Calibrated & Acc.\ (\%) & Saved (\%) & $\delta$ & $K^*$ \\
\midrule
Neither & 94.92 & 48.59 & 0.001 & 1 \\
$\delta$ only & 94.68 & \textbf{66.62} & 0.0077* & 1 \\
$K^*$ only & 95.04 & 3.41 & 0.001 & 15* \\
Both (TG-IES) & 94.61 & 53.48 & 0.0160* & 15* \\
\bottomrule
\end{tabular}
\end{minipage}
\end{table}

\textbf{Primary setting.}
TG-IES achieves the highest instance-level backward savings among the compared methods while matching IES (literal)'s mean accuracy. The paired seed results have the same savings advantage in all three seeds (\cref{app:stats}). The cumulative trajectories in \cref{fig:backprop} show that the advantage emerges mainly in the latter half of training.

\begin{figure}[!htb]
\centering
\begin{minipage}[t]{0.48\linewidth}
\centering
\includegraphics[width=0.85\linewidth]{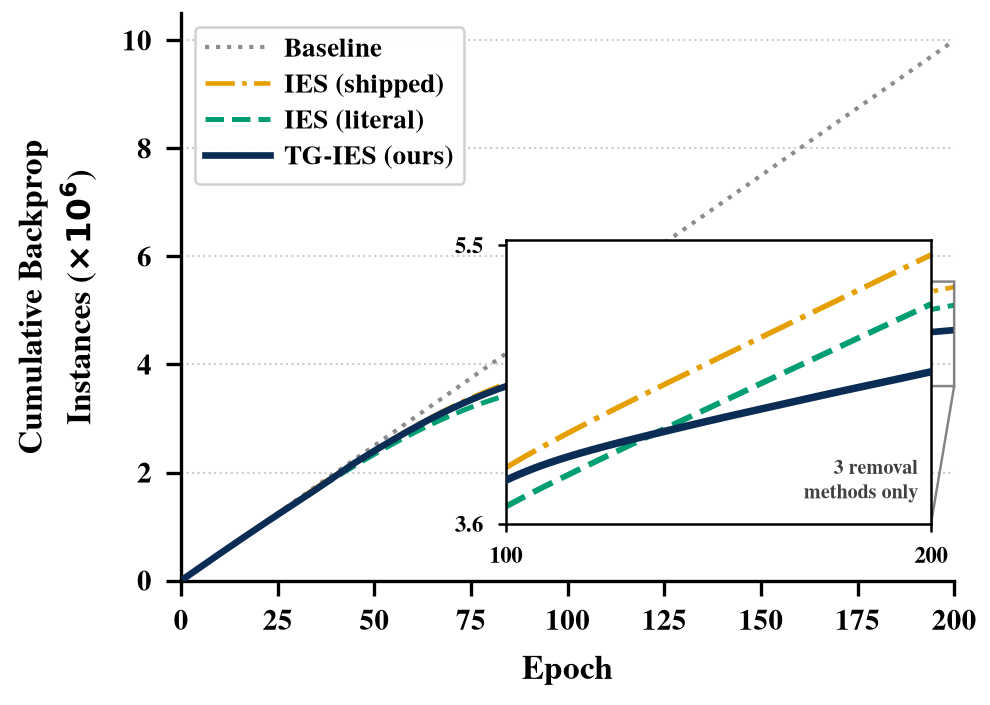}
\caption{Cumulative backpropagation instances for ResNet-18/CIFAR-10. TG-IES uses fewer backward updates than both IES variants late in training while maintaining similar test accuracy. Inset: instance-dependent methods over epochs 100--200.}
\label{fig:backprop}
\end{minipage}\hfill
\begin{minipage}[t]{0.48\linewidth}
\centering
\includegraphics[width=0.85\linewidth]{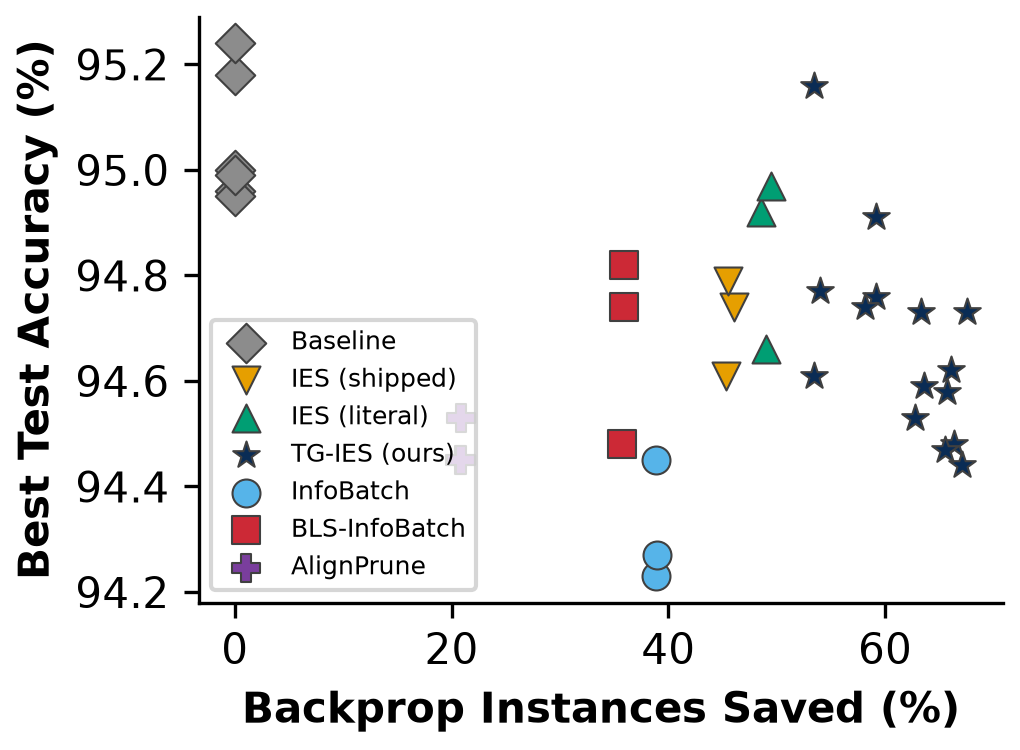}
\caption{Best test accuracy vs.\ backpropagation instances saved across the primary and additional baseline comparisons. TG-IES achieves substantial savings while remaining closer to full-training accuracy than more aggressive pruning methods.}
\label{fig:newbaselines}
\end{minipage}
\end{figure}

\textbf{What the savings mean in compute.}\label{sec:flops} 
Every IES-family method still forwards every instance, and a backward pass costs about twice a forward pass (measured: $1.995$--$1.998\times$), so FLOP savings are $\tfrac23$ of backprop savings: $\mathbf{35.7\%}$ for TG-IES, versus $30.4\%$ and $32.7\%$ for IES (shipped) and IES (literal). End-to-end (\cref{app:efficiency}), TG-IES is $25.5\%$ faster than full training and uses an estimated $39\%$ less energy but has the highest peak GPU memory ($1454$ vs.\ $1162$ MiB), from its one-time recalibration.

\begin{table}[!htb]
\caption{All three-seed settings, mean $\pm$ std ($^\dagger$primary). IES (literal), the higher-saving IES variant in every setting, is shown; all runs are in \cref{tab:fullresults}.}
\label{tab:overview}
\centering
\scriptsize
\setlength{\tabcolsep}{4pt}
\begin{tabular}{lccccc}
\toprule
& Baseline & \multicolumn{2}{c}{IES (literal)} & \multicolumn{2}{c}{TG-IES (ours)} \\
\cmidrule(lr){2-2}\cmidrule(lr){3-4}\cmidrule(lr){5-6}
Setting & Acc.\ (\%) & Acc.\ (\%) & Saved (\%) & Acc.\ (\%) & Saved (\%) \\
\midrule
\multicolumn{6}{l}{\emph{SGD with exponential learning-rate decay}} \\
ResNet-18 / CIFAR-10$^\dagger$ & $95.03\pm0.13$ & $94.85\pm0.17$ & $49.02\pm0.44$ & $94.85\pm0.28$ & \win{$53.63\pm0.30$} \\
VGG-16 / CIFAR-10 & $93.30\pm0.12$ & $93.15\pm0.17$ & $41.02\pm1.12$ & $93.11\pm0.14$ & \win{$54.43\pm0.12$} \\
DenseNet-121 / CIFAR-10 & $95.36\pm0.14$ & $95.24\pm0.02$ & $51.60\pm0.89$ & $95.28\pm0.09$ & \win{$54.53\pm0.66$} \\
ResNet-18 / CIFAR-100 & $76.88\pm0.41$ & $76.78\pm0.40$ & $18.49\pm0.43$ & $76.01\pm0.08$ & \win{$57.76\pm0.04$} \\
\midrule
\multicolumn{6}{l}{\emph{No learning-rate decay (ResNet-18 / CIFAR-10)}} \\
SGD, fixed rate & $92.21\pm0.08$ & $91.77\pm0.11$ & $49.42\pm0.17$ & $91.91\pm0.18$ & $29.27\pm0.30$ \\
Adam & $92.91\pm0.06$ & $92.42\pm0.10$ & $60.09\pm0.29$ & $92.44\pm0.10$ & $30.18\pm0.62$ \\
\midrule
\multicolumn{6}{l}{\emph{Vision Transformer (AdamW, warmup and cosine decay)}} \\
ViT-Tiny / CIFAR-10 & $84.07\pm0.35$ & $83.86\pm0.48$ & $16.68\pm0.32$ & $83.71\pm0.47$ & \win{$25.52\pm0.26$} \\
ViT-Tiny / CIFAR-100 & $59.25\pm0.24$ & $58.77\pm0.24$ & $12.51\pm0.21$ & $58.56\pm0.47$ & \win{$26.28\pm0.14$} \\
\bottomrule
\end{tabular}
\end{table}

\Cref{fig:newbaselines} summarizes the accuracy--compute trade-off across the primary and additional baseline comparisons. It provides a visual summary of how TG-IES compares with the IES variants and other dynamic
pruning methods.

\textbf{Where the gains hold, and where they do not.}\label{sec:settings} \Cref{tab:overview} shows two regimes. \emph{With decaying learning rates, TG-IES saves more than both IES variants:} $53.6$--$57.8\%$ in all four exponential-decay CNN settings, within $0.2$ accuracy points of full training on CIFAR-10. The largest gain is on CIFAR-100 ($57.76\%$ vs.\ $18.49\%$ for IES (literal)), which also has the largest CNN accuracy cost ($0.87$ points).

\emph{Without decay, IES saves more:} TG-IES saves $29.27\%$ vs.\ $49.42\%$ with a fixed rate and $30.18\%$ vs.\ $60.09\%$ with Adam, at accuracy within $0.15$ points of IES (literal). There the calibrated threshold is much smaller ($\delta=0.00247$ and $0.000257$ at epoch 31, vs.\ $0.01597$ with decay), so fewer instances complete $K^*$ passing checks and the final excluded fraction falls from $87.4\%$ to $42.0\%$ and $45.6\%$ (\cref{sec:lrmechanism} examines why; \cref{app:zeroshot} adds three schedules). The advantage is thus schedule-specific.

\textbf{Beyond CNNs.} On ViT-Tiny, the same constants save $25.5$--$26.3\%$, $1.5$--$2.1\times$ IES (literal), with a larger accuracy cost (up to $0.69$ points) and the widest seed spread of any method (\cref{app:vit}): the mechanism transfers, the magnitude does not.

\textbf{Matched compute.} At a matched budget (\cref{app:equalcompute}), InfoBatch and BLS-InfoBatch, which skip both passes, lead TG-IES by $2.5$--$4.3$ accuracy points at $20$--$30\%$ of full compute; the gap narrows to $0.1$--$0.8$ points by $40\%$ and reverses on CIFAR-100. Selective Backprop skips $66.72\%$ of backpropagation but loses $1.72$ accuracy points. TG-IES is one point on a compute--accuracy trade-off, and below about $30\%$ of full compute, InfoBatch-family methods reach higher accuracy in these runs.

\section{What Drives the Savings, and How Stable Is Exclusion?}
\label{sec:generalization}

\textbf{Component ablation.}\label{sec:ablation} \Cref{tab:ablation} calibrates $\delta$ and $K^*$ separately, keeping the other at its IES (literal) value. Calibrating $\delta$ alone raises savings from $48.6\%$ to $66.6\%$, exceeding full TG-IES ($53.5\%$) at $0.24$ points lower accuracy, whereas $K^*=15$ with IES's strict $\delta=0.001$ saves only $3.4\%$ because noise resets the counter before 15 consecutive passes. Thus, $\delta$ drives the compute benefit, while $K^*$ controls exclusion conservativeness; both must be calibrated to the same noise.

\textbf{Patience sweep and trajectory divergence.}\label{sec:kstartraj} Across $K^*\in{1,2,4,8,15,30}$ (primary setting, 3 seeds), savings fall from $66.98\%$ to $44.13\%$ as accuracy rises from $94.55\%$ to $94.87\%$ (baseline $95.08\%$; \cref{app:kstartraj}). The parameter distance $|\hat\theta^{(t)}-\theta^{(t)}_{\mathrm{baseline}}|$ to a shared-seed full-training run decreases for every $K^*$, from about $54$ at epoch 20 to $29$--$37$ at epoch 200, then plateaus. Larger $K^*$ consistently gives smaller distance, as \cref{thm:actual} suggests, because longer patience keeps the passive set smaller. This is consistent with a bounded aggregate residual but does not establish it; the norm-normalized distance still rises before plateauing.

\textbf{Does the per-instance gradient condition hold?}\label{sec:probe} Every 20 epochs, a non-intrusive probe computes the true gradient of 300 random passive instances (primary setting, 3 seeds). Across 27 probes, $47.4\%$ exceed $\varepsilon_0$ on average, with no downward trend; the median remains near $\varepsilon_0$, while a heavy tail makes the mean $2$--$10\times$ higher (\cref{app:reactprobe}). Thus, the per-instance floor underlying $K_{\mathrm{safe}}$ is too strong empirically, motivating the aggregate residual $\omega_t$ for stability. Since the probe does not measure $\omega_t$, the condition of \cref{thm:actual} remains an assumption.

\section{Limitations}
\label{sec:limitations}

\textbf{Conditional stability.} \Cref{thm:actual,thm:descent} assume a bound on $\omega_t$ along the actual trajectory that the implementation neither estimates nor controls, so they identify what governs stability without certifying TG-IES globally. With about half of the sampled passive instances above the per-instance floor, $K_{\mathrm{safe}}$ only initializes $K^*$; monitoring $\omega_t$ online is future work. \Cref{thm:bridge} also needs per-instance smoothness and a shared interpolating solution, not established for deep networks. \textbf{Scope.} The theory assumes full-batch gradient descent, unlike our SGD, Adam, and AdamW runs; without learning-rate decay, IES saves more; the evidence covers CIFAR-scale data and one small ViT; and most of the savings advantage comes from $\delta$, not $K^*$.

\section{Conclusion}
\label{sec:conclusion}
We study instance-dependent early stopping using a second-order per-instance loss signal. Under explicit assumptions, its normalized differences remove loss-scale effects; $N=2$ minimizes the MSE in a defined SNR regime; and the signal captures per-instance progress and curvature. We calibrate the mastery threshold from the observed noise scale and model selective updates as full-data gradient descent plus an omitted-gradient residual, yielding conditional trajectory and descent bounds. Empirically, TG-IES cuts instance-level backward passes in the main setting, with gains over IES depending on the learning-rate schedule. TG-IES is thus a noise-adaptive, theory-grounded IES method with stability guarantees conditional on the omitted-gradient residual.

\section*{Impact Statement}
This paper advances compute-efficient training; we identify no societal consequences requiring specific highlighting.

\section*{AI Use Statement}
In this paper we used generative AI tools (ChatGPT, Quillbot, and claude) to assist with our research and writing. The key theoretical frameworks, mathematical claims, algorithmic designs, and codebases were developed solely by the authors. We used Claude exclusively for code generation help. Additionally, we sought assistance from ChatGPT and Quillbot to enhance language, rearrange for readability, revise author-drafted parts, and format related literature. We have rigorously reviewed all AI-assisted work. All mathematical formulations and proofs were independently derived and verified step-by-step by the authors to ensure correctness and theoretical soundness. Likewise, any AI-assisted code was manually reviewed, integrated, and empirically tested. Generative AI was not used to run experiments or conduct data analysis, and all reported numbers come directly from our own experiments. We take responsibility for the final content of this work, including text, claims, or artifacts produced with the aid of generative AI.

\bibliography{iclr2027_conference}
\bibliographystyle{iclr2027_conference}

\newpage
\appendix
\onecolumn

\section{Extended Related Work}
\label{app:related}

\textbf{Why training-time compute matters.} Training compute for state-of-the-art models has grown roughly $10\times$ every year since 2012 and continues to grow \citep{sevilla2022compute}, with a directly proportional energy and carbon cost \citep{patterson2022carbon}. \citet{bartoldson2023computeefficient}'s survey of algorithmic (rather than hardware) efficiency gains frames exactly the category of method this paper contributes to: a technique reducing the number of forward/backward passes a fixed training recipe performs, orthogonal to and stackable with hardware- or precision-level efficiency gains. \Cref{sec:flops}'s FLOP-normalized accounting of TG-IES's savings is this paper's own direct, measured contribution to that question.

\textbf{Static data pruning and difficulty scoring.} \citet{toneva2019empirical} track how often an example is learned then forgotten across epochs and show unforgettable examples can be removed with little accuracy loss. \citet{paul2021deep} score examples early in training with the gradient-norm-based EL2N score and prune a static subset before the rest of training. \citet{swayamdipta2020dataset} map examples along confidence and variability axes computed from training dynamics (``dataset cartography''). \citet{coleman2020selection} select a subset using scores from a small, cheap proxy model rather than the target model itself. \citet{sorscher2022beyond} show, empirically and via a statistical-mechanics analysis of a toy model, that a good pruning metric can convert the usual power-law scaling of test error with dataset size into exponential scaling; \citet{zheng2023coverage} (CCS) show that pruning to the highest-difficulty examples alone hurts distributional coverage at high pruning rates and instead balance importance against coverage explicitly; \citet{maharana2024d2} (D2 Pruning) use a message-passing formulation over a similarity graph to balance the same trade-off. All of these commit to a fixed subset before or shortly after training starts; IES and TG-IES instead make a per-instance, per-epoch decision using the current trajectory.

\textbf{Dynamic and online example selection.} \citet{loshchilov2015online} rank minibatches online by loss to bias sampling toward harder examples. \citet{katharopoulos2018not} use importance sampling to reduce gradient variance, building on \citet{needell2014stochastic}. \citet{jiang2019accelerating} (Selective-Backprop) skip backpropagation for examples with low relative loss via a probabilistic accept/reject rule, close in spirit to IES but formalized on a different argument (a probabilistic threshold on the loss distribution, versus a finite-difference criterion on the loss trajectory), with no theoretical treatment on either side. \citet{mindermann2022prioritized} (RHO-LOSS) select points simultaneously learnable, worth learning, and not yet learnt via a holdout-model criterion; \citet{bankes2024reducr} (REDUCR) extend this with class-priority reweighting for robustness, orthogonal to the calibration question here. \citet{qin2024infobatch} (InfoBatch) prune examples below the epoch-mean loss with probability $r$, rescaling kept low-score instances' gradients by $1/(1-r)$, annealing back to the full dataset for the final epochs; unlike IES/TG-IES it skips both forward and backward on a pruned example. \citet{qin2026alignprune} (AlignPrune) replace loss-magnitude ranking with a Dynamic Alignment Score (correlation between an example's loss trajectory and a reference trajectory) for label-noise robustness, a different objective than mastery-based removal; \citet{nagaraj2025coresets} use a related trajectory-correlation signal for one-shot coreset selection against a held-out reference, with convergence guarantees for the resulting subset -- the closest published criterion to IES/TG-IES's own signal, differing in using a held-out reference rather than a self-referential finite difference, and selecting once rather than every epoch. \citet{zhou2026bls} (BLS) use an EMA of loss as a cheap, signal-processing alternative to TG-IES's noise-calibrated $\delta$ for the same denoising problem. Concurrently, \citet{jin2026orderdp} (OrderDP) give a formal guarantee for dynamic pruning via an unbiased ranking surrogate with a lossless-convergence proof, a different mechanism than the conditional, residual-based stability analysis here -- two different answers to the same open question rather than competitors on the same construction. \citet{zhu2026dppo} address an analogous unbiased-pruning problem for group-based RL policy optimization (GRPO rollout pruning for LLM reasoning), noted for completeness; its setting (sequence-level RL) is different enough that we do not attempt a direct comparison. The same question -- what to spend a gradient update on -- is asked one level up, over pretraining-data domains, in LLM pretraining \citep{xie2023doremi}, a parallel literature at a coarser granularity. \citet{wen2025grades} (GradES) stops updating individual transformer projection matrices once their gradient norm falls below a threshold, a per-parameter-group decision complementary to, rather than competing with, TG-IES's per-instance decision (\cref{app:vit}); we are not aware of a direct empirical comparison in either direction.

\textbf{Curriculum learning.} \citet{bengio2009curriculum} order examples easy-to-hard over training; \citet{soviany2022curriculum} survey the resulting literature. This is complementary to IES/TG-IES: curriculum learning changes when an example enters training, while IES/TG-IES changes when it leaves.

\textbf{Direct predecessor.} \citet{yuan2025instance} is the paper this work theoretically grounds; \cref{sec:intro} details the five gaps closed.

\section{Extended Discussion of Assumptions}
\label{app:assumptions-extended}

\begin{assumption}[Uniform bounds on $c_i$]
\label{ass:cbounds}
There exist $0 < c_{\min} \le c_{\max} < \infty$ with $c_i \in [c_{\min},c_{\max}]$ for all $i$ (\cref{ass:geometric}'s amplitude).
\end{assumption}

\begin{assumption}[Bounded third derivatives]
\label{ass:thirdderiv}
There exists $M_3<\infty$ and a neighborhood $U$ of the GD trajectory such that for all $i$, all $\theta\in U$, and unit vectors $u,v,w$: $|\nabla^3 L_i(\theta)[u,v,w]| \le M_3$.
\end{assumption}

\begin{remark}[Why $\beta$-smoothness, and where it is used]
\Cref{ass:smooth} caps how fast the full gradient can change as parameters move. It is the only regularity assumption used by the actual-trajectory results (\cref{thm:actual,thm:descent}), where it turns a per-step gradient error into a multi-step parameter gap. Because smoothness alone also allows a gap to be amplified from step to step, the resulting bound carries the factor $(1+\beta\eta)^H$; \cref{rem:contraction-direction} discusses the additional contraction condition that would remove it.
\end{remark}

\begin{remark}[Scope of the geometric-trajectory assumption]
\Cref{ass:geometric} is a local, near-convergence approximation, not a global description of training: loss trajectories in practice are non-monotonic and noisy, and $r_i$ varies over time. \Cref{sec:level2}'s quartile analysis finds it a better approximation in the second half of training than the first.
\end{remark}

\begin{remark}[Noise-model realism]
\Cref{ass:noise} is an idealization: SGD revisits overlapping minibatches, correlating noise across nearby $t$; \cref{ass:subgaussian} may also be violated, since deep-learning gradient noise can be heavy-tailed. Under correlated or heavy-tailed noise, \cref{thm:bv}'s variance formula becomes a lower bound and \cref{cor:threshold}'s tail bound may be optimistic. \Cref{sec:noise-validation} tests time-independence directly and finds the measured autocorrelation small and consistent with a detrending-method artifact rather than real correlation, correcting \cref{thm:bv} at $N{=}2$ by under $1.3\%$ even under the conservative reading.
\end{remark}

\begin{remark}[Why small rate dispersion]
$r_i$ is instance $i$'s per-epoch decay rate; \cref{ass:dispersion} says different instances' rates do not spread too far from a common $\bar r$, letting \cref{prop:cv} treat $\CV(X(2))$ as a small perturbation of $\CV(X(1))$ via a first-order Taylor argument valid only when the spread is genuinely small. \Cref{app:counterexample} shows explicitly what breaks when it is not.
\end{remark}

\begin{remark}[Why bounded $c_i$]
$c_i$ is instance $i$'s excess-loss amplitude; bounding it away from $0$ and $\infty$ keeps $\CV(\cdot)$ well-defined and the perturbation constant in \cref{prop:cv} finite, rather than dominated by one pathological instance.
\end{remark}

\begin{remark}[Why bounded third derivatives]
This bounds how quickly curvature itself can change, keeping \cref{thm:curvature}'s Taylor remainder genuinely smaller than the $O(\eta^2)$ curvature term it isolates; without it, ``curvature is $O(\eta^2)$, drift is $O(\eta)$'' would not be a controlled statement.
\end{remark}

\begin{remark}[An unused assumption, kept for future work]
\label{ass:envelope-app}
A bounded gradient envelope on a window ($G := \sup_t\|g^{(t)}\|<\infty$) is a natural companion to $\beta$-smoothness for a future contraction argument bounding trajectory divergence over many steps (\cref{rem:contraction-direction}), but no theorem, corollary, or proof in this paper's current form invokes it. We name it here rather than silently omit it.
\end{remark}

\section{Full Proofs}
\label{app:proofs}

This appendix proves the results of \cref{sec:theory,sec:bridge}. Four statements moved out of the main text for space (\cref{lem:geomdiff,prop:cv,lem:descent,cor:bridgeprob}) are given here next to their proofs. The actual-trajectory results are proved in \cref{app:multistep}, and the single-removal certificate in \cref{app:single-removal}.

\begin{lemma}[Finite differences of a geometric trajectory]
\label{lem:geomdiff}
Under \cref{ass:geometric}, $\Delta^N(L_i^{(t)}-L^*)=c_i\,r_i^{t-N}(r_i-1)^N$.
\end{lemma}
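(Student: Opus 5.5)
The plan is induction on $N$, using the backward difference operator $\Delta$ from \cref{tab:notation}, so that $\Delta x^{(t)} = x^{(t)} - x^{(t-1)}$ and $\Delta^N = \Delta(\Delta^{N-1})$. Write $e^{(t)} := L_i^{(t)} - L^* = c_i r_i^t$ by \cref{ass:geometric}. Since $L^*$ is constant in $t$, every difference of order $N \ge 1$ annihilates it. Hence $\Delta^N(L_i^{(t)} - L^*) = \Delta^N L_i^{(t)} = \Delta^N e^{(t)}$, and it suffices to compute differences of $e^{(t)}$.

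\textbf{Base case} ($N=1$). We have $\Delta e^{(t)} = c_i r_i^t - c_i r_i^{t-1} = c_i r_i^{t-1}(r_i - 1)$, which matches the claim. The case $N=0$ also holds trivially, because $c_i r_i^{t}(r_i-1)^0 = e^{(t)}$. This makes the formula consistent with the convention $\Delta^0 = \mathrm{id}$ that \cref{thm:ratio} uses in its denominator.

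\textbf{Inductive step.} Assume $\Delta^{N-1} e^{(t)} = c_i r_i^{t-N+1}(r_i-1)^{N-1}$ for every $t$ at which the required earlier epochs exist. Applying $\Delta$ once more gives
\[
\Delta^N e^{(t)} = c_i (r_i-1)^{N-1}\bigl(r_i^{t-N+1} - r_i^{t-N}\bigr) = c_i r_i^{t-N}(r_i-1)^N.
\]
The key point is that $\Delta$ acts on the geometric sequence $r^t$ as multiplication by the constant $(1 - r^{-1}) = (r-1)/r$. This shifts the exponent down by one and contributes one factor of $(r_i-1)$.

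\textbf{Cross-check.} The closed form can also be verified directly from the binomial expansion $\Delta^N x^{(t)} = \sum_{k=0}^N \binom{N}{k}(-1)^k x^{(t-k)}$. Factoring out $c_i r_i^{t-N}$ leaves $\sum_k \binom{N}{k}(-1)^k r_i^{N-k} = (r_i-1)^N$ by the binomial theorem.

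\textbf{Main obstacle.} The algebra here is routine, so the only real care is bookkeeping. First, the geometric form must hold at every epoch $t, t-1, \dots, t-N$ that the difference touches. This means all of these epochs must lie in the near-convergence window of \cref{ass:geometric}, which I would state explicitly as a standing condition. Second, the claim concerns noise-free losses; the observed $\tilde L_i$ is treated separately in \cref{thm:bv}.
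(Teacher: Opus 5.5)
Your proof is correct and follows the paper's argument: induction on $N$, with the inductive step $\Delta^{N-1}e^{(t)}-\Delta^{N-1}e^{(t-1)}$ factored exactly as in the paper, which uses $N=0$ as its base case. Your binomial-expansion cross-check and your remark that the epochs $t,t-1,\dots,t-N$ must all lie in the window of \cref{ass:geometric} are sound additions that the paper leaves implicit.
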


\begin{proof}[Proof of \cref{lem:geomdiff}]
Induction on $N$. \emph{Base case} ($N=0$): $\Delta^0(L_i^{(t)}-L^*) = c_i r_i^t$. \emph{Inductive step.} Assume $\Delta^{N-1}(L_i^{(t)}-L^*) = c_i r_i^{t-(N-1)}(r_i-1)^{N-1}$. Then
\begin{align*}
\Delta^N(L_i^{(t)}-L^*) &= \Delta^{N-1}(L_i^{(t)}-L^*) - \Delta^{N-1}(L_i^{(t-1)}-L^*) \\
&= c_i\, r_i^{t-N+1}(r_i-1)^{N-1} - c_i\, r_i^{t-N}(r_i-1)^{N-1} \\
&= c_i(r_i-1)^{N-1}\big(r_i^{t-N+1}-r_i^{t-N}\big) = c_i\,r_i^{t-N}(r_i-1)^N,
\end{align*}
which is the claim for $N$.
\end{proof}

\begin{proof}[Proof of \cref{thm:ratio}]
By \cref{lem:geomdiff}, $\Delta^N(L_i^{(t)}-L^*) = c_i\, r_i^{t-N}(r_i-1)^N$ and $\Delta^{N-1}(L_i^{(t)}-L^*) = c_i\, r_i^{t-N+1}(r_i-1)^{N-1}$. Dividing,
\[
\frac{\Delta^N(L_i^{(t)}-L^*)}{\Delta^{N-1}(L_i^{(t)}-L^*)} = \frac{r_i^{t-N}}{r_i^{t-N+1}}\cdot(r_i-1) = r_i^{-1}(r_i-1) = \frac{r_i-1}{r_i},
\]
where $c_i$ cancels; the denominator is nonzero whenever $c_i>0$, $r_i\in(0,1)$, $N\ge1$.
\end{proof}

\begin{proposition}[CV closeness under small rate dispersion]
\label{prop:cv}
Under \cref{ass:geometric,ass:dispersion,ass:cbounds}, let $X_i(N) := |c_i r_i^{t-N}(1-r_i)^N|$ for $N\in\{1,2\}$ at fixed $t$, and let $\CV(X):=\sqrt{\Var(X)}/\mathbb E[X]$ across instances $I\sim\mathrm{Unif}\{1,\dots,N\}$. There is a finite constant $C(t,\bar r,c_{\min},c_{\max})$ (explicit form in \cref{app:constants}) such that, for all sufficiently small $\delta_r$,
$\big|\CV(X(2)) - \CV(X(1))\big| \le C(t,\bar r,c_{\min},c_{\max})\cdot\delta_r$.
\end{proposition}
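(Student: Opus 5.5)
The plan is to write $X_i(2)=X_i(1)\cdot\phi(r_i)$, where $\phi(r):=(1-r)/r$. This follows from \cref{lem:geomdiff}: $X_i(2)=c_ir_i^{t-2}(1-r_i)^2$ and $X_i(1)=c_ir_i^{t-1}(1-r_i)$, so their ratio is exactly $(1-r_i)/r_i$. This is the absolute-value form of \cref{thm:ratio}.

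The key observation is that $\CV$ is scale invariant. Hence $\CV(\bar\phi\,X(1))=\CV(X(1))$ for the constant $\bar\phi:=\phi(\bar r)$. Write
\[
X_i(2)=\bar\phi\,X_i(1)\,(1+u_i),\qquad u_i:=\frac{\phi(r_i)-\phi(\bar r)}{\phi(\bar r)}.
\]
Since $\phi'(r)=-1/r^2$, the mean value theorem on $[\bar r-\delta_r,\bar r+\delta_r]\subset(0,1)$ gives
\[
|u_i|\le \frac{\delta_r}{(\bar r-\delta_r)^2\,\phi(\bar r)}=:\kappa\,\delta_r .
\]
This holds once $\delta_r\le\bar r/2$, which is the meaning of ``sufficiently small'' here. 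So the difference of CVs reduces to comparing $\CV(Y)$ with $\CV(Y(1+u))$, where $Y:=X(1)$ and the perturbation satisfies $\|u\|_\infty\le\kappa\delta_r$.

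Next, bound the moments of $Y$. Using \cref{ass:cbounds,ass:dispersion}, $Y_i=c_ir_i^{t-1}(1-r_i)$ lies in a fixed interval $[y_{\min},y_{\max}]$ with
\[
y_{\min}\ge c_{\min}(\bar r-\delta_r)^{t-1}(1-\bar r-\delta_r)>0,\qquad y_{\max}\le c_{\max}\ \text{(since } r^{t-1}(1-r)\le1\text{)}.
\]
With $\delta_r\le\min(\bar r,1-\bar r)/2$, both can be replaced by $\delta_r$-free quantities depending only on $(t,\bar r,c_{\min},c_{\max})$.

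Set $Z:=Y(1+u)$. Then $|\mathbb E Z-\mathbb E Y|\le\kappa\delta_r\,\mathbb E Y$. Also $\|Z-Y\|_{L^2}\le\kappa\delta_r\,y_{\max}$. For the standard deviation, write $\mathrm{sd}(Z)=\|Z-\mathbb E Z\|_{L^2}$ and use the triangle inequality in $L^2$ (Minkowski):
\[
|\mathrm{sd}(Z)-\mathrm{sd}(Y)|\le\|(Z-Y)-\mathbb E(Z-Y)\|_{L^2}\le\|Z-Y\|_{L^2}\le\kappa\delta_r y_{\max}.
\]
This avoids any square-root differentiability problem when $\Var(Y)$ is near zero. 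I expect this to be the main subtle point, and Minkowski resolves it cleanly.

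Finally, split the difference of ratios:
\[
\Big|\frac{\mathrm{sd}Z}{\mathbb EZ}-\frac{\mathrm{sd}Y}{\mathbb EY}\Big|\le\frac{|\mathrm{sd}Z-\mathrm{sd}Y|}{\mathbb EZ}+\mathrm{sd}Y\,\frac{|\mathbb EY-\mathbb EZ|}{\mathbb EY\,\mathbb EZ}.
\]
Use the bounds $\mathbb EZ\ge(1-\kappa\delta_r)y_{\min}\ge y_{\min}/2$ (for $\kappa\delta_r\le1/2$) and $\mathrm{sd}Y\le y_{\max}$. This yields
\[
|\CV(X(2))-\CV(X(1))|\le C\,\delta_r,\qquad C=\frac{4\kappa\,y_{\max}}{y_{\min}}.
\]
Here $\kappa$, $y_{\min}$, and $y_{\max}$ are evaluated at the worst admissible $\delta_r$, so $C$ depends only on $(t,\bar r,c_{\min},c_{\max})$. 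Tracking these constants explicitly, as \cref{app:constants} promises, is routine bookkeeping. The only care needed is to fix the smallness threshold on $\delta_r$ up front, so that $C$ does not secretly depend on $\delta_r$.
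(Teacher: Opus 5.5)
Your proof is correct, and it takes a different route from the paper's.

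\textbf{The paper's route.} It perturbs each $X(N)$ separately around $Y_i(N)=c_if_N(\bar r)$. Because the constant factor $f_N(\bar r)$ cancels, $\CV(Y(1))=\CV(Y(2))=\CV(c)$. It then bounds $|\CV(X(N))-\CV(c)|$ for each $N$ through mean and variance perturbations and a Lipschitz bound on $(\mu,s)\mapsto s/\mu$, and finishes with the triangle inequality.

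\textbf{Your route.} You use the exact ratio $X_i(2)=X_i(1)\,(1-r_i)/r_i$ of \cref{thm:ratio} together with scale invariance of $\CV$. This means a single multiplicative perturbation $Z=Y(1+u)$ has to be controlled. Its relative size satisfies $\|u\|_\infty\le\kappa\delta_r$ with $\kappa\le 4/(\bar r(1-\bar r))$, a coefficient that does not depend on $t$.

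\textbf{What your argument gains.} Your Minkowski step $|\mathrm{sd}(Z)-\mathrm{sd}(Y)|\le\|Z-Y\|_{L^2}$ is tighter than the paper's Step 3. That step passes from $|\Var(X)-\Var(Y)|=O(\delta_r)$ to $|\sqrt{\Var(X)}-\sqrt{\Var(Y)}|=O(\delta_r)$ without a lower bound on $\sqrt{\Var(X)}+\sqrt{\Var(Y)}$. From the variance estimate alone one only gets $O(\sqrt{\delta_r})$ when $\Var(c)$ is near zero. This is why the explicit constant in \cref{app:constants} divides by $\sqrt{V_c}$ and needs a $10^{-12}$ guard. In addition, fixing the smallness threshold on $\delta_r$ in advance makes your $C$ genuinely independent of $\delta_r$. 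The paper's $L_N$, by contrast, is a supremum over the $\delta_r$-interval itself.

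\textbf{What the paper's argument gains.} Its symmetric decomposition yields an extra, interpretable fact: the cross-instance dispersion at either order lies within $O(\delta_r)$ of the amplitude dispersion $\CV(c)$.

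\textbf{Implicit conventions.} Your argument uses $\bar r\in(0,1)$, and $t\ge1$ in the bounds on $y_{\min}$ and $y_{\max}$. Both are already presupposed by the statement.
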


\begin{proof}[Proof of \cref{prop:cv}]
\emph{Step 1 (pointwise perturbation).} Let $f_1(r)=r^{t-1}(1-r)$, $f_2(r)=r^{t-2}(1-r)^2$ on $[\bar r-\delta_r,\bar r+\delta_r]$; both are smooth there, so $L_N := \sup_r|f_N'(r)| < \infty$. Writing $r_i=\bar r+\epsilon_i$, $|\epsilon_i|\le\delta_r$, and $\Delta_i^{(N)} := f_N(r_i)-f_N(\bar r)$, the mean value theorem gives $|\Delta_i^{(N)}| \le L_N\delta_r$. With $Y_i^{(N)} := c_i f_N(\bar r)$, $E_i^{(N)} := c_i\Delta_i^{(N)}$: $X_i(N) = Y_i^{(N)} + E_i^{(N)}$, $|E_i^{(N)}| \le c_{\max}L_N\delta_r$.

\emph{Step 2 (mean perturbation).} $|\mathbb E[X(N)]-\mathbb E[Y(N)]| \le c_{\max}L_N\delta_r$.

\emph{Step 3 (variance perturbation).} $X_i(N)^2 - Y_i(N)^2 = E_i^{(N)}(2Y_i^{(N)} + E_i^{(N)})$ is $O(\delta_r)$ uniformly, so $|\Var(X(N)) - \Var(Y(N))| \le (A_N+B_N)\delta_r$ for finite $A_N,B_N$, hence $|\sqrt{\Var(X(N))} - \sqrt{\Var(Y(N))}| \le D_N\delta_r$.

\emph{Step 4 (Lipschitz bound on CV).} $\phi(\mu,s)=s/\mu$ is $C^1$ with bounded gradient $\Lambda_N$ on the relevant compact region, so $|\CV(X(N)) - \CV(Y(N))| \le \Lambda_N(c_{\max}L_N + D_N)\,\delta_r$.

\emph{Step 5.} $\CV(Y(N)) = \sqrt{\Var(c)}/\mathbb E[c]$ is identical for $N=1,2$ ($f_N(\bar r)$ cancels in the ratio); applying Step 4 to both and the triangle inequality gives the claim with $C := \max_N\Lambda_N(c_{\max}L_N+D_N)$, made explicit in \cref{app:constants}.
\end{proof}

\begin{proof}[Proof of \cref{thm:bv}]
By linearity, $\Delta^N\tilde L_i^{(t)} = a+b$ with $a:=\Delta^N L_i^{(t)} = c_i r_i^{t-N}(r_i-1)^N$ (deterministic, \cref{lem:geomdiff}) and $b:=\Delta^N\xi_{i,t} = \sum_{k=0}^N(-1)^k\binom Nk\xi_{i,t-k}$. Then $\mathbb E[(\Delta^N\tilde L_i^{(t)})^2] = a^2 + 2a\mathbb E[b] + \mathbb E[b^2]$. By \cref{ass:noise}, $\mathbb E[b]=0$; since the $\xi_{i,t-k}$ are independent with variance $\sigma^2$, $\mathbb E[b^2] = \sigma^2\sum_k\binom Nk^2 = \sigma^2\binom{2N}N$ (Vandermonde). Combining gives \cref{eq:bv}.
\end{proof}

\begin{proof}[Proof of \cref{prop:mse}]
Write $\alpha_i := (1-r_i)^2/r_i^2$, so $r_i^{2(t-N)}(r_i-1)^{2N} = r_i^{2t}\alpha_i^N$ and $\mathrm{MSE}(N,t) = c_i^2 r_i^{2t}\alpha_i^N + \binom{2N}N\sigma^2$. Using $\binom21=2,\binom42=6,\binom63=20$, $\mathrm{MSE}(1,t)-\mathrm{MSE}(2,t) = c_i^2r_i^{2t}\alpha_i(1-\alpha_i)-4\sigma^2$ and $\mathrm{MSE}(3,t)-\mathrm{MSE}(2,t) = 14\sigma^2 - c_i^2r_i^{2t}\alpha_i^2(1-\alpha_i)$. Hence
\begin{align}
\mathrm{MSE}(2,t)\le\mathrm{MSE}(1,t) &\iff c_i^2r_i^{2t}\alpha_i(1-\alpha_i)\ge4\sigma^2, \label{eq:mse12}\\
\mathrm{MSE}(2,t)\le\mathrm{MSE}(3,t) &\iff c_i^2r_i^{2t}\alpha_i^2(1-\alpha_i)\le14\sigma^2. \label{eq:mse23}
\end{align}
For $\alpha_i<1$ ($r_i>\tfrac12$), $\alpha_i(1-\alpha_i)>0$, so dividing \cref{eq:mse12,eq:mse23} by it gives the two sides of \cref{eq:band}; $N=2$ minimizes over $\{1,2,3\}$ exactly when both hold. The band is non-empty since $4\alpha_i\le14$ for every $\alpha_i<1$.
\end{proof}

\begin{proof}[Proof of \cref{cor:threshold}]
By \cref{ass:noise}, $\Var(\Delta^2\xi_{i,t}) = \Var(\xi_{i,t}-2\xi_{i,t-1}+\xi_{i,t-2}) = \sigma^2(1+4+1) = 6\sigma^2$, matching \cref{thm:bv} at $N=2$. The Gaussian approximation is standard. Under \cref{ass:subgaussian}, $\Delta^2\xi_{i,t}$ is a linear combination of independent $\sigma^2$-sub-Gaussian variables with squared coefficients summing to $6$, hence $6\sigma^2$-sub-Gaussian; the standard tail inequality gives $2\exp(-k^2/2)$.
\end{proof}

\begin{proof}[Proof of \cref{thm:curvature}]
Let $\Delta\theta^{(s)}=-\eta g^{(s)}$ (full-batch). Third-order Taylor expansion with remainder (\cref{ass:thirdderiv}):
\[
L_i(\theta^{(s)}+\Delta\theta^{(s)}) = L_i^{(s)} + \langle g_i^{(s)},\Delta\theta^{(s)}\rangle + \tfrac12\Delta\theta^{(s)\top}H_i(\theta^{(s)})\Delta\theta^{(s)} + r_{i,s}, \quad |r_{i,s}|\le\tfrac{M_3}6\eta^3\|g^{(s)}\|^3.
\]
Applying at $s=t-1,t-2$ and forming $\Delta^2 L_i^{(t)} = [L_i^{(t)}-L_i^{(t-1)}]-[L_i^{(t-1)}-L_i^{(t-2)}]$, the $O(\eta)$ terms combine into the drift term and the $O(\eta^2)$ terms into the curvature difference; $R_{i,t}:=r_{i,t-1}-r_{i,t-2}$ and the triangle inequality give the stated remainder bound.
\end{proof}

\begin{lemma}[Smoothness bounds the gradient by suboptimality]
\label{lem:descent}
If $f$ is $\Lambda$-smooth on $U$ and $\theta^*\in U$ satisfies $f(\theta^*)\le f(\theta-\tfrac1\Lambda\nabla f(\theta))$ for $\theta\in U$, then $\|\nabla f(\theta)\|^2 \le 2\Lambda(f(\theta)-f(\theta^*))$.
\end{lemma}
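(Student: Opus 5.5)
The plan is the classical descent-lemma argument, using the hypothesis on $\theta^*$ in place of global minimality. Fix $\theta\in U$ and take the gradient step $\theta^+:=\theta-\tfrac1\Lambda\nabla f(\theta)$. I will first derive the quadratic upper bound from $\Lambda$-smoothness. Write
\[
f(\theta^+)-f(\theta)=\int_0^1\big\langle\nabla f(\theta+s(\theta^+-\theta)),\,\theta^+-\theta\big\rangle\,ds .
\]
Subtracting $\langle\nabla f(\theta),\theta^+-\theta\rangle$ and applying Cauchy--Schwarz together with the Lipschitz bound on $\nabla f$ gives
\[
f(\theta^+)\le f(\theta)+\langle\nabla f(\theta),\theta^+-\theta\rangle+\tfrac\Lambda2\|\theta^+-\theta\|^2 .
\]

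Next I substitute $\theta^+-\theta=-\tfrac1\Lambda\nabla f(\theta)$. This yields $f(\theta^+)\le f(\theta)-\tfrac1\Lambda\|\nabla f(\theta)\|^2+\tfrac1{2\Lambda}\|\nabla f(\theta)\|^2=f(\theta)-\tfrac1{2\Lambda}\|\nabla f(\theta)\|^2$. The lemma's hypothesis gives $f(\theta^*)\le f(\theta^+)$, so $f(\theta^*)\le f(\theta)-\tfrac1{2\Lambda}\|\nabla f(\theta)\|^2$. Rearranging gives $\|\nabla f(\theta)\|^2\le 2\Lambda\big(f(\theta)-f(\theta^*)\big)$, which is the claim.

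The main obstacle is that smoothness is assumed only on $U$. The integral step needs $\nabla f$ to be $\Lambda$-Lipschitz along the whole segment $[\theta,\theta^+]$, and nothing stated guarantees that this segment lies in $U$. I would handle this by reading ``$\Lambda$-smooth on $U$'' as covering such segments; for example, if $U$ is convex and contains the gradient step, as the regularity condition implicitly requires for $f(\theta^+)$ to be compared meaningfully. I would state this explicitly as part of the regularity condition rather than prove it. Everything else is routine algebra.
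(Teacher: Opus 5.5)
Your proposal is correct and follows the paper's proof: apply the descent lemma at the step $\theta-\tfrac1\Lambda\nabla f(\theta)$ to bound $f$ there by $f(\theta)-\tfrac1{2\Lambda}\|\nabla f(\theta)\|^2$, then chain this with the hypothesis $f(\theta^*)\le f(\theta-\tfrac1\Lambda\nabla f(\theta))$ and rearrange. Your caveat is a legitimate point that the paper's one-line appeal to the descent lemma leaves implicit: the segment from $\theta$ to $\theta-\tfrac1\Lambda\nabla f(\theta)$ must lie in the region where the $\Lambda$-Lipschitz gradient bound holds.
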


\begin{proof}[Proof of \cref{lem:descent}]
By $\Lambda$-smoothness (descent lemma), $f(\theta-\tfrac1\Lambda\nabla f(\theta)) \le f(\theta) - \tfrac1{2\Lambda}\|\nabla f(\theta)\|^2$. Combined with $f(\theta^*)\le f(\theta-\tfrac1\Lambda\nabla f(\theta))$: $f(\theta^*) \le f(\theta)-\tfrac1{2\Lambda}\|\nabla f(\theta)\|^2$, which rearranges to the claim.
\end{proof}

\begin{proof}[Proof of \cref{thm:bridge}]
By \cref{lem:descent}, $\|g_i^{(t)}\|^2 \le 2\Lambda_i(L_i^{(t)}-L^*) = 2\Lambda_i c_i r_i^t$ (\cref{ass:geometric}). By \cref{lem:geomdiff} at $N=2$, $\Delta^2 L_i^{(t)} = c_i r_i^{t-2}(1-r_i)^2$, so $c_i r_i^t = (r_i/(1-r_i))^2\Delta^2 L_i^{(t)}$. Substituting gives $\|g_i^{(t)}\|^2 \le 2\Lambda_i(r_i/(1-r_i))^2\Delta^2 L_i^{(t)}$, and taking square roots gives \cref{eq:bridge}.
\end{proof}

\begin{corollary}[Probabilistic bridge under observation noise]
\label{cor:bridgeprob}
Under the assumptions of \cref{thm:bridge} and \cref{ass:subgaussian}, for $k>0$ and $\delta':=\delta+k\sqrt6\,\sigma$: $\Pr\big(\hat S_i<\delta \text{ and } \|g_i^{(t)}\|\ge\varepsilon_i(\delta')\big) \le 2\exp(-k^2/2)$.
\end{corollary}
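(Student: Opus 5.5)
The plan is to reduce the event in the statement to a noise-tail event and then invoke \cref{cor:threshold}. Write $\hat S_i=|\Delta^2\tilde L_i^{(t)}|$ and decompose by linearity as $\Delta^2\tilde L_i^{(t)}=\Delta^2L_i^{(t)}+\Delta^2\xi_{i,t}$. Under \cref{ass:geometric}, \cref{lem:geomdiff} with $N=2$ gives $\Delta^2L_i^{(t)}=c_ir_i^{t-2}(1-r_i)^2>0$. The signal part is therefore deterministic and nonnegative, so the absolute value in $\hat S_i$ causes no sign ambiguity on the signal side.

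\textbf{Step 1: contrapositive of \cref{thm:bridge}.} Since $\varepsilon_i(\cdot)$ is increasing, \cref{eq:bridge} gives the following: if $\|g_i^{(t)}\|\ge\varepsilon_i(\delta')$, then $\Delta^2L_i^{(t)}\ge\delta'=\delta+k\sqrt6\,\sigma$. Otherwise we would have $\Delta^2L_i^{(t)}<\delta'$, and \cref{thm:bridge} applied with threshold $\delta'$ would force $\|g_i^{(t)}\|<\varepsilon_i(\delta')$.

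\textbf{Step 2: inclusion of events.} On the joint event $\{\hat S_i<\delta\}\cap\{\|g_i^{(t)}\|\ge\varepsilon_i(\delta')\}$, the reverse triangle inequality gives
\[
|\Delta^2\xi_{i,t}|\ge \Delta^2L_i^{(t)}-\hat S_i>\delta'-\delta=k\sqrt6\,\sigma .
\]
The joint event is therefore contained in $\{|\Delta^2\xi_{i,t}|>k\sqrt6\,\sigma\}$. The one-sided event $\{-\Delta^2\xi_{i,t}>k\sqrt6\sigma\}$ would also suffice, but the two-sided event matches the stated constant.

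\textbf{Step 3: tail bound.} By \cref{cor:threshold}, which relies on \cref{ass:noise,ass:subgaussian}, $\Delta^2\xi_{i,t}$ is $6\sigma^2$-sub-Gaussian. Hence $\Pr(|\Delta^2\xi_{i,t}|>k\sqrt6\,\sigma)\le2\exp(-k^2/2)$, and monotonicity of probability finishes the argument.

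The main obstacle is in Step 1: interpreting \cref{thm:bridge}, and deciding what is random. The gradient $g_i^{(t)}$ and the true loss $L_i^{(t)}$ are functions of $\theta^{(t)}$, while the recorded noise $\xi_{i,t-2},\xi_{i,t-1},\xi_{i,t}$ is an additional perturbation. The cleanest reading treats the trajectory as fixed, or equivalently conditions on it. Under that reading, the gradient event is deterministic and the only randomness is in $\Delta^2\xi_{i,t}$; the conditional mean-zero and sub-Gaussian assumptions are exactly what is needed here. The final bound then holds conditionally on the trajectory and hence also unconditionally.
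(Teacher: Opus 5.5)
Your proof is correct and follows the paper's argument: apply \cref{thm:bridge} at threshold $\delta'$ to show that the joint event lies inside $\{|\Delta^2\xi_{i,t}|>k\sqrt6\,\sigma\}$, then bound that tail by $2\exp(-k^2/2)$ via \cref{cor:threshold}. Your reverse-triangle-inequality step, which uses $\Delta^2L_i^{(t)}>0$, handles the absolute value in $\hat S_i$ more cleanly than the paper's ``$\mp$'' shorthand; and because the event inclusion holds realization by realization, you do not need the conditioning-on-the-trajectory discussion.
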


\begin{remark}[Scope of the bridge]
\label{rem:bridge-scope}
Two caveats apply. First, $\varepsilon_i(\delta)\propto r_i/(1-r_i)$ grows without bound as $r_i\to1$, so the bound is uninformative for near-stationary, as opposed to near-converged, instances. Second, \cref{cor:bridgeprob} bounds a \emph{joint}, not a conditional, probability, since $\hat S_i$ and $\Delta^2\xi_{i,t}$ share the same noise.
\end{remark}

\begin{proof}[Proof of \cref{cor:bridgeprob}]
$\Delta^2 L_i^{(t)} = \hat S_i \mp \Delta^2\xi_{i,t}$, so $\hat S_i<\delta$ with $|\Delta^2\xi_{i,t}|\le k\sqrt6\sigma$ implies $\Delta^2 L_i^{(t)} < \delta'$, hence $\|g_i^{(t)}\|<\varepsilon_i(\delta')$ by \cref{thm:bridge}. So the event in the statement is contained in $\{|\Delta^2\xi_{i,t}|>k\sqrt6\sigma\}$, bounded by \cref{cor:threshold} at $2\exp(-k^2/2)$.
\end{proof}

\section{Actual-Trajectory Analysis: Proofs and Discussion}
\label{app:multistep}
\label{app:actual}

This appendix proves \cref{thm:actual,thm:descent}, recovers the per-instance multi-step bound of our earlier analysis as a special case, and discusses what would remove the exponential factor. Throughout, $\theta^{(t)}$ is full-data gradient descent, $\hat\theta^{(t)}$ is the TG-IES trajectory (\cref{eq:activegrad}), $A_t$ and $P_t$ are the algorithm's own evolving active and passive sets, and $\omega_t$ is the omitted-gradient residual of \cref{eq:inexact}. As in the main text, the analysis idealizes each epoch as one full-batch step.

\begin{proof}[Proof of \cref{thm:actual}]
Let $e_t:=\hat\theta^{(t)}-\theta^{(t)}$, with $e_{t_0}=0$. By \cref{eq:inexact},
\[
e_{t+1}=e_t-\eta_t\big(g(\hat\theta^{(t)})-g(\theta^{(t)})\big)+\eta_t\,\omega_t(\hat\theta^{(t)}).
\]
By \cref{ass:smooth}, $\|g(\hat\theta^{(t)})-g(\theta^{(t)})\|\le\beta\|e_t\|$, so the triangle inequality and the residual bound give
\begin{equation}
\label{eq:error-recursion}
\|e_{t+1}\|\le(1+\beta\eta_t)\|e_t\|+\eta_t\rho_t .
\end{equation}
Unrolling \cref{eq:error-recursion} from $t_0$ to $t_0+H$ with $\|e_{t_0}\|=0$ gives \cref{eq:actual}, where an empty product equals $1$. For constant $\eta$ and $\rho_t\le\rho$, the right side of \cref{eq:actual} is at most $\eta\rho\sum_{s=0}^{H-1}(1+\beta\eta)^{H-1-s}=\eta\rho\,\frac{(1+\beta\eta)^H-1}{\beta\eta}$, which is the constant-step bound stated in \cref{thm:actual}. No convexity is used.
\end{proof}

\begin{proof}[Proof of \cref{thm:descent}]
Write $g_t:=g(\hat\theta^{(t)})$ and $\omega_t:=\omega_t(\hat\theta^{(t)})$, so that $g_t=\hat g^{(t)}+\omega_t$ by \cref{eq:inexact}. The descent lemma for the $\beta$-smooth loss $L$, applied to the step $\hat\theta^{(t+1)}=\hat\theta^{(t)}-\eta_t\hat g^{(t)}$, gives
\[
L(\hat\theta^{(t+1)})\le L(\hat\theta^{(t)})-\eta_t\langle g_t,\hat g^{(t)}\rangle+\frac{\beta\eta_t^2}{2}\|\hat g^{(t)}\|^2 .
\]
By Cauchy--Schwarz and the relative-error condition, $\langle g_t,\hat g^{(t)}\rangle=\|\hat g^{(t)}\|^2+\langle\omega_t,\hat g^{(t)}\rangle\ge\|\hat g^{(t)}\|^2-\|\omega_t\|\,\|\hat g^{(t)}\|\ge(1-\nu)\|\hat g^{(t)}\|^2$. Substituting gives \cref{eq:descent}. The coefficient $1-\nu-\beta\eta_t/2$ is positive exactly when $\eta_t<2(1-\nu)/\beta$.
\end{proof}

\textbf{The per-instance route as a special case.} An earlier version of this analysis bounded the same trajectory gap by assuming a gradient floor for every passive instance. That result follows from \cref{thm:actual} through the triangle inequality in \cref{eq:aggregate}.

\begin{corollary}[Per-instance multi-step bound]
\label{prop:multistep}
Under \cref{ass:smooth} with a constant step size $\eta$, suppose every passive instance satisfies $\|g_i(\hat\theta^{(t)})\|<\varepsilon_i$ at every step along the actual TG-IES trajectory. Let $\bar\varepsilon_t:=\frac1N\sum_{i\in P_t}\varepsilon_i$ and $\varepsilon_{\max}:=\max_i\varepsilon_i$. Then, with $t_0=0$,
\begin{equation}
\label{eq:multistep}
\|\hat\theta^{(t)}-\theta^{(t)}\|\;\le\;\eta\sum_{s=0}^{t-1}(1+\eta\beta)^{t-1-s}\,\bar\varepsilon_s\;\le\;\frac{\varepsilon_{\max}}{\beta}\Big[(1+\eta\beta)^t-1\Big].
\end{equation}
\end{corollary}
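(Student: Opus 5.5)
The plan is to obtain \cref{prop:multistep} as a direct specialization of \cref{thm:actual}. I would take $t_0=0$, $H=t$ and $\eta_s\equiv\eta$, and supply the residual bound $\rho_s$ through the triangle inequality of \cref{eq:aggregate}.

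First I would bound the residual. At each step $s$, the per-instance hypothesis $\|g_i(\hat\theta^{(s)})\|<\varepsilon_i$ holds for every $i\in P_s$ along the actual trajectory. The definition of $\omega_s$ in \cref{eq:inexact} and the triangle inequality then give
\[
\|\omega_s(\hat\theta^{(s)})\|\le\frac1N\sum_{i\in P_s}\|g_i(\hat\theta^{(s)})\|\le\frac1N\sum_{i\in P_s}\varepsilon_i=\bar\varepsilon_s .
\]
So $\rho_s:=\bar\varepsilon_s$ is admissible in \cref{thm:actual}. The theorem needs only \cref{ass:smooth} and this bound, and it yields
\[
\|\hat\theta^{(t)}-\theta^{(t)}\|\le\sum_{s=0}^{t-1}\eta\,\bar\varepsilon_s\prod_{j=s+1}^{t-1}(1+\beta\eta)=\eta\sum_{s=0}^{t-1}(1+\eta\beta)^{t-1-s}\bar\varepsilon_s .
\]
This is the first inequality of \cref{eq:multistep}. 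The product has exactly $t-1-s$ factors, and it equals $1$ when empty at $s=t-1$.

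Next I would prove the second inequality by bounding $\bar\varepsilon_s$ uniformly. Since $|P_s|\le N$,
\[
\bar\varepsilon_s\le\frac{|P_s|}{N}\varepsilon_{\max}\le\varepsilon_{\max}.
\]
Summing the resulting geometric series gives
\[
\eta\,\varepsilon_{\max}\sum_{m=0}^{t-1}(1+\eta\beta)^m=\eta\,\varepsilon_{\max}\,\frac{(1+\eta\beta)^t-1}{\eta\beta}=\frac{\varepsilon_{\max}}{\beta}\big[(1+\eta\beta)^t-1\big].
\]
This is the constant-step bound already recorded in \cref{thm:actual} with $\rho=\varepsilon_{\max}$.

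No step is a real obstacle, since all the analytic work lives in \cref{thm:actual}. The one point needing care is that the hypothesis must hold at the TG-IES iterates $\hat\theta^{(s)}$, not at the full-data iterates, because \cref{thm:actual} bounds $\omega_s$ only at $\hat\theta^{(s)}$. The statement supplies exactly this, and the evolving sets $P_s$ are handled automatically because the bound is applied separately at each $s$. I would also note the trivial case $t=0$, where both sides vanish, and that the strict inequalities in the hypothesis only weaken to non-strict ones after summation.
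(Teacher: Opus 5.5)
Your proposal is correct and matches the paper's proof: you bound $\|\omega_s(\hat\theta^{(s)})\|\le\bar\varepsilon_s$ by the triangle inequality, apply \cref{thm:actual} with $\rho_s=\bar\varepsilon_s$, and then use $\bar\varepsilon_s\le(|P_s|/N)\,\varepsilon_{\max}\le\varepsilon_{\max}$ together with the geometric sum. Your non-strict residual bound is slightly more careful than the paper's strict one, which fails when $P_s=\emptyset$.
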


\begin{proof}
$\|\omega_s(\hat\theta^{(s)})\|\le\frac1N\sum_{i\in P_s}\|g_i(\hat\theta^{(s)})\|<\bar\varepsilon_s$, so \cref{eq:actual} applies with $\rho_s=\bar\varepsilon_s$. The second inequality uses $\bar\varepsilon_s\le(|P_s|/N)\,\varepsilon_{\max}\le\varepsilon_{\max}$.
\end{proof}

The aggregate condition of \cref{thm:actual} is weaker than the per-instance condition of \cref{prop:multistep}. It constrains only the norm of the \emph{summed} passive gradient, so it can hold even when many individual passive gradients exceed a per-instance floor, for example when those gradients partly cancel. The reactivation probe (\cref{app:reactprobe}) shows that the per-instance condition fails for about half of the sampled passive instances at any snapshot, which is why the main text states the aggregate version. The probe does not measure $\omega_t$, however, so the aggregate condition remains an assumption.

\begin{remark}[What would remove the exponential factor]
\label{rem:contraction-direction}
The factor $(1+\beta\eta)^H$ in \cref{thm:actual} arises because smoothness alone lets the gradient-descent map expand parameter gaps. The absolute divergence measured in \cref{sec:kstartraj} instead \emph{plateaus}, which is qualitatively more consistent with a contractive map. If $L$ were additionally $\mu$-strongly convex on a convex region containing both trajectories and $\eta<2/\beta$, the map $\theta\mapsto\theta-\eta\,g(\theta)$ would be a contraction with factor $\kappa=\max(|1-\eta\mu|,|1-\eta\beta|)<1$. Replacing $(1+\beta\eta)$ by $\kappa$ in \cref{eq:error-recursion} gives the horizon-independent bound $\|\hat\theta^{(t)}-\theta^{(t)}\|\le\eta\rho/(1-\kappa)$, which matches the observed plateau qualitatively. Local contraction of this kind is plausible in restricted regimes (late training, over-parameterized models, near a broad minimum), but we have not verified it along the trajectories of our architectures, and we flag it as the natural next step.
\end{remark}

\section{The Single-Removal Reference Certificate}
\label{app:single-removal}
Our earlier analysis bounded the effect of removing a single instance under a shared-reference idealization. We keep it here because it yields the closed-form horizon $K_{\mathrm{safe}}$ used to initialize $K^*$. It is no longer the main stability result: \cref{thm:actual} is stated for the actual trajectories and contains this certificate as its first-order ($\beta\to0$) limit for a single passive instance.

\begin{theorem}[Single-removal certificate, shared-reference trajectory]
\label{thm:safe}
Suppose instance $i$ satisfies $\|g_i^{(t)}\| < \varepsilon$ for all $t\in[T-K,T]$. Consider two runs that share the same reference trajectory $\{\theta^{(t)}\}_{t=T-K}^T$, both use \cref{eq:activegrad} evaluated on it, and differ only in whether $i \in A_t$. Then $\big\|\theta^{(T)}_{\mathrm{full}} - \theta^{(T)}_{\mathrm{rem}}\big\| \le K\eta\varepsilon/N$. If the test loss is additionally $\beta_{\mathrm{test}}$-Lipschitz, then $\big|L_{\mathrm{test}}(\theta^{(T)}_{\mathrm{full}}) - L_{\mathrm{test}}(\theta^{(T)}_{\mathrm{rem}})\big| \le \beta_{\mathrm{test}} K\eta\varepsilon/N$.
\end{theorem}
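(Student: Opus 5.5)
Because both runs evaluate gradients on the same reference trajectory $\{\theta^{(t)}\}_{t=T-K}^{T}$, no feedback term appears. This is why the bound is linear in $K$ rather than exponential as in \cref{thm:actual}. The plan is to write each run's parameter at time $T$ as the common starting point minus a sum of $K$ step increments, and to subtract the two sums.

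\textbf{Step 1: one-step difference.} At each step $t\in[T-K,T-1]$, the two runs use active sets $A_t^{\mathrm{full}}$ and $A_t^{\mathrm{rem}}=A_t^{\mathrm{full}}\setminus\{i\}$, which agree except on $i$. By \cref{eq:activegrad}, their updates are $\eta\,\frac1N\sum_{j\in A_t}g_j(\theta^{(t)})$. Hence the per-step increments differ by exactly $\frac{\eta}{N}g_i(\theta^{(t)})$. This is the single-instance case of the residual $\omega_t$ in \cref{eq:inexact}.

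\textbf{Step 2: sum over the window.} Both runs start from the same $\theta^{(T-K)}$. Telescoping the updates therefore gives
\[
\theta^{(T)}_{\mathrm{full}}-\theta^{(T)}_{\mathrm{rem}}=-\frac{\eta}{N}\sum_{t=T-K}^{T-1}g_i(\theta^{(t)}).
\]
The triangle inequality, together with the hypothesis $\|g_i^{(t)}\|<\varepsilon$ on the window, bounds the norm by $K\eta\varepsilon/N$. There are $K$ steps from $T-K$ to $T$, which is the count appearing in the statement.

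\textbf{Step 3: test loss.} The test loss is assumed to be $\beta_{\mathrm{test}}$-Lipschitz in parameters, so $|L_{\mathrm{test}}(a)-L_{\mathrm{test}}(b)|\le\beta_{\mathrm{test}}\|a-b\|$. Applying this to the bound of Step 2 gives the second claim.

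\textbf{Main obstacle.} The only delicate point is interpretive: being precise about what ``both use \cref{eq:activegrad} evaluated on the reference trajectory'' means. The gradients must be taken at the shared $\theta^{(t)}$, not at each run's own iterate, so that the other active instances' contributions cancel exactly rather than merely approximately. I would state this explicitly. I would also state that the step size is constant, since a variable step would simply replace $K\eta$ by $\sum_t\eta_t$. Finally, I would note the index-count convention, $K$ steps on $[T-K,T]$.
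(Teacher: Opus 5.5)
Your proposal is correct and follows the paper's own argument. On the shared reference trajectory the two update directions differ by exactly $\frac1N g_i^{(t)}$, so summing the $K$ step increments with the triangle inequality gives $K\eta\varepsilon/N$, and the $\beta_{\mathrm{test}}$-Lipschitz property of the test loss gives the second bound.
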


\begin{corollary}[Conditional reference horizon]
\label{cor:kstar}
Given a tolerance $\tau>0$ on the test-loss deviation, any exclusion window $K \le K_{\mathrm{safe}} := \lfloor N\tau/(\beta_{\mathrm{test}}\,\eta\,\varepsilon)\rfloor$ keeps the deviation in \cref{thm:safe} at most $\tau$. The floor, not the ceiling, is required, since rounding up would violate the bound.
\end{corollary}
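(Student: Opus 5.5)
The plan is to obtain \cref{cor:kstar} directly from the test-loss bound of \cref{thm:safe}, treating the per-instance gradient floor $\varepsilon$ (and $\beta_{\mathrm{test}}$, $\eta$, $N$) as fixed positive constants. The statement is conditional: it inherits every hypothesis of \cref{thm:safe}, namely the shared reference trajectory and $\|g_i^{(t)}\|<\varepsilon$ on the whole window $[T-K,T]$. Nothing new about the optimization is needed; the work is a monotonicity argument plus a check on the integer rounding.

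First I would record that, under those hypotheses, \cref{thm:safe} gives, for any integer window length $K\ge0$,
\[
\big|L_{\mathrm{test}}(\theta^{(T)}_{\mathrm{full}})-L_{\mathrm{test}}(\theta^{(T)}_{\mathrm{rem}})\big|\le \beta_{\mathrm{test}}K\eta\varepsilon/N .
\]
The right side is linear and increasing in $K$ because all constants are positive.

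Second, I would solve $\beta_{\mathrm{test}}K\eta\varepsilon/N\le\tau$ for $K$. This gives $K\le N\tau/(\beta_{\mathrm{test}}\eta\varepsilon)=:x$. Since $K$ counts epochs and is an integer, $K\le x$ holds exactly when $K\le\lfloor x\rfloor=K_{\mathrm{safe}}$. So every integer $K\le K_{\mathrm{safe}}$ gives a deviation bound of at most $\beta_{\mathrm{test}}K_{\mathrm{safe}}\eta\varepsilon/N\le\beta_{\mathrm{test}}x\eta\varepsilon/N=\tau$.

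Third, for the remark on rounding, I would note that whenever $x\notin\mathbb Z$ we have $\lceil x\rceil>x$. Then $K=\lceil x\rceil$ gives a bound value $\beta_{\mathrm{test}}\lceil x\rceil\eta\varepsilon/N>\tau$, so the certificate no longer guarantees the tolerance. This is why the floor, not the ceiling, is required.

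The only step needing care is keeping the hypotheses consistent, since \cref{thm:safe} needs the gradient floor to hold over the full window $[T-K,T]$. I would state explicitly that the corollary's ``any exclusion window $K\le K_{\mathrm{safe}}$'' means any window on which that floor holds. Because such a window is contained in a longer one of length $K_{\mathrm{safe}}$, the hypothesis transfers to any shorter window. I would also point out that the result certifies only the bound of \cref{thm:safe} under its shared-reference idealization, not the actual TG-IES trajectory.
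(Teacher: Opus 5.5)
Your proposal is correct and follows the paper's own argument. Like the paper, you solve the linear inequality $\beta_{\mathrm{test}}K\eta\varepsilon/N\le\tau$ for $K$ and take the largest integer not exceeding $N\tau/(\beta_{\mathrm{test}}\eta\varepsilon)$. Your observation that the ceiling breaks the certificate only when $N\tau/(\beta_{\mathrm{test}}\eta\varepsilon)\notin\mathbb{Z}$ is slightly sharper than the paper's ``a ceiling would round up past the constraint.''
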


\begin{remark}[Why this certificate is conditional and local]
\label{rem:shared-traj}
\Cref{thm:safe} compares two runs evaluated on the \emph{same} reference trajectory. In an actual run, the two trajectories separate from the moment $i$ is removed, and the separation feeds back through the gradient. \Cref{thm:actual} accounts for this feedback through the factor $(1+\beta\eta)$, and \cref{thm:safe} is its $\beta\to0$ limit for a single passive instance. The certificate is also local: it covers one removal, not the repeated and overlapping removals of \cref{alg:tgies}.
\end{remark}

\begin{remark}[$K_{\mathrm{safe}}$ is not the algorithm's patience]
\label{rem:kmismatch}
$K_{\mathrm{safe}}$ bounds a \emph{closed} exclusion window counted backward from a fixed endpoint, whereas the patience $K^*$ of \cref{alg:tgies} governs an \emph{open-ended} exclusion that begins after $K^*$ consecutive passing epochs. Nothing here shows that initializing $K^*$ from $K_{\mathrm{safe}}$ keeps that open-ended exclusion within the budget of \cref{cor:kstar}. In addition, \cref{thm:safe} requires $\|g_i^{(t)}\|<\varepsilon$, whereas the algorithm thresholds the noisy signal $\hat S_i<\delta$; \cref{thm:bridge} connects the two only under \cref{ass:smoothi}. We therefore call $K_{\mathrm{safe}}$ a conditional reference horizon and $K^*$ a theory-informed persistence rule.
\end{remark}

\begin{proof}[Proof of \cref{thm:safe}]
At each $t\in[T-K,T]$, both runs use \cref{eq:activegrad} on the same reference trajectory, so their update directions differ only by the term $\frac1N g_i^{(t)}$. Hence each step adds at most $\frac{\eta}{N}\|g_i^{(t)}\| \le \frac{\eta\varepsilon}{N}$ to the gap, and summing over $K$ steps (triangle inequality) gives $\|\theta^{(T)}_{\mathrm{full}} - \theta^{(T)}_{\mathrm{rem}}\| \le K\eta\varepsilon/N$. The test-loss bound follows from $\beta_{\mathrm{test}}$-Lipschitzness.
\end{proof}

\begin{proof}[Proof of \cref{cor:kstar}]
$\beta_{\mathrm{test}} K \eta \varepsilon/N \le \tau \iff K \le N\tau/(\beta_{\mathrm{test}}\eta\varepsilon)$. The largest integer satisfying this is $K_{\mathrm{safe}} = \lfloor N\tau/(\beta_{\mathrm{test}}\eta\varepsilon)\rfloor$; a ceiling would round up past the constraint.
\end{proof}

\section{Explicit Constants for \texorpdfstring{\cref{prop:cv}}{the CV Proposition}}
\label{app:constants}

Recall from \cref{ass:cbounds} that $\Var(c) \le (c_{\max}-c_{\min})^2/4 =: V_c$. One concrete choice of the constant in \cref{prop:cv} is
\[
C(t,\bar r,c_{\min},c_{\max}) := \max_{N\in\{1,2\}}\left(\frac{c_{\max}L_N}{c_{\min}f_N(\bar r)} + \frac{c_{\max}L_N}{\sqrt{V_c}\,c_{\min}f_N(\bar r)+10^{-12}}\right),
\]
where $f_1(r)=r^{t-1}(1-r)$, $f_2(r)=r^{t-2}(1-r)^2$, $L_N=\sup_{r\in[\bar r-\delta_r,\bar r+\delta_r]}|f_N'(r)|$. The $10^{-12}$ guards against division by zero when all $c_i$ are equal ($V_c=0$).

\section{Counterexample: Global CV Monotonicity Can Fail}
\label{app:counterexample}

A global claim $\CV(\Delta^2 L) \le \CV(\Delta^1 L)$ (without \cref{ass:dispersion}) can fail under wide rate dispersion. Take $c_1=c_2=1$, $L^*=0$, $r_1=0.01$, $r_2=0.99$, $t=10$:
\[
\Delta^1(L_1^{(10)})\approx-10^{-18},\ \Delta^1(L_2^{(10)})\approx-0.0091,\qquad
\Delta^2(L_1^{(10)})\approx10^{-16},\ \Delta^2(L_2^{(10)})\approx9.2{\times}10^{-5}.
\]
The cross-instance magnitude ratio is order $10^{16}$ at $N{=}1$ and order $10^{11}$ at $N{=}2$: dispersion (CV across the two instances) \emph{increases}, not decreases, from $N{=}1$ to $N{=}2$ here, confirming \cref{ass:dispersion} is load-bearing, not a technicality.

\section{Extended Practical Implementation Notes}
\label{app:practical}

\textbf{Recalibration schedule.} $\delta$ and $K^*$ are recalibrated once, at epoch 30, then held fixed, rather than repeating every $\Delta$ epochs: repeating was found counterproductive, since a shrinking noise floor $\hat\sigma$ re-tightens $\delta$ over time and every recalibration resets in-progress $K^*$-window counts, together strangling savings. This is an empirical finding, not a theorem (\cref{sec:limitations}).

\textbf{Gradient-norm floor estimate.} $\varepsilon_0$ is a low percentile (default 20th) of a cheap last-layer proxy $\|\mathrm{softmax}(z_i)-\mathrm{onehot}(y_i)\|\cdot\|\mathrm{feat}_i\|$ over a subsample of the active set (capped at 8192 instances), cutting recalibration cost roughly $6\times$ versus a full backward pass without materially changing the percentile estimate.

\textbf{Consistent forward-pass discipline.} Following \citet{yuan2025instance}'s stated Algorithm~1 rather than the shipped implementation, all instances (active and passive) are forwarded in the same \texttt{train()} mode every epoch (passive under \texttt{no\_grad}), avoiding a train/eval batch-normalization mismatch \citep{ioffe2015batch}.

\textbf{A rounding-direction correction, with no effect on reported results.} \Cref{cor:kstar} rounds $K_{\mathrm{safe}}$ down; the implementation generating every run in this paper used a ceiling instead, caught only while finalizing this version. Checking every run's logged $K^*$ trajectory: in every case $K^*$ reaches $\mathrm{max\_kstar}=15$ immediately upon recalibration, i.e., the uncapped value already exceeds the clamp before rounding is applied, so ceiling vs.\ floor changes nothing here. The implementation now uses the floor consistent with \cref{cor:kstar}. The same check shows that the operational patience is $K^*=K_{\max}=15$ in every reported TG-IES run, which is why the main text describes $K^*$ as a theory-informed persistence rule rather than a derived safety horizon.

\section{Full Zero-Shot Transfer Table}
\label{app:zeroshot}

\begin{table}[h]
\caption{TG-IES, every distinct (dataset, architecture, optimizer/schedule) cell in this paper, same four fixed calibration constants throughout. Mean $\pm$ std over seeds where $n{>}1$; asterisked rows are single-seed by design (\cref{sec:lrmechanism}'s matched-epoch schedule comparison).}
\label{tab:zeroshot}
\centering
\small
\begin{tabular}{lccc}
\toprule
Cell & $n$ & Acc.\ (\%) & Saved (\%) \\
\midrule
CIFAR-10 / ResNet-18 / SGD-exp (primary) & 3 & $94.85\pm0.28$ & $53.63\pm0.30$ \\
CIFAR-10 / ResNet-18 / SGD-fixed & 3 & $91.91\pm0.18$ & $29.27\pm0.30$ \\
CIFAR-10 / ResNet-18 / SGD-linear & 1 & $94.85^*$ & $32.05^*$ \\
CIFAR-10 / ResNet-18 / SGD-step & 1 & $94.44^*$ & $43.52^*$ \\
CIFAR-10 / ResNet-18 / SGD-cosine & 1 & $94.73^*$ & $23.15^*$ \\
CIFAR-10 / ResNet-18 / Adam & 3 & $92.44\pm0.10$ & $30.18\pm0.62$ \\
CIFAR-10 / VGG-16 / SGD-exp & 3 & $93.11\pm0.14$ & $54.43\pm0.12$ \\
CIFAR-10 / DenseNet-121 / SGD-exp & 3 & $95.28\pm0.09$ & $54.53\pm0.66$ \\
CIFAR-100 / ResNet-18 / SGD-exp & 3 & $76.01\pm0.08$ & $57.76\pm0.04$ \\
CIFAR-10 / ViT-Tiny / AdamW-wc & 3 & $83.71\pm0.47$ & $25.52\pm0.26$ \\
CIFAR-100 / ViT-Tiny / AdamW-wc & 3 & $58.56\pm0.47$ & $26.28\pm0.14$ \\
\bottomrule
\end{tabular}
\end{table}

The savings figure varies substantially across cells ($23$--$58\%$), as it should: schedule-, dataset-, and architecture-dependent mechanisms (\cref{sec:lrmechanism,app:adam,app:vit}) directly cause this variation. What does not vary is the calibration procedure: this table was assembled with no per-cell constant lookup, because there is none to look up.

\section{Second Optimizer: Adam}
\label{app:adam}

\begin{table}[h]
\caption{ResNet-18, CIFAR-10, Adam, mean $\pm$ std, 3 seeds.}
\label{tab:adam}
\centering
\small
\begin{tabular}{lcc}
\toprule
Method & Test Acc.\ (\%) & Backprop Saved (\%) \\
\midrule
Baseline & $92.91\pm0.06$ & $0.03\pm0.00$ \\
IES (shipped) & $92.61\pm0.10$ & $56.02\pm0.22$ \\
IES (literal) & $92.42\pm0.10$ & $\mathbf{60.09\pm0.29}$ \\
TG-IES (ours) & $92.44\pm0.10$ & \lose{$30.18\pm0.62$} \\
\bottomrule
\end{tabular}
\end{table}

Adam \citep{kingma2015adam} is a second, more pronounced instance of the fixed-LR pattern of \cref{sec:settings}: both IES variants save more ($56.0$--$60.1\%$) than TG-IES's $30.2\%$, at accuracy within $0.02$--$0.17$ points of TG-IES's (\cref{app:stats}: the accuracy gap vs.\ IES (literal) is not distinguishable from zero at $n=3$, while the saved-\% deficit is consistent in sign across all 3 seeds). Adam and fixed-LR SGD share the structural property distinguishing them from every cell where TG-IES leads: neither has the exponential schedule's large, steadily-decaying learning rate. We report this as an open empirical pattern, not an explained one; \cref{sec:lrmechanism} proposes and checks a candidate mechanism against the logged record.

\section{Toward a Mechanism: The Drift Term Under Non-Decaying Learning Rates}
\label{sec:lrmechanism}

\Cref{thm:curvature} decomposes $\Delta^2 L_i^{(t)} = \eta_t D_i^{(t)} + O(\eta_t^2) + R_{i,t}$, $D_i^{(t)} := \langle g_i^{(t-2)},g^{(t-2)}\rangle - \langle g_i^{(t-1)},g^{(t-1)}\rangle$. Under the exponential schedule, $\eta_t\to0$ drives the whole drift term toward zero regardless of $D_i^{(t)}$'s own decay; under a non-decaying schedule (fixed-LR SGD, or Adam's lack of scalar-$\eta$ decay), only $D_i^{(t)}\to0$ on its own can shrink it, which \cref{ass:geometric}'s geometric picture gives no particular reason to expect on any fixed timetable. This predicts $\delta$ should calibrate smaller wherever $\eta$ is smaller, and fewer instances should reach $K^*{=}15$ consecutive low-signal epochs. Both hold in the logged data: at epoch $31$, $\delta$ recalibrates to $0.01597$ (exponential, $\eta_{31}{\approx}0.0282$) vs.\ $0.00247$ (fixed-LR, $6.5\times$ smaller) and $0.000257$ (Adam, $62\times$ smaller); by training's end the exponential run excludes $87.4\%$ of instances vs.\ $42.0\%$ (fixed-LR) and $45.6\%$ (Adam), matching the backprop-saved gap.

\textbf{Direct instrumentation of $D_i^{(t)}$.} A diagnostic run computing $g^{(t)}$, every $L_i^{(t)}$, and $\langle g_i^{(t)},g^{(t)}\rangle$ for 300 fixed instances, baseline training, all three schedules, \emph{complicates} the ``drift dominates'' reading: $\eta_tD_i^{(t)}$ correlates only weakly with the observed $\Delta^2L_i^{(t)}$ (Pearson $r=0.34,0.37,0.23$ for exponential/fixed/Adam), and its magnitude does not dominate the combined curvature-plus-remainder term (ratio $0.69\times$, $0.32\times$, and effectively $0\times$ respectively) -- the opposite of a literal ``drift dominates'' reading, and it weakens further in the last training quarter (\cref{fig:driftvalidation}). We read this as refining, not refuting, \cref{thm:curvature}: the $O(\eta)$-vs-$O(\eta^2)$ ordering is asymptotic as $\eta\to0$, and at this network's actual step sizes ($\eta$ up to $0.1$) the curvature term is evidently not negligible in absolute terms. The qualitative argument above only requires drift to \emph{scale} with $\eta_t$, which this does not contradict; but $\Delta^2L_i^{(t)}$ being \emph{well-approximated by} $\eta_tD_i^{(t)}$ alone is not supported, and \cref{sec:level3}'s framing is corrected accordingly.

\begin{figure}[h]
\centering
\includegraphics[width=0.95\linewidth]{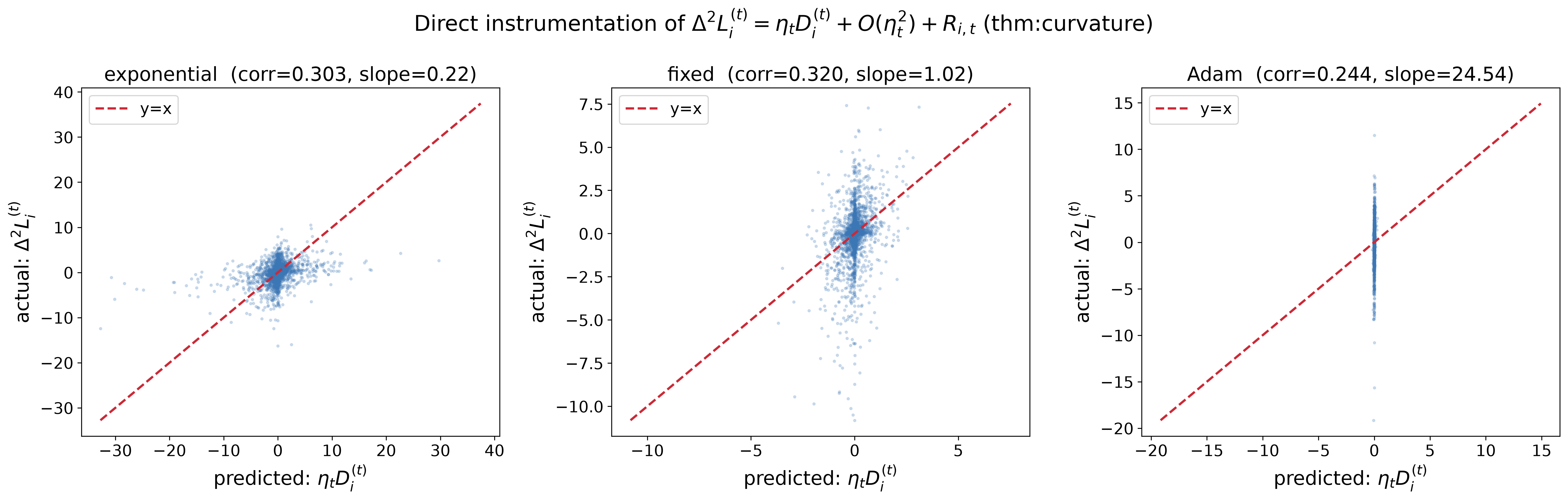}
\caption{Direct instrumentation of $\Delta^2 L_i^{(t)}=\eta_tD_i^{(t)}+O(\eta_t^2)+R_{i,t}$, fixed 300-instance subset, baseline training, all valid epochs, one panel per schedule. Dashed line: $y=x$. The drift term correlates weakly with the observed signal and does not dominate the residual under any schedule.}
\label{fig:driftvalidation}
\end{figure}

\section{Full New-Baselines Comparison and Fixed-Compute-Budget Analysis}
\label{app:newbaselines}
\label{app:equalcompute}

We implement and run four baselines not previously benchmarked against IES: InfoBatch \citep{qin2024infobatch} ($r{=}0.5$ soft-pruning below epoch-mean loss, $1/(1-r)$ gradient rescaling, annealing to the full dataset for the final $12.5\%$ of epochs); BLS-InfoBatch \citep{zhou2026bls} (same schedule, EMA-smoothed score, $\alpha{=}0.7$); AlignPrune \citep{qin2026alignprune} (Dynamic Alignment Score against the population-mean trajectory, since no clean-labeled reference subset exists here); and Selective Backprop \citep{jiang2019accelerating} (percentile-rank accept probability $p^{\beta_{\mathrm{sb}}-1}$, $\beta_{\mathrm{sb}}{=}3$, on the persistent last-observed-loss score, no gradient rescaling and no annealing back to the full dataset, since SB is an explicitly biased sampler by design rather than InfoBatch's unbiased correction). We do not compare against DPPO \citep{zhu2026dppo} (a different, RL setting).

On VGG-16 (seed 0) and CIFAR-100 (seed 0), the same ranking as the primary cell holds among the first three: TG-IES leads at $54.4\%$/$57.8\%$ saved against InfoBatch's $38.9\%$/$35.2\%$, BLS-InfoBatch's $36.4\%$/$30.9\%$, and AlignPrune's $22.7\%$/$22.5\%$, all within 1--2 accuracy points. AlignPrune saves the least of these methods consistently, expected rather than a weakness of our implementation: its Dynamic Alignment Score identifies samples \emph{misaligned} with the population trend (a label-noise signal), not samples converged \emph{with} it (a mastery signal); on clean CIFAR labels most of what it would need to prune to match TG-IES are exactly the well-aligned instances its score is built to keep.

\textbf{Selective Backprop: the other extreme of the trade-off.} On the primary cell (3 seeds), SB saves $66.72\pm0.01\%$ of backpropagation instances -- more than every other method in this paper, TG-IES included ($53.63\%$) -- but at $93.31\pm0.11\%$ accuracy, a $1.72$-point cost relative to baseline ($95.03\%$), the largest accuracy cost of any method compared anywhere in this paper (next largest: TG-IES's own $0.87$ points on CIFAR-100, \cref{tab:overview}). SB's savings are also the most seed-stable of any method measured ($\pm0.01$ points, matching InfoBatch's $\pm0.01$). This is not a failure of our implementation: SB's probabilistic accept/reject rule has no noise-calibrated threshold and no persistence rule (\cref{sec:related}), so nothing in it stops pruning before accuracy degrades. Read together with \cref{tab:primary}, SB and TG-IES mark the two ends of a Pareto-like frontier on this cell: the most raw compute saved for the most accuracy given up, versus a small, bounded accuracy cost for a smaller but still leading (among methods with any comparable accuracy) compute saving.

\begin{figure}[h]
\centering
\includegraphics[width=0.95\linewidth]{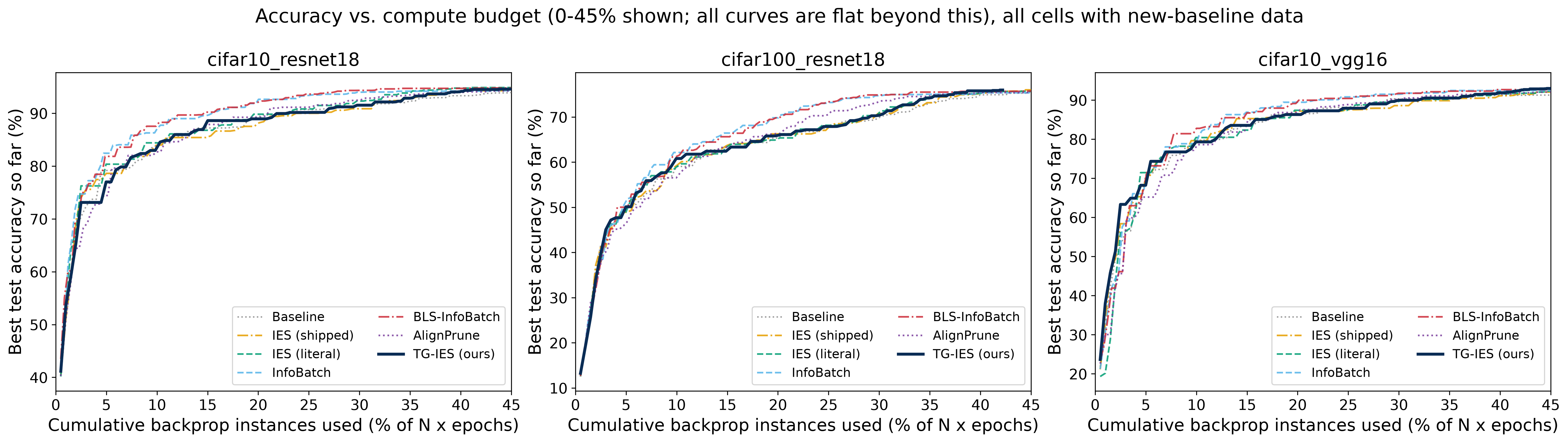}
\caption{Best-so-far test accuracy vs.\ cumulative compute budget (\% of $N\times$total-epochs), all three cells with new-baseline data. A method's curve stops at its own final compute point.}
\label{fig:equalcompute}
\end{figure}

\textbf{Accuracy under a fixed compute budget.} At a common compute budget rather than each method's own stopping point, the result is a genuine trade-off, consistent across all three cells: at low budgets ($20$--$30\%$), InfoBatch and BLS-InfoBatch lead TG-IES by $2.5$--$4.3$ points, since they skip both forward and backward on a pruned instance and so get more model updates per unit of measured compute early in training. This narrows to $0.1$--$0.8$ points by $40\%$ and reverses on CIFAR-100 ($+0.12$ points for TG-IES). The practical implication is budget-dependent: a practitioner constrained to $20$--$30\%$ of full compute would currently do better with InfoBatch-family methods; the own-stopping-point comparison of \cref{tab:primary} describes a different, later operating regime. We report both framings rather than hide the trade-off.

\section{Empirical Validation of the Noise-Independence Assumption}
\label{sec:noise-validation}

\Cref{ass:noise} assumes $\xi_{i,t}$ independent across $t$; we test this directly. Detrending each instance's epoch-101--200 loss trajectory (ResNet-18/CIFAR-10, seed 0) with an OLS linear fit and pooling residuals across instances estimates lag-$k$ autocorrelation directly (\cref{fig:noiseautocorr}).

\begin{figure}[h]
\centering
\includegraphics[width=0.4\linewidth]{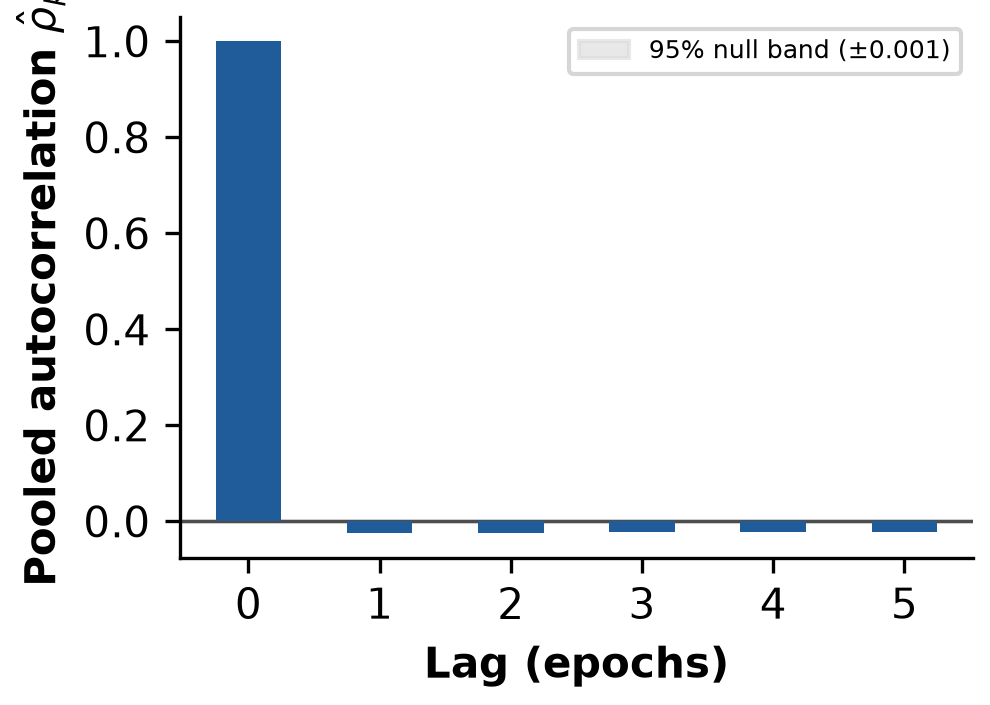}
\caption{Pooled lag-$k$ autocorrelation of detrended per-instance loss residuals, epochs 101--200, ResNet-18/CIFAR-10 seed 0 ($n{\approx}1.9$M pairs/lag). Shaded band: 95\% null interval for zero correlation.}
\label{fig:noiseautocorr}
\end{figure}

An earlier pass detrended with a width-5 moving average and measured $\hat\rho_1{=}{-}0.298,\hat\rho_2{=}{-}0.348$, which looked like strong evidence against \cref{ass:noise}; we verified by direct calculation that pure white noise under that same filter produces $\rho_1{=}{-}0.30,\rho_2{=}{-}0.35$ \emph{exactly} -- the entire signal was the detrending filter's own artifact. Redone with OLS linear detrending (whose own simulated white-noise artifact is $\hat\rho_1{\approx}{-}0.021,\hat\rho_2{\approx}{-}0.019$), the real data gives $\hat\rho_1{=}{-}0.025,\hat\rho_2{=}{-}0.023$ -- statistically distinguishable from zero at $n{\approx}1.9$M pairs but small and close to the detrending method's own bias, evidence \emph{for}, not against, \cref{ass:noise} once the method's bias is accounted for. Taking the measured values at face value regardless, the correction to \cref{thm:bv} at $N{=}2$ is a factor of $6.15$ against the assumed $6.0$ -- a $2.5\%$ correction, requiring $\delta$ to grow under $1.3\%$ to keep the same nominal false-exclusion rate. This does not materially change any result in \cref{sec:results}.

\section{Gradient-Norm-Floor Percentile Sensitivity}
\label{sec:sensitivity}

The calibration policy of \cref{sec:setup} fixes the percentile behind $\varepsilon_0$ (and hence the reference horizon for $K^*$) at the 20th throughout; we sweep $\{5,10,20,35,50,80\}$ on ResNet-18 and VGG-16 (primary recipe, seed 0), holding everything else fixed.

\begin{figure}[h]
\centering
\includegraphics[width=0.85\linewidth]{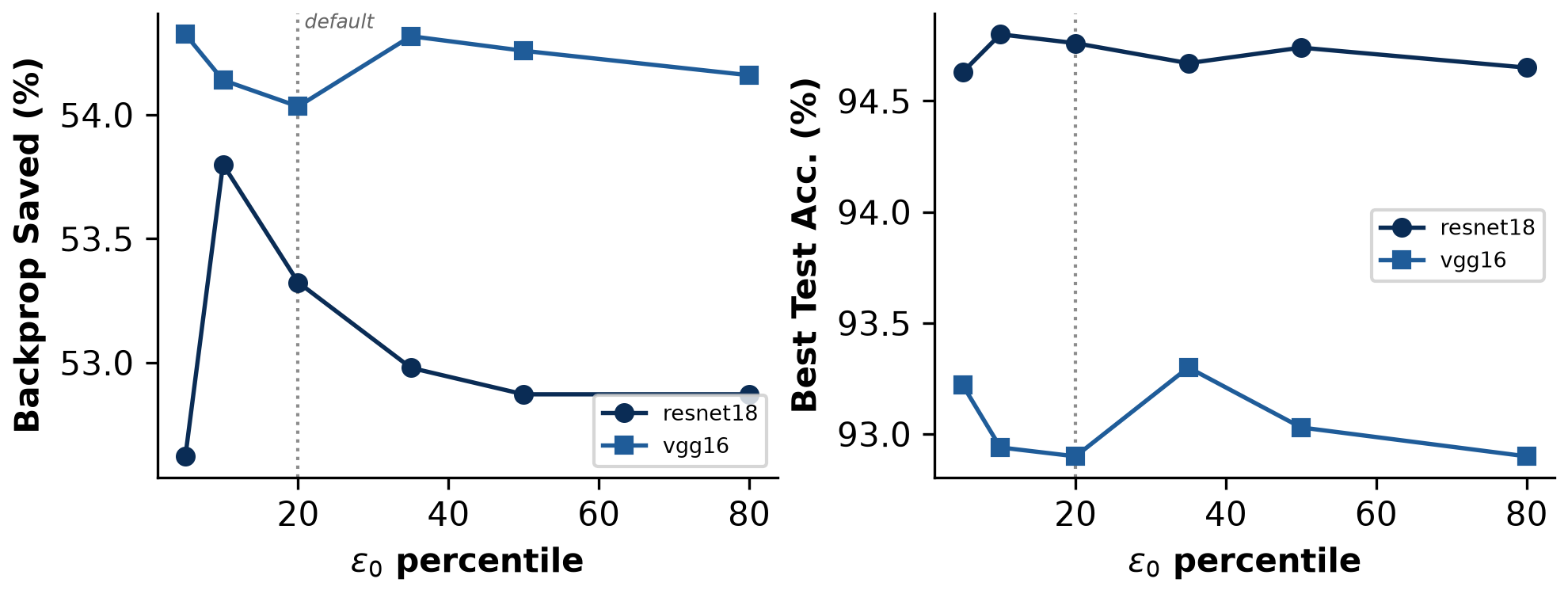}
\caption{Backprop saved (left) and best test accuracy (right) vs.\ gradient-norm-floor percentile, ResNet-18 and VGG-16, CIFAR-10, seed 0. Dotted line: this paper's fixed default (20th percentile).}
\label{fig:sensitivity}
\end{figure}

Both curves are essentially flat across a 16$\times$ range: ResNet-18 savings vary by $1.18$ points, accuracy by $0.17$; VGG-16 by $0.30$ and $0.40$ points respectively. Neither curve is monotonic, itself informative: there is no directional trend to correct for by tuning further. The gap between architectures at any fixed percentile is larger than the gap the percentile choice induces within either, consistent with the percentile being a fixed, low-sensitivity policy input rather than one needing per-architecture retuning within this range. This shows that the current value is not fragile; it does not show that a derived value would not be preferable in principle (\cref{rem:kmismatch}).

\section{Measured Wall-Clock Time, Peak Memory, and Estimated Energy}
\label{app:efficiency}

\begin{table}[h]
\caption{Measured efficiency, primary setting, single RTX 5090. Wall-clock: mean over 3 seeds from the real 200-epoch runs. Peak Mem/Util/Energy: short instrumented runs, seed 0; Energy is an estimate. Lower is better throughout.}
\label{tab:efficiency}
\centering
\small
\begin{tabular}{lcccc}
\toprule
Method & Wall-clk (min) & Peak Mem (MiB) & Util (\%) & Energy (Wh) \\
\midrule
Baseline & 15.66 & 1162 & 67.7 & 98.2 \\
IES (shipped) & 12.86 & 1216 & 62.8 & 78.1 \\
IES (literal) & 12.26 & 1184 & 59.0 & 70.1 \\
\win{TG-IES} & \win{$\mathbf{11.66}$} & \lose{1454} & \win{$\mathbf{48.9}$} & \win{$\mathbf{59.9}$} \\
\bottomrule
\end{tabular}
\end{table}

\begin{figure}[h]
\centering
\includegraphics[width=0.85\linewidth]{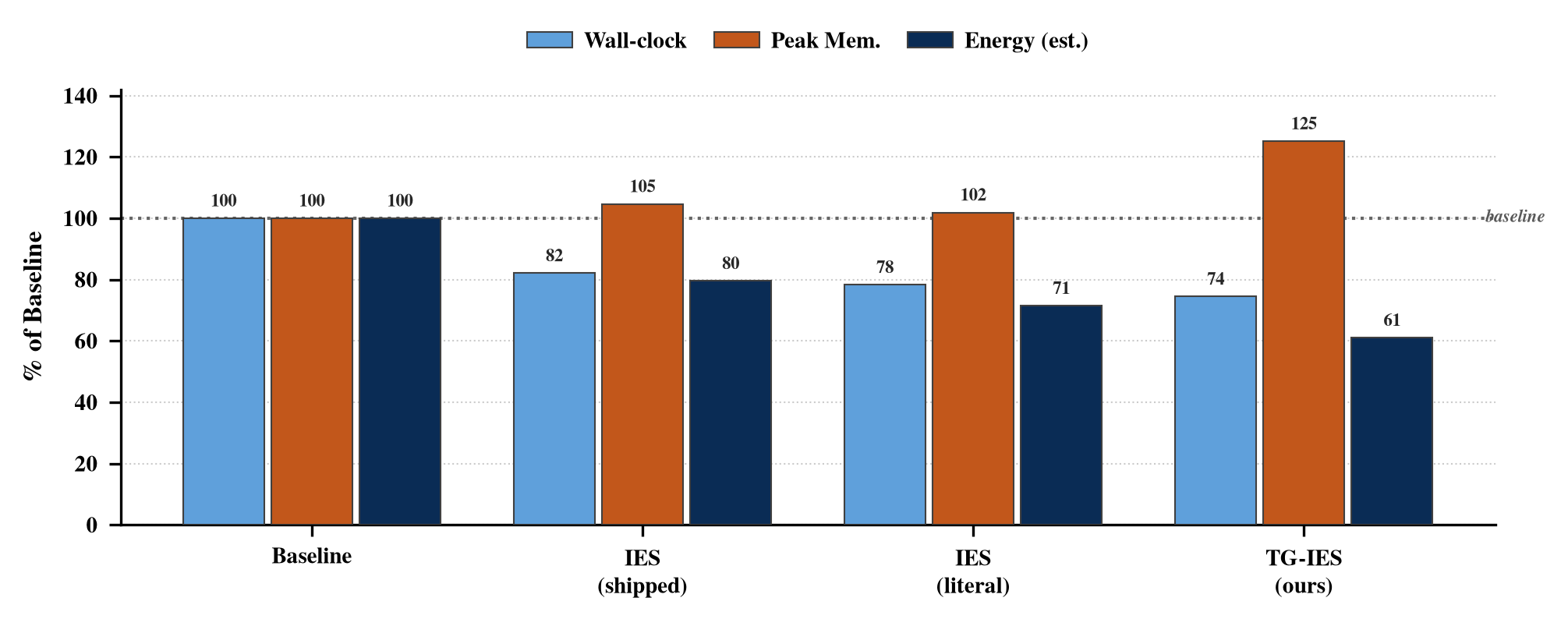}
\caption{Wall-clock time, peak GPU memory, and estimated energy, each as \% of no-removal baseline. Wall-clock and energy fall furthest for TG-IES; peak memory is the one metric that rises above baseline, only for TG-IES.}
\label{fig:efficiency}
\end{figure}

Wall-clock is the per-epoch time already logged by every 200-epoch run; peak memory, mean utilization, and mean power were collected separately via out-of-process \texttt{nvidia-smi} polling on fresh 8-epoch instrumented repetitions; estimated energy multiplies each method's measured mean power by its real 200-epoch wall-clock. TG-IES is fastest and most estimated-energy-efficient ($25.5\%$ faster, $\sim\!39\%$ lower energy than baseline) but has the \emph{highest} peak memory ($1454$ MiB, $\sim\!25\%$ above baseline), attributable specifically to its one-time recalibration pass: estimating the gradient-norm-floor percentile forwards a subsample in batches of $256$ (four times the training batch size), and PyTorch's caching allocator does not release the resulting larger reservation for the rest of the run. This is a genuine memory/compute trade-off specific to TG-IES's calibration mechanism.

\section{Additional Figures for Finding 2 and the Mastery-to-Gradient Bridge}
\label{app:extra-figs}

\begin{figure}[h]
\centering
\includegraphics[width=0.48\linewidth]{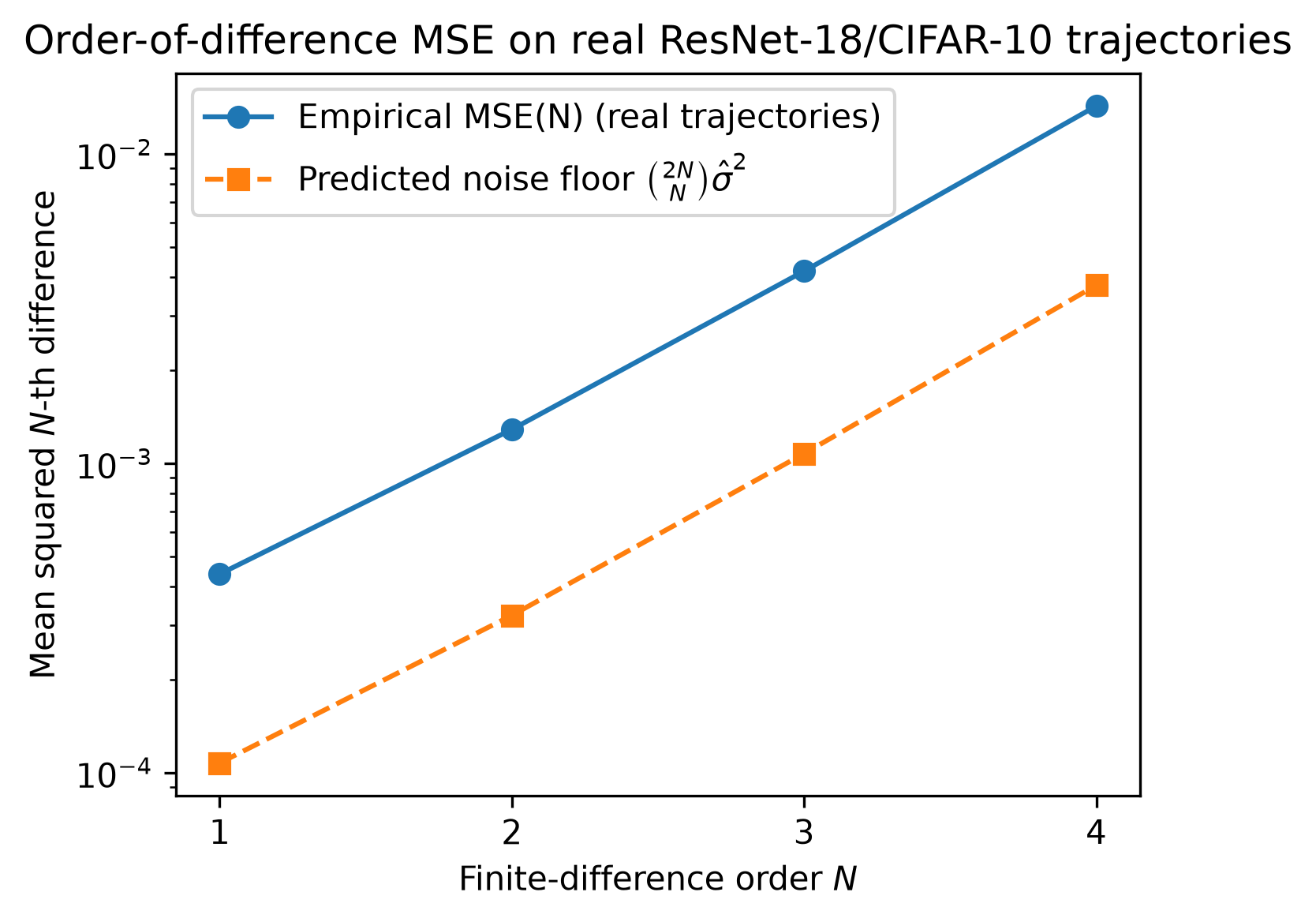}
\hfill
\includegraphics[width=0.48\linewidth]{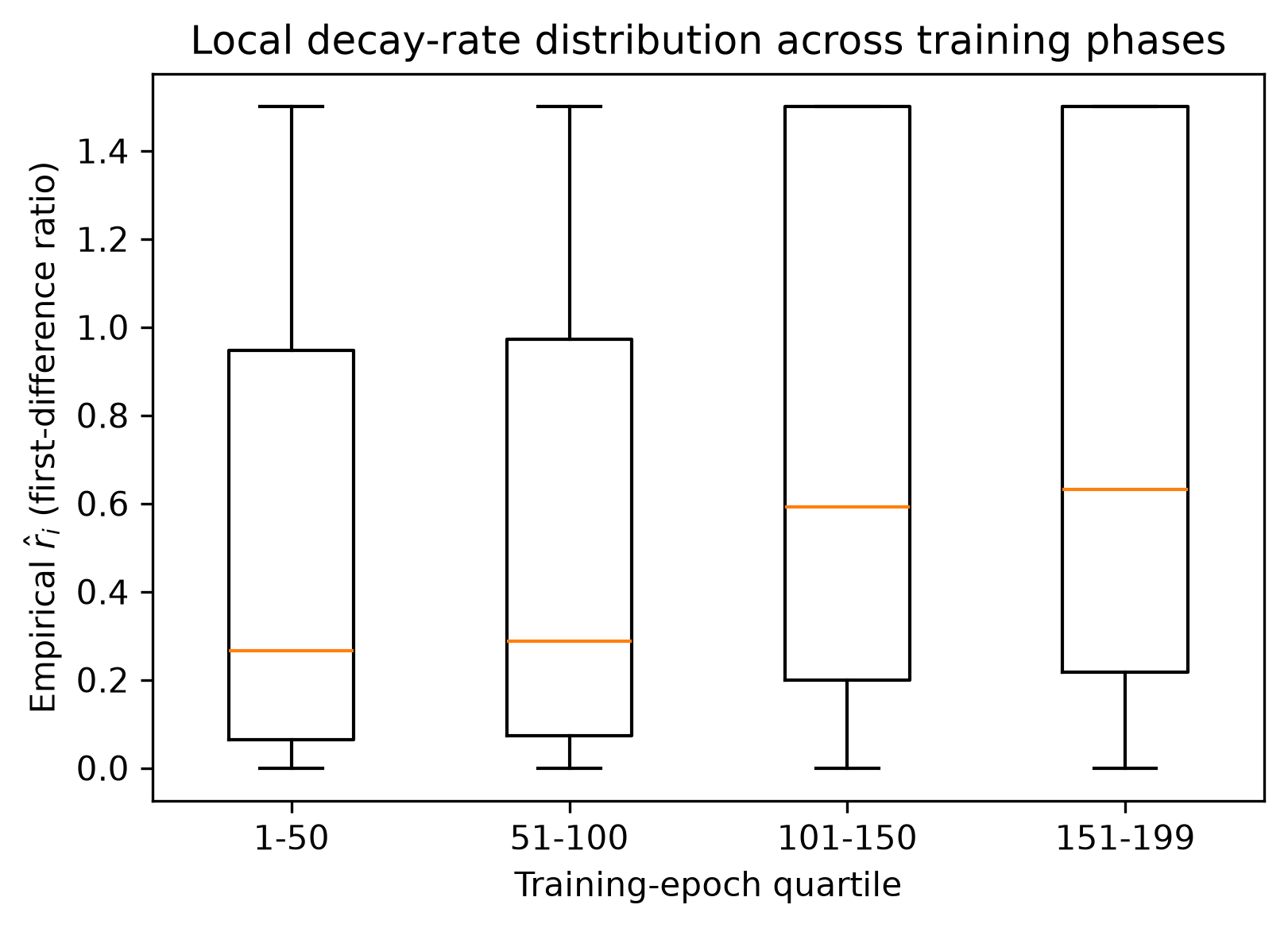}
\caption{Left: empirical MSE$(N)$ vs.\ predicted noise floor, naive population-average test (all instances, epochs 101--200). This pooled test does not isolate \cref{prop:mse}'s SNR band and finds $N{=}1$ lowest; the band-restricted test (main text), which evaluates each instance at the epoch its signal first drops below the calibrated $\delta{=}0.01597$, finds $N{=}2$ lowest ($7.5\times10^{-5}$ pooled over $46{,}779$ instances), as predicted. Right: empirical $\hat r_i$ pooled per training-epoch quartile; rate rises and spread narrows in the second half of training (mean $\hat r$ from $0.53$ to $0.76$; CV from $1.05$ to $0.75$), so \cref{ass:geometric} fits late training better.}
\label{fig:orderdiff}
\label{fig:phasequartile}
\end{figure}

\begin{figure}[h]
\centering
\includegraphics[width=0.95\linewidth]{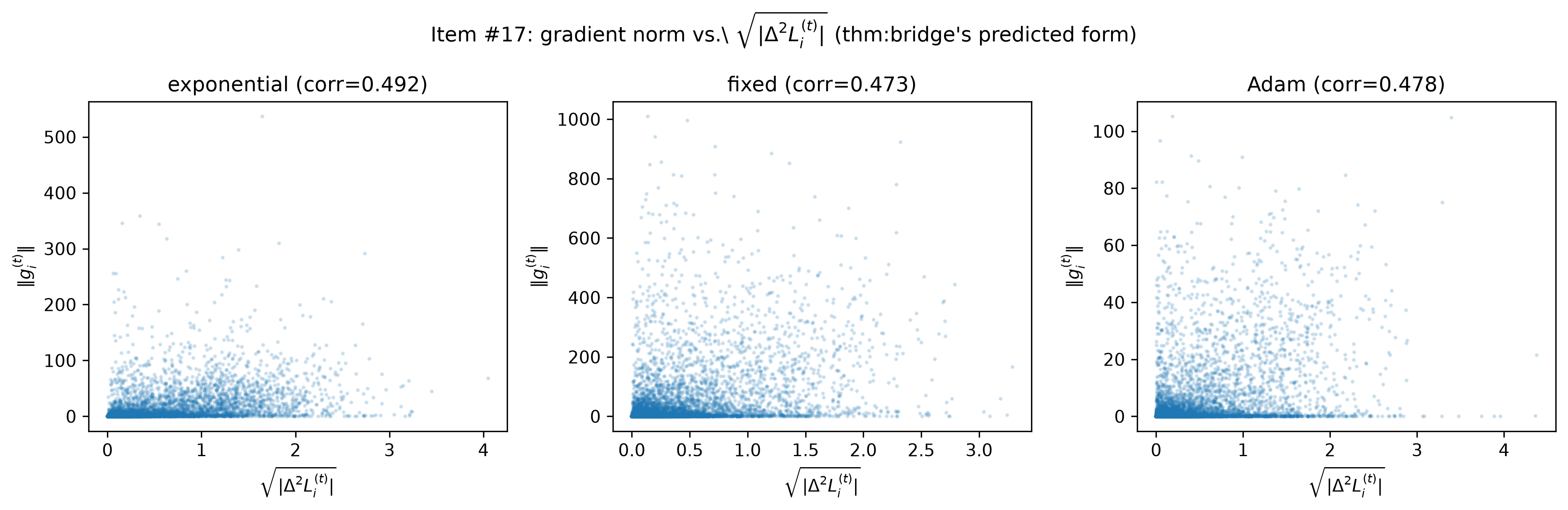}
\caption{Per-instance gradient norm $\|g_i^{(t)}\|$ vs.\ $\sqrt{|\Delta^2 L_i^{(t)}|}$, all three LR schedules, primary architecture/dataset (300-instance subset). \Cref{thm:bridge} predicts a positive, roughly linear relationship; correlations shown per panel.}
\label{fig:bridgenormcheck}
\end{figure}

\section{ViT-Tiny: Full Results and Caveats}
\label{app:vit}

Every architecture in the main matrix is convolutional; here we test whether TG-IES's calibration -- derived under no architectural assumption beyond a per-instance loss and gradient -- transfers to a Vision Transformer \citep{dosovitskiy2021image}, trained from scratch at CIFAR scale with the same four fixed calibration constants as every convolutional cell. We use a CIFAR-native ViT-Tiny: patch-4 on the $32{\times}32$ input (64 tokens plus CLS), 6 pre-norm blocks, 192-dim, 3 heads, $2.69$M parameters -- not the ImageNet-oriented patch-16 configuration, whose tokenization would yield only 4 tokens at this resolution. AdamW (lr $10^{-3}$, weight decay $0.05$), 10-epoch linear warmup then cosine decay over 200 epochs; we validated the baseline alone before running the full comparison.

\begin{table}[h]
\caption{ViT-Tiny (CIFAR-native, patch-4), CIFAR-10 and CIFAR-100, AdamW warmup-cosine, mean $\pm$ std, 3 seeds.}
\label{tab:vit10}
\label{tab:vit100}
\centering
\small
\begin{tabular}{lcccc}
\toprule
& \multicolumn{2}{c}{CIFAR-10} & \multicolumn{2}{c}{CIFAR-100} \\
Method & Acc.\ (\%) & Saved (\%) & Acc.\ (\%) & Saved (\%) \\
\midrule
Baseline & $84.07\pm0.35$ & $0.03\pm0.00$ & $59.25\pm0.24$ & $0.03\pm0.00$ \\
IES (shipped) & $83.92\pm0.18$ & $13.13\pm0.21$ & $59.19\pm0.22$ & $8.38\pm0.22$ \\
IES (literal) & $83.86\pm0.48$ & $16.68\pm0.32$ & $58.77\pm0.24$ & $12.51\pm0.21$ \\
\win{TG-IES (ours)} & \lose{$83.71\pm0.47$} & \win{$\mathbf{25.52\pm0.26}$} & \lose{$58.56\pm0.47$} & \win{$\mathbf{26.28\pm0.14}$} \\
\bottomrule
\end{tabular}
\end{table}

\textbf{Four findings.} The savings advantage transfers and grows with difficulty: $1.53\times$ IES (literal) on CIFAR-10, $2.10\times$ on CIFAR-100, tight seed variance on savings specifically. Noise-adaptive threshold calibration, not the fixed-$\delta$ methods' hand-tuned constant, is the mechanism: $\delta$ rescales $186\times$ (CIFAR-10) and $333\times$ (CIFAR-100) with no constant retuned by hand. Absolute savings are much lower than CNNs ($\sim\!26\%$ vs.\ $\sim\!54.5\%$ DenseNet-121). There is a small accuracy cost on transformers unlike CNNs ($0.36$--$0.69$ points, widest seed spread of any method), suggesting instance removal destabilizes a data-starved, from-scratch ViT more than a CNN.

\textbf{Caveats.} $K^*$ was pinned at the configured ceiling ($15$) in every ViT run: \cref{cor:kstar}'s closed form asked for more patience than allowed, so these numbers reflect the theory clamped by configuration; lifting the cap is untested. Savings arrive late (exclusions ramp steeply only in the second half of training). $\delta$ recalibrates once, at epoch 30 as in every CNN cell; whether this timing is also right for a transformer's different loss dynamics is untested. Absolute accuracy is well below the CNN cells, expected for a from-scratch transformer on 50k images with crop-and-flip augmentation only.

\section{\texorpdfstring{$K^*$}{K*}-Sweep: Full Trajectory Figure and Discussion}
\label{app:kstartraj}

\begin{table}[h]
\caption{$K^*$-sweep, primary cell, mean $\pm$ std, 3 seeds, $\delta$ noise-calibrated. Matched baseline: $95.08\pm0.14\%$ acc.}
\label{tab:kstarsweep-full}
\centering
\small
\begin{tabular}{ccc}
\toprule
$K^*$ (fixed) & Acc.\ (\%) & Saved (\%) \\
\midrule
1 & $94.55\pm0.16$ & $\mathbf{66.98\pm0.62}$ \\
2 & $94.56\pm0.08$ & $65.75\pm0.27$ \\
4 & $94.62\pm0.10$ & $63.24\pm0.44$ \\
8 & $94.80\pm0.09$ & $58.85\pm0.57$ \\
15 & $94.80\pm0.09$ & $53.38\pm0.58$ \\
30 & $\mathbf{94.87\pm0.09}$ & $44.13\pm0.40$ \\
\bottomrule
\end{tabular}
\end{table}

\begin{figure}[h]
\centering
\includegraphics[width=0.95\linewidth]{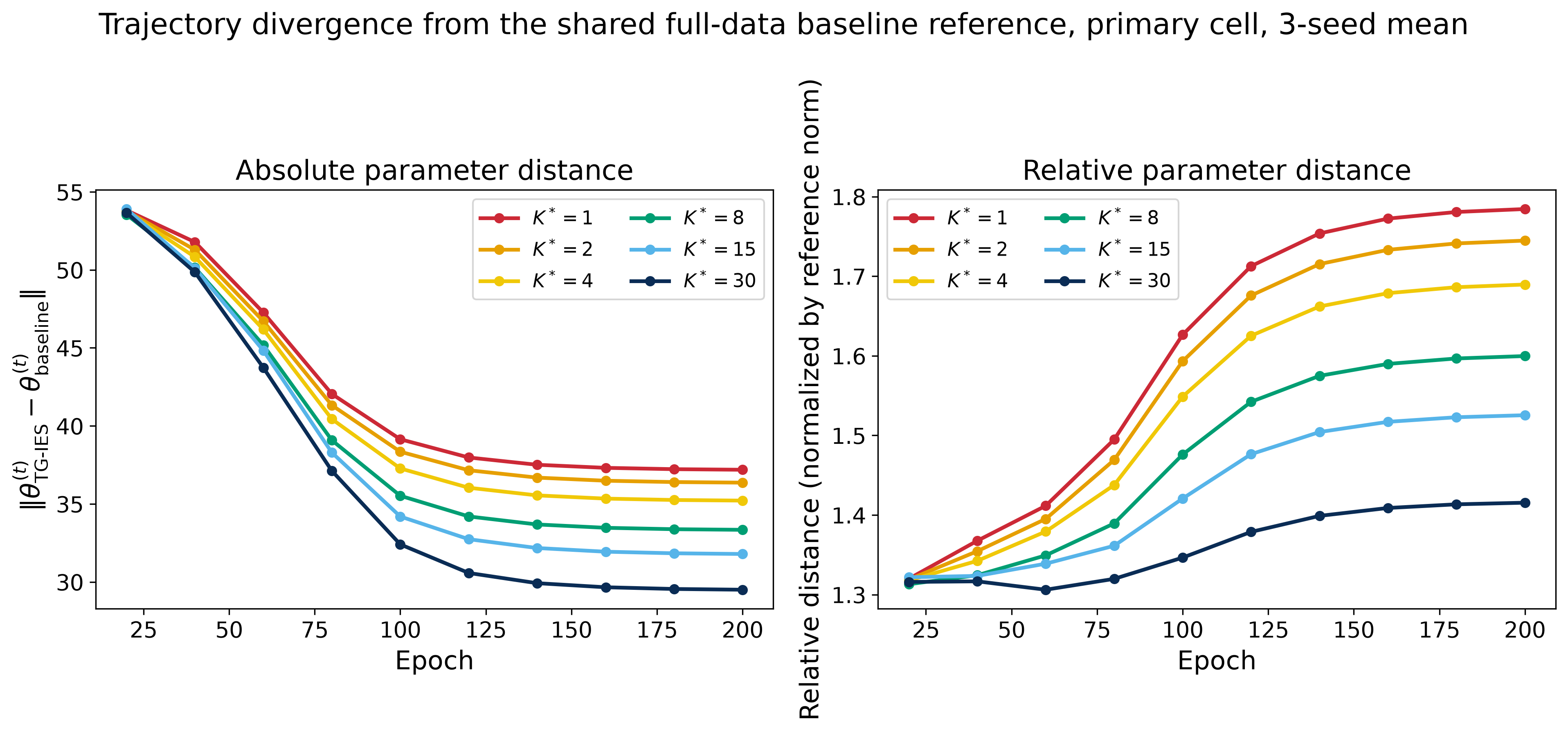}
\caption{Parameter-space distance between TG-IES's actual trajectory and the shared-seed baseline reference vs.\ epoch, one curve per fixed $K^*$, 3-seed mean. Left: absolute distance. Right: normalized by the baseline's own parameter norm at that epoch.}
\label{fig:kstartraj}
\end{figure}

Absolute parameter distance from baseline \emph{decreases} over training for every $K^*$ tested, from $\approx\!54$ at epoch $20$ to $29$--$37$ by epoch $200$, plateauing over the last third of training. At every snapshot epoch, larger $K^*$ gives a smaller distance, with no crossings. This is the direction suggested by \cref{thm:actual}: longer patience delays exclusion and keeps the passive set, and plausibly the residual $\omega_t$, smaller. The \emph{normalized} distance (right panel) instead rises and plateaus higher, because the reference parameter norm itself shrinks faster (weight decay) than the absolute gap does. We report both panels, since this experiment does not settle which normalization is the right one for assessing multi-step deviation. We read the plateau as empirical behavior consistent with a bounded aggregate residual. It is not a validation of \cref{thm:actual}, whose residual bound is not measured here, and the plateau is also not predicted by that theorem's smoothness-only bound, which grows with the horizon (\cref{rem:contraction-direction}).

\section{Reactivation Probe: Does the Excluded Set Actually Satisfy the Gradient-Floor Hypothesis?}
\label{app:reactprobe}

The single-removal certificate (\cref{thm:safe}), the reference horizon used to initialize $K^*$ (\cref{cor:kstar}), and the per-instance multi-step bound (\cref{prop:multistep}) all rest on a per-instance hypothesis: every currently excluded instance $i$ satisfies $\|g_i^{(t)}\|<\varepsilon_0$, where $\varepsilon_0$ is the cheap last-layer proxy percentile estimated once at epoch 30 (\cref{app:practical}). The other checks in this paper (\cref{sec:bridge}'s empirical check, \cref{app:extra-figs}) use the same cheap proxy or a correlational test, not the \emph{true} full-model gradient norm on the actual excluded set during an otherwise unmodified run. We test the hypothesis directly: \texttt{-{}-reactivation\_probe\_every 20} adds, every 20 epochs, an individual batch-size-1 forward+backward pass on a random subsample of $300$ currently-passive instances (noise-calibrated $\delta$, primary cell, 3 seeds, tagged \texttt{\_reactprobe}), computing each one's true $\|g_i^{(t)}\|$ and checking it against the run's own calibrated $\varepsilon_0$. This instrumentation is purely diagnostic: it does not feed back into the algorithm's actual removal or reactivation decisions, so the probed runs' accuracy and savings ($94.67\pm0.11\%$ acc., $53.66\pm0.10\%$ saved) match the unprobed primary-cell TG-IES numbers (\cref{tab:primary}) closely, confirming the probe itself does not perturb training.

\begin{table}[h]
\caption{Reactivation probe, primary cell, 3-seed mean of \texttt{frac\_exceed\_eps0} (fraction of a 300-instance random subsample of the passive set whose true gradient norm exceeds the run's calibrated $\varepsilon_0$) at each probed epoch.}
\label{tab:reactprobe}
\centering
\small
\begin{tabular}{cccccccccc}
\toprule
Epoch & 40 & 60 & 80 & 100 & 120 & 140 & 160 & 180 & 200 \\
\midrule
Frac.\ $>\varepsilon_0$ & $0.52$ & $0.48$ & $0.39$ & $0.46$ & $0.48$ & $0.46$ & $0.47$ & $0.50$ & $0.50$ \\
\bottomrule
\end{tabular}
\end{table}

\textbf{Finding: roughly half the excluded set exceeds the calibrated floor at every check, with no decreasing trend.} Pooling all 9 probed epochs across 3 seeds ($27$ (epoch, seed) points, $300$ probed instances each), the mean fraction of the passive set exceeding $\varepsilon_0$ is $47.4\%$, ranging $34$--$59\%$ epoch-to-epoch with no consistent decrease over training (\cref{tab:reactprobe}) -- if anything, the fraction is lowest at epoch 80 and drifts back up to $\approx\!50\%$ by epoch 200, the opposite of what a hypothesis that becomes \emph{more} valid as training progresses would predict. This contradicts a literal reading of the per-instance hypothesis: at any given check, roughly half of the instances the algorithm currently excludes have a true gradient norm above the floor that the per-instance analysis assumes. The picture is not uniformly bad, however: each seed's \emph{median} probed gradient norm ($0.025$--$0.029$ across seeds and epochs) sits close to that seed's own $\varepsilon_0$ ($0.027$--$0.030$), while the \emph{mean} probed gradient norm is $2$--$10\times$ higher ($0.07$--$0.29$) -- a heavy right tail, not a uniformly shifted distribution. So the calibrated $\varepsilon_0$ is a reasonable estimate of a \emph{typical} (median) excluded instance's gradient floor, but a systematically loose one for the population as a whole, since roughly half the excluded set sits above it at any moment.

\textbf{Why this motivates the aggregate formulation.} On the same primary cell, the trajectory-divergence measurement of \cref{sec:kstartraj} finds that the absolute parameter gap from the full-data baseline \emph{plateaus} rather than growing, although the per-instance violation rate never approaches zero. The inexact-gradient view of \cref{sec:inexact} locates the quantity that matters for this comparison: by \cref{thm:actual}, multi-step deviation depends on the passive set only through the aggregate residual $\omega_t=\frac1N\sum_{i\in P_t}g_i$, not through any individual $\|g_i\|$. The per-instance floor is thus a sufficient but unnecessarily strong way to control $\omega_t$ (\cref{prop:multistep}), and this probe shows that it is too strong to be treated as an empirical property of the passive set. The probe does \emph{not}, however, measure $\omega_t$, so it does not establish the aggregate condition required by \cref{thm:actual}. We therefore keep that condition as an explicit assumption (\cref{sec:limitations}) rather than claiming it has been verified, and we treat the plateau as behavior consistent with a bounded aggregate residual, not as proof of one.

\section{Extended Future-Work List}
\label{app:limitations-extended}

(1) \emph{Scale.} All experiments are CIFAR-scale; TinyImageNet or ImageNet-100 would test whether calibration (in particular the gradient-norm-floor proxy behind $K^*$) transfers to higher resolution and more classes. (2) \emph{Architecture family, partly closed.} \Cref{app:vit} adds one CIFAR-native ViT-Tiny; a larger transformer (ViT-Small, DeiT-Tiny) or one at a resolution where patch-16 tokenization is meaningful remains untried, as does raising \texttt{-{}-max\_kstar}. (3) \emph{Optimizer/schedule sensitivity.} \Cref{sec:settings,app:adam} show TG-IES trailing simpler IES variants on settings without the exponential schedule; \cref{sec:lrmechanism}'s candidate mechanism is only partly confirmed by direct instrumentation, and a controlled ablation calibrating $\delta,K^*$ under a matched noise/gradient-norm profile across schedules remains unrun. The bridge's qualitative direction ($\|g_i^{(t)}\|$ vs.\ $\sqrt{|\Delta^2L_i^{(t)}|}$) is now confirmed across all three schedules; only the \emph{exact} bound $\varepsilon_i(\delta)$, needing per-instance $r_i,\Lambda_i$, remains open. (4) RHO-LOSS \citep{mindermann2022prioritized} and RL-style dynamic pruning \citep{zhu2026dppo} are not compared. (5) \emph{The aggregate residual.} \Cref{thm:actual,thm:descent} are conditional on a bound for $\omega_t$, which the current implementation neither measures nor controls. Measuring $\|\omega_t\|$ during training, bounding it from observable quantities, and turning such a bound into a residual-aware reactivation rule would each require instrumentation beyond the present study. A contraction-based version of \cref{thm:actual} that removes the $(1+\beta\eta)^H$ factor (\cref{rem:contraction-direction}) is the corresponding theoretical step. We view these as the most important remaining gaps, ahead of (1)--(4).

\section{Additional Experimental Details}
\label{app:experiment}

\textbf{A reproducibility correction.} \label{app:repro}An earlier version reported TG-IES collapsing to $1.8\%$ savings on VGG-16. We traced this to a stale configuration that repeated the recalibration step. Re-run with the one-time recalibration used everywhere else, VGG-16 gives the largest saving of any method on that setting ($54.43\%$, \cref{tab:overview}). The fixed-LR ResNet-18 gap to IES (literal) ($29.27\%$ vs.\ $49.42\%$) is genuine and persists after re-running. The failure mode that the stale configuration had suggested, a patience rule incompatible with the threshold, is real, and it is demonstrated by the controlled ablation of \cref{sec:ablation} rather than by this corrected data point.

\textbf{Full per-run results.} \Cref{tab:fullresults} lists every IES-family run underlying the main tables and \cref{app:adam,app:vit} (96 runs total). Two rows (ResNet-18/F and VGG-16/E, TG-IES, seed 0) supersede values from an earlier version, traced to the reproducibility issue above; stale originals are retained in the released code's \texttt{results/} directory with a \texttt{\_STALE\_preverify} suffix for audit purposes.

\begingroup
\small
\begin{longtable}{llccc}
\caption{All 96 IES-family runs. E = SGD exponential LR, F = SGD fixed LR, C100 = CIFAR-100, ViT/C10 and ViT/C100 = ViT-Tiny on CIFAR-10/CIFAR-100. Color on TG-IES rows compares against the other methods for that same seed.}
\label{tab:fullresults} \\
\toprule
Setting & Method & Seed & Acc.\ (\%) & Saved (\%) \\
\midrule
\endfirsthead
\multicolumn{5}{l}{\tablename~\thetable{} -- continued from previous page} \\
\toprule
Setting & Method & Seed & Acc.\ (\%) & Saved (\%) \\
\midrule
\endhead
\midrule
\multicolumn{5}{r}{continued on next page} \\
\endfoot
\bottomrule
\endlastfoot
ResNet-18/E & Baseline & 0 & 94.96 & 0.03 \\
ResNet-18/E & Baseline & 1 & 95.18 & 0.03 \\
ResNet-18/E & Baseline & 2 & 94.95 & 0.03 \\
ResNet-18/E & IES (shipped) & 0 & 94.61 & 45.36 \\
ResNet-18/E & IES (shipped) & 1 & 94.74 & 46.09 \\
ResNet-18/E & IES (shipped) & 2 & 94.79 & 45.53 \\
ResNet-18/E & IES (literal) & 0 & 94.92 & 48.59 \\
ResNet-18/E & IES (literal) & 1 & 94.66 & 49.00 \\
ResNet-18/E & IES (literal) & 2 & 94.97 & 49.47 \\
ResNet-18/E & TG-IES & 0 & \lose{94.61} & \win{53.48} \\
ResNet-18/E & TG-IES & 1 & \lose{95.16} & \win{53.44} \\
ResNet-18/E & TG-IES & 2 & \lose{94.77} & \win{53.98} \\
ResNet-18/F & Baseline & 0 & 92.28 & 0.03 \\
ResNet-18/F & Baseline & 1 & 92.21 & 0.03 \\
ResNet-18/F & Baseline & 2 & 92.13 & 0.03 \\
ResNet-18/F & IES (shipped) & 0 & 91.78 & 44.71 \\
ResNet-18/F & IES (shipped) & 1 & 91.69 & 44.55 \\
ResNet-18/F & IES (shipped) & 2 & 91.61 & 44.23 \\
ResNet-18/F & IES (literal) & 0 & 91.77 & 49.60 \\
ResNet-18/F & IES (literal) & 1 & 91.66 & 49.38 \\
ResNet-18/F & IES (literal) & 2 & 91.87 & 49.27 \\
ResNet-18/F & TG-IES & 0 & \lose{91.71} & \lose{29.57} \\
ResNet-18/F & TG-IES & 1 & \lose{91.96} & \lose{29.27} \\
ResNet-18/F & TG-IES & 2 & \lose{92.07} & \lose{28.98} \\
VGG-16/E & Baseline & 0 & 93.25 & 0.03 \\
VGG-16/E & Baseline & 1 & 93.21 & 0.03 \\
VGG-16/E & Baseline & 2 & 93.44 & 0.03 \\
VGG-16/E & IES (shipped) & 0 & 93.33 & 38.67 \\
VGG-16/E & IES (shipped) & 1 & 93.27 & 37.46 \\
VGG-16/E & IES (shipped) & 2 & 93.34 & 36.99 \\
VGG-16/E & IES (literal) & 0 & 93.04 & 41.85 \\
VGG-16/E & IES (literal) & 1 & 93.35 & 41.45 \\
VGG-16/E & IES (literal) & 2 & 93.07 & 39.74 \\
VGG-16/E & TG-IES & 0 & \lose{92.96} & \win{54.36} \\
VGG-16/E & TG-IES & 1 & \lose{93.15} & \win{54.56} \\
VGG-16/E & TG-IES & 2 & \lose{93.23} & \win{54.37} \\
DenseNet-121/E & Baseline & 0 & 95.37 & 0.03 \\
DenseNet-121/E & Baseline & 1 & 95.50 & 0.03 \\
DenseNet-121/E & Baseline & 2 & 95.22 & 0.03 \\
DenseNet-121/E & IES (shipped) & 0 & 95.37 & 48.19 \\
DenseNet-121/E & IES (shipped) & 1 & 95.47 & 49.79 \\
DenseNet-121/E & IES (shipped) & 2 & 95.25 & 47.83 \\
DenseNet-121/E & IES (literal) & 0 & 95.26 & 51.96 \\
DenseNet-121/E & IES (literal) & 1 & 95.24 & 52.27 \\
DenseNet-121/E & IES (literal) & 2 & 95.23 & 50.59 \\
DenseNet-121/E & TG-IES & 0 & \lose{95.24} & \win{54.73} \\
DenseNet-121/E & TG-IES & 1 & \lose{95.22} & \win{55.06} \\
DenseNet-121/E & TG-IES & 2 & \win{95.38} & \win{53.79} \\
ResNet-18/Adam & Baseline & 0 & 92.98 & 0.03 \\
ResNet-18/Adam & Baseline & 1 & 92.88 & 0.03 \\
ResNet-18/Adam & Baseline & 2 & 92.86 & 0.03 \\
ResNet-18/Adam & IES (shipped) & 0 & 92.67 & 56.04 \\
ResNet-18/Adam & IES (shipped) & 1 & 92.50 & 55.80 \\
ResNet-18/Adam & IES (shipped) & 2 & 92.66 & 56.23 \\
ResNet-18/Adam & IES (literal) & 0 & 92.49 & 59.80 \\
ResNet-18/Adam & IES (literal) & 1 & 92.47 & 60.39 \\
ResNet-18/Adam & IES (literal) & 2 & 92.30 & 60.10 \\
ResNet-18/Adam & TG-IES & 0 & \lose{92.33} & \lose{30.86} \\
ResNet-18/Adam & TG-IES & 1 & \lose{92.49} & \lose{29.64} \\
ResNet-18/Adam & TG-IES & 2 & \lose{92.51} & \lose{30.04} \\
ResNet-18/C100 & Baseline & 0 & 76.46 & 0.03 \\
ResNet-18/C100 & Baseline & 1 & 77.27 & 0.03 \\
ResNet-18/C100 & Baseline & 2 & 76.91 & 0.03 \\
ResNet-18/C100 & IES (shipped) & 0 & 76.64 & 11.74 \\
ResNet-18/C100 & IES (shipped) & 1 & 76.69 & 11.99 \\
ResNet-18/C100 & IES (shipped) & 2 & 77.29 & 12.19 \\
ResNet-18/C100 & IES (literal) & 0 & 76.33 & 18.18 \\
ResNet-18/C100 & IES (literal) & 1 & 76.92 & 18.32 \\
ResNet-18/C100 & IES (literal) & 2 & 77.09 & 18.98 \\
ResNet-18/C100 & TG-IES & 0 & \lose{75.97} & \win{57.81} \\
ResNet-18/C100 & TG-IES & 1 & \lose{75.96} & \win{57.73} \\
ResNet-18/C100 & TG-IES & 2 & \lose{76.11} & \win{57.75} \\
ViT-Tiny/C10 & Baseline & 0 & 84.25 & 0.03 \\
ViT-Tiny/C10 & Baseline & 1 & 84.30 & 0.03 \\
ViT-Tiny/C10 & Baseline & 2 & 83.67 & 0.03 \\
ViT-Tiny/C10 & IES (shipped) & 0 & 84.13 & 13.37 \\
ViT-Tiny/C10 & IES (shipped) & 1 & 83.83 & 12.97 \\
ViT-Tiny/C10 & IES (shipped) & 2 & 83.81 & 13.06 \\
ViT-Tiny/C10 & IES (literal) & 0 & 84.32 & 17.05 \\
ViT-Tiny/C10 & IES (literal) & 1 & 83.90 & 16.48 \\
ViT-Tiny/C10 & IES (literal) & 2 & 83.37 & 16.52 \\
ViT-Tiny/C10 & TG-IES & 0 & \lose{84.26} & \win{25.80} \\
ViT-Tiny/C10 & TG-IES & 1 & \lose{83.47} & \win{25.30} \\
ViT-Tiny/C10 & TG-IES & 2 & \lose{83.41} & \win{25.45} \\
ViT-Tiny/C100 & Baseline & 0 & 59.19 & 0.03 \\
ViT-Tiny/C100 & Baseline & 1 & 59.51 & 0.03 \\
ViT-Tiny/C100 & Baseline & 2 & 59.05 & 0.03 \\
ViT-Tiny/C100 & IES (shipped) & 0 & 58.96 & 8.20 \\
ViT-Tiny/C100 & IES (shipped) & 1 & 59.40 & 8.62 \\
ViT-Tiny/C100 & IES (shipped) & 2 & 59.20 & 8.33 \\
ViT-Tiny/C100 & IES (literal) & 0 & 58.54 & 12.28 \\
ViT-Tiny/C100 & IES (literal) & 1 & 59.02 & 12.70 \\
ViT-Tiny/C100 & IES (literal) & 2 & 58.75 & 12.55 \\
ViT-Tiny/C100 & TG-IES & 0 & \lose{58.97} & \win{26.23} \\
ViT-Tiny/C100 & TG-IES & 1 & \lose{58.05} & \win{26.43} \\
ViT-Tiny/C100 & TG-IES & 2 & \lose{58.66} & \win{26.17} \\
\end{longtable}
\endgroup

\textbf{Hardware and reproducibility.} All runs used a single NVIDIA RTX 5090 GPU; \cref{app:efficiency} reports precise wall-clock, memory, utilization, and estimated energy. All four methods are implemented in a single training entry point taking \texttt{-{}-method \{baseline,ies\_shipped,ies\_alg1,tgies\}} so data pipeline, model, and optimizer code are shared exactly across methods; a single script drives the full experimental matrix and supports resuming any interrupted run.

\section{Statistical Analysis of Seed-Level Comparisons}
\label{app:stats}

Every $\pm$ figure elsewhere in this paper is a sample standard deviation over 3 seeds, not a significance test. This section reports one directly. \texttt{setup\_seed(seed)} is called once per run, before model initialization and data loading, so all four methods sharing a seed share the same initial weights and data-shuffling order -- a genuine paired design, analyzed via paired per-seed differences rather than treating (method, seed) runs as independent. Eight settings have 3 seeds: the primary cell, both secondary CNN cells, Adam, CIFAR-100, DenseNet-121, and both ViT-Tiny cells; the three LR-sweep cells of \cref{sec:lrmechanism} remain single-seed by design.

For each 3-seed setting we compare TG-IES against each other method on best test accuracy, backprop-saved \%, and FLOP-saved \%, reporting: the paired mean difference and its sample standard deviation; a $95\%$ CI via the exact $t$-distribution at $2$ degrees of freedom ($t_{0.975,2}=4.303$); Cohen's $d_z$ (paired effect size); and an exact two-sided sign test (floor $p=2/8=0.25$ at $n=3$, reached whenever all seeds agree in sign, regardless of effect size).

\begingroup
\small
\begin{longtable}{llrrrr}
\caption{Paired per-seed statistics, TG-IES vs.\ each other method, every 3-seed setting ($n=3$). CI: $95\%$, $t(\mathrm{df}{=}2)$. $d_z$: Cohen's paired effect size. $p_{\mathrm{sign}}$: exact two-sided sign test (floor $0.25$ at $n=3$).}
\label{tab:stats} \\
\toprule
Metric & Comparison (TG-IES $-$) & Mean $\Delta$ & $95\%$ CI & $d_z$ & $p_{\mathrm{sign}}$ \\
\midrule
\endfirsthead
\multicolumn{6}{l}{\tablename~\thetable{} -- continued from previous page} \\
\toprule
Metric & Comparison (TG-IES $-$) & Mean $\Delta$ & $95\%$ CI & $d_z$ & $p_{\mathrm{sign}}$ \\
\midrule
\endhead
\midrule
\multicolumn{6}{r}{continued on next page} \\
\endfoot
\bottomrule
\endlastfoot
\multicolumn{6}{l}{\textbf{Primary: ResNet-18, CIFAR-10, SGD (exponential LR)}} \\
\midrule
Acc.\ (\%) & Baseline & \lose{$-0.18$} & $[-0.59,\,0.23]$ & $-1.11$ & $0.250$ \\
Acc.\ (\%) & IES (shipped) & \win{$0.13$} & $[-0.48,\,0.75]$ & $0.54$ & $1.000$ \\
Acc.\ (\%) & IES (literal) & \lose{$0.00$} & $[-1.09,\,1.09]$ & $-0.01$ & $1.000$ \\
Backprop Saved (\%) & Baseline & \win{$53.60$} & $[52.86,\,54.34]$ & $179.2$ & $0.250$ \\
Backprop Saved (\%) & IES (shipped) & \win{$7.97$} & $[6.57,\,9.37]$ & $14.2$ & $0.250$ \\
Backprop Saved (\%) & IES (literal) & \win{$4.61$} & $[4.00,\,5.22]$ & $18.8$ & $0.250$ \\
FLOP Saved (\%) & Baseline & \win{$35.71$} & $[35.22,\,36.21]$ & $179.2$ & $0.250$ \\
FLOP Saved (\%) & IES (shipped) & \win{$5.31$} & $[4.38,\,6.24]$ & $14.2$ & $0.250$ \\
FLOP Saved (\%) & IES (literal) & \win{$3.07$} & $[2.67,\,3.48]$ & $18.8$ & $0.250$ \\
\midrule
\multicolumn{6}{l}{\textbf{Secondary: ResNet-18, CIFAR-10, SGD (fixed LR)}} \\
\midrule
Acc.\ (\%) & Baseline & \lose{$-0.29$} & $[-0.93,\,0.35]$ & $-1.14$ & $0.250$ \\
Acc.\ (\%) & IES (shipped) & \win{$0.22$} & $[-0.45,\,0.89]$ & $0.82$ & $1.000$ \\
Acc.\ (\%) & IES (literal) & \win{$0.15$} & $[-0.31,\,0.61]$ & $0.79$ & $1.000$ \\
Backprop Saved (\%) & Baseline & \win{$29.24$} & $[28.51,\,29.98]$ & $98.8$ & $0.250$ \\
Backprop Saved (\%) & IES (shipped) & \lose{$-15.23$} & $[-15.42,\,-15.03]$ & $-194.1$ & $0.250$ \\
Backprop Saved (\%) & IES (literal) & \lose{$-20.14$} & $[-20.48,\,-19.80]$ & $-148.1$ & $0.250$ \\
FLOP Saved (\%) & Baseline & \win{$19.49$} & $[18.99,\,19.97]$ & $98.8$ & $0.250$ \\
FLOP Saved (\%) & IES (shipped) & \lose{$-10.14$} & $[-10.27,\,-10.02]$ & $-194.1$ & $0.250$ \\
FLOP Saved (\%) & IES (literal) & \lose{$-13.42$} & $[-13.65,\,-13.20]$ & $-148.1$ & $0.250$ \\
\midrule
\multicolumn{6}{l}{\textbf{Secondary: VGG-16, CIFAR-10, SGD (exponential LR)}} \\
\midrule
Acc.\ (\%) & Baseline & \lose{$-0.19$} & $[-0.48,\,0.10]$ & $-1.60$ & $0.250$ \\
Acc.\ (\%) & IES (shipped) & \lose{$-0.20$} & $[-0.57,\,0.17]$ & $-1.36$ & $0.250$ \\
Acc.\ (\%) & IES (literal) & \lose{$-0.04$} & $[-0.50,\,0.42]$ & $-0.22$ & $1.000$ \\
Backprop Saved (\%) & Baseline & \win{$54.40$} & $[54.11,\,54.69]$ & $470.5$ & $0.250$ \\
Backprop Saved (\%) & IES (shipped) & \win{$16.72$} & $[14.46,\,18.98]$ & $18.4$ & $0.250$ \\
Backprop Saved (\%) & IES (literal) & \win{$13.42$} & $[10.69,\,16.13]$ & $12.3$ & $0.250$ \\
FLOP Saved (\%) & Baseline & \win{$36.23$} & $[36.04,\,36.43]$ & $470.5$ & $0.250$ \\
FLOP Saved (\%) & IES (shipped) & \win{$11.14$} & $[9.63,\,12.64]$ & $18.4$ & $0.250$ \\
FLOP Saved (\%) & IES (literal) & \win{$8.94$} & $[7.12,\,10.75]$ & $12.3$ & $0.250$ \\
\midrule
\multicolumn{6}{l}{\textbf{Adam: ResNet-18, CIFAR-10}} \\
\midrule
Acc.\ (\%) & Baseline & \lose{$-0.46$} & $[-0.87,\,-0.06]$ & $-2.84$ & $0.250$ \\
Acc.\ (\%) & IES (shipped) & \lose{$-0.17$} & $[-0.58,\,0.24]$ & $-1.01$ & $0.250$ \\
Acc.\ (\%) & IES (literal) & \win{$0.02$} & $[-0.44,\,0.48]$ & $0.13$ & $1.000$ \\
Backprop Saved (\%) & Baseline & \win{$30.15$} & $[28.61,\,31.69]$ & $48.6$ & $0.250$ \\
Backprop Saved (\%) & IES (shipped) & \lose{$-25.85$} & $[-27.28,\,-24.41]$ & $-44.7$ & $0.250$ \\
Backprop Saved (\%) & IES (literal) & \lose{$-29.91$} & $[-32.18,\,-27.65]$ & $-32.8$ & $0.250$ \\
FLOP Saved (\%) & Baseline & \win{$20.09$} & $[19.06,\,21.11]$ & $48.6$ & $0.250$ \\
FLOP Saved (\%) & IES (shipped) & \lose{$-17.22$} & $[-18.18,\,-16.26]$ & $-44.7$ & $0.250$ \\
FLOP Saved (\%) & IES (literal) & \lose{$-19.93$} & $[-21.44,\,-18.42]$ & $-32.8$ & $0.250$ \\
\midrule
\multicolumn{6}{l}{\textbf{CIFAR-100: ResNet-18, SGD (exponential LR)}} \\
\midrule
Acc.\ (\%) & Baseline & \lose{$-0.87$} & $[-1.90,\,0.16]$ & $-2.09$ & $0.250$ \\
Acc.\ (\%) & IES (shipped) & \lose{$-0.86$} & $[-1.55,\,-0.17]$ & $-3.09$ & $0.250$ \\
Acc.\ (\%) & IES (literal) & \lose{$-0.77$} & $[-1.64,\,0.11]$ & $-2.18$ & $0.250$ \\
Backprop Saved (\%) & Baseline & \win{$57.73$} & $[57.64,\,57.83]$ & $1522.5$ & $0.250$ \\
Backprop Saved (\%) & IES (shipped) & \win{$45.79$} & $[45.15,\,46.43]$ & $178.2$ & $0.250$ \\
Backprop Saved (\%) & IES (literal) & \win{$39.27$} & $[38.16,\,40.37]$ & $88.3$ & $0.250$ \\
FLOP Saved (\%) & Baseline & \win{$38.47$} & $[38.40,\,38.53]$ & $1522.5$ & $0.250$ \\
FLOP Saved (\%) & IES (shipped) & \win{$30.51$} & $[30.08,\,30.94]$ & $178.2$ & $0.250$ \\
FLOP Saved (\%) & IES (literal) & \win{$26.17$} & $[25.43,\,26.90]$ & $88.3$ & $0.250$ \\
\midrule
\multicolumn{6}{l}{\textbf{DenseNet-121: CIFAR-10, SGD (exponential LR)}} \\
\midrule
Acc.\ (\%) & Baseline & \lose{$-0.08$} & $[-0.64,\,0.47]$ & $-0.37$ & $1.000$ \\
Acc.\ (\%) & IES (shipped) & \lose{$-0.08$} & $[-0.57,\,0.40]$ & $-0.43$ & $1.000$ \\
Acc.\ (\%) & IES (literal) & \win{$0.04$} & $[-0.21,\,0.28]$ & $0.37$ & $1.000$ \\
Backprop Saved (\%) & Baseline & \win{$54.49$} & $[52.85,\,56.13]$ & $82.5$ & $0.250$ \\
Backprop Saved (\%) & IES (shipped) & \win{$5.92$} & $[4.36,\,7.49]$ & $9.4$ & $0.250$ \\
Backprop Saved (\%) & IES (literal) & \win{$2.92$} & $[2.32,\,3.53]$ & $12.0$ & $0.250$ \\
FLOP Saved (\%) & Baseline & \win{$36.32$} & $[35.22,\,37.41]$ & $82.5$ & $0.250$ \\
FLOP Saved (\%) & IES (shipped) & \win{$3.95$} & $[2.90,\,4.99]$ & $9.4$ & $0.250$ \\
FLOP Saved (\%) & IES (literal) & \win{$1.95$} & $[1.54,\,2.35]$ & $12.0$ & $0.250$ \\
\midrule
\multicolumn{6}{l}{\textbf{ViT-Tiny: CIFAR-10, AdamW (warmup-cosine)}} \\
\midrule
Acc.\ (\%) & Baseline & \lose{$-0.36$} & $[-1.43,\,0.71]$ & $-0.84$ & $1.000$ \\
Acc.\ (\%) & IES (shipped) & \lose{$-0.21$} & $[-0.94,\,0.52]$ & $-0.71$ & $1.000$ \\
Acc.\ (\%) & IES (literal) & \lose{$-0.15$} & $[-0.77,\,0.47]$ & $-0.61$ & $1.000$ \\
Backprop Saved (\%) & Baseline & \win{$25.49$} & $[24.84,\,26.13]$ & $98.5$ & $0.250$ \\
Backprop Saved (\%) & IES (shipped) & \win{$12.38$} & $[12.25,\,12.52]$ & $228.0$ & $0.250$ \\
Backprop Saved (\%) & IES (literal) & \win{$8.83$} & $[8.62,\,9.05]$ & $101.1$ & $0.250$ \\
FLOP Saved (\%) & Baseline & \win{$17.06$} & $[16.63,\,17.49]$ & $98.5$ & $0.250$ \\
FLOP Saved (\%) & IES (shipped) & \win{$8.29$} & $[8.20,\,8.38]$ & $228.0$ & $0.250$ \\
FLOP Saved (\%) & IES (literal) & \win{$5.91$} & $[5.77,\,6.06]$ & $101.1$ & $0.250$ \\
\midrule
\multicolumn{6}{l}{\textbf{ViT-Tiny: CIFAR-100, AdamW (warmup-cosine)}} \\
\midrule
Acc.\ (\%) & Baseline & \lose{$-0.69$} & $[-2.36,\,0.98]$ & $-1.03$ & $0.250$ \\
Acc.\ (\%) & IES (shipped) & \lose{$-0.63$} & $[-2.33,\,1.07]$ & $-0.92$ & $1.000$ \\
Acc.\ (\%) & IES (literal) & \lose{$-0.21$} & $[-1.97,\,1.55]$ & $-0.30$ & $1.000$ \\
Backprop Saved (\%) & Baseline & \win{$26.25$} & $[25.90,\,26.59]$ & $191.0$ & $0.250$ \\
Backprop Saved (\%) & IES (shipped) & \win{$17.89$} & $[17.60,\,18.18]$ & $154.5$ & $0.250$ \\
Backprop Saved (\%) & IES (literal) & \win{$13.77$} & $[13.36,\,14.17]$ & $85.0$ & $0.250$ \\
FLOP Saved (\%) & Baseline & \win{$17.57$} & $[17.34,\,17.79]$ & $191.0$ & $0.250$ \\
FLOP Saved (\%) & IES (shipped) & \win{$11.98$} & $[11.78,\,12.17]$ & $154.5$ & $0.250$ \\
FLOP Saved (\%) & IES (literal) & \win{$9.21$} & $[8.94,\,9.48]$ & $85.0$ & $0.250$ \\
\end{longtable}
\endgroup

\textbf{What this does and does not support.} On the primary cell, TG-IES's backprop- and FLOP-saved advantage over every other method is in the same direction in all 3 seeds for every comparison (hence $p_{\mathrm{sign}}=0.25$, the floor), with CIs entirely on one side of zero; large $d_z$ values ($12$--$470$) reflect small seed-to-seed variability relative to a large, consistent savings effect. The same pattern holds on VGG-16, CIFAR-100 ($d_z=1522.5$, the largest in this table), DenseNet-121, and both ViT-Tiny cells. On ResNet-18(fixed-LR) and Adam, the sign flips as \cref{sec:results,app:adam} report qualitatively, with CIs again excluding zero entirely. Accuracy differences are small ($|\text{mean }\Delta|\le0.87$ points) with CIs mostly straddling zero; the exceptions (Adam vs.\ Baseline, CIFAR-100 vs.\ IES shipped) are real, if small, effects at this sample size. A CI containing zero is not evidence the true difference \emph{is} zero, and a large $d_z$ with $p_{\mathrm{sign}}=0.25$ is not conventionally ``significant'' -- it is the most a 3-seed exact sign test can report regardless of effect size; we do not describe any result here as statistically significant at a threshold this design cannot reach.

\end{document}

%% file: fig_overview_tikz.tex

\definecolor{tgdark}{HTML}{2F5D8C}
\definecolor{tgfillA}{HTML}{EAF2FB}
\definecolor{tgfillB}{HTML}{D6E6F5}
\begin{tikzpicture}[
  font=\scriptsize,
  box/.style={draw=tgdark, line width=0.7pt, rounded corners=2pt, fill=tgfillA,
              align=center, text width=2.95cm, minimum height=1.15cm, inner sep=2.5pt},
  theory/.style={box, fill=tgfillB},
  analysis/.style={box, dashed, fill=white, text width=3.4cm},
  arr/.style={-{Stealth[length=2mm]}, line width=0.8pt, draw=tgdark},
  darr/.style={arr, dashed},
]
\node[box] (A) at (0,0)    {\textbf{Noisy loss} $\tilde L_i^{(t)}$\\of \emph{every} instance\\(forward pass)};
\node[box] (B) at (3.55,0) {\textbf{Second-order signal}\\$\hat S_i=|\Delta^2\tilde L_i^{(t)}|$};
\node[box] (C) at (7.1,0)  {\textbf{Mastery check}\\$\hat S_i<\delta$ for $K^*$ epochs\\$\Rightarrow$ passive set $P_t$};
\node[box] (D) at (10.65,0){\textbf{Selective update}\\backward on $A_t$ only:\\$\hat g^{(t)}=\frac1N\sum_{i\in A_t}g_i$};
\draw[arr] (A) -- (B);
\draw[arr] (B) -- (C);
\draw[arr] (C) -- (D);
\draw[arr, draw=gray] (D.north) -- ++(0,0.4) -| (A.north)
  node[pos=0.25, above, font=\scriptsize, text=black]
  {next epoch: all losses are re-observed; a passive instance returns once $\hat S_i\ge\delta$};
\node[theory]   (E) at (3.55,-2.05) {\textbf{Threshold from noise}\\$\delta=k\sqrt6\,\hat\sigma$};
\node[theory]   (F) at (6.75,-2.05) {\textbf{Patience} $K^*$\\initialized from $K_{\mathrm{safe}}$};
\node[analysis] (G) at (10.45,-2.05){\textbf{Analysis of the update}\\$\hat g^{(t)}=g-\omega_t$ exactly; $\|\omega_t\|$ bounds drift \& descent};
\draw[arr] (E.north) .. controls +(0,0.5) and +(0,-0.5) .. ([xshift=-0.9cm]C.south);
\draw[arr] (F.north) -- (F.north |- C.south);
\draw[darr] ([xshift=1.15cm]C.south) -- ([xshift=-1.45cm]G.north)
  node[midway, right=1pt, font=\scriptsize] {};
\draw[darr] ([xshift=1.0cm]G.north) -- ([xshift=1.0cm]G.north |- D.south);
\end{tikzpicture}